\definecolor{Gray}{gray}{0.9}
\definecolor{citecolor}{HTML}{2980b9}
\definecolor{linkcolor}{HTML}{c0392b}
\newtheorem{definition}{Definition}
\newtheorem{assumption}{Assumption}
\newtheorem{theorem}{Theorem}
\DeclareMathOperator*{\argmax}{arg\,max}
\DeclareMathOperator*{\argmin}{arg\,min}
\newcommand{\lowa}{\underline{S}_\textrm{R}}
\newcommand{\lowb}{\underline{S}_\textrm{U}}
\newcommand{\high}{\overline{S}}
\newcommand{\simplex}{\delta}
\title{Multimodal Learning Without Labeled Multimodal Data: Guarantees and Applications}
\author{%
  Paul Pu Liang$^1$, Chun Kai Ling$^2$, Yun Cheng$^3$, Alex Obolenskiy$^1$, Yudong Liu$^1$,\\
  \textbf{Rohan Pandey$^1$, Alex Wilf$^1$, Louis-Philippe Morency$^1$, Ruslan Salakhutdinov$^1$}\\
  $^1$Carnegie Mellon University, $^2$Columbia University, $^3$Princeton University\\
  \texttt{pliang@cs.cmu.edu, cl4488@columbia.edu} \\
}
\begin{document}

\maketitle

\vspace{-6mm}
\begin{abstract}
\vspace{-2mm}
    In many machine learning systems that jointly learn from multiple modalities, a core research question is to understand the nature of \textit{multimodal interactions}: how modalities combine to provide new task-relevant information that was not present in either alone. 
    We study this challenge of interaction quantification in a semi-supervised setting with only labeled unimodal data and naturally co-occurring multimodal data (e.g., unlabeled images and captions, video and corresponding audio) but when labeling them is time-consuming.
    Using a precise information-theoretic definition of interactions, our key contribution is the derivation of lower and upper bounds to quantify the amount of multimodal interactions in this semi-supervised setting.
    We propose two lower bounds: one based on the \textit{shared information} between modalities and the other based on \textit{disagreement} between separately trained unimodal classifiers, and derive an upper bound through connections to approximate algorithms for \textit{min-entropy couplings}. We validate these estimated bounds and show how they accurately track true interactions. Finally, we show how these theoretical results can be used to estimate multimodal model performance, guide data collection, and select appropriate multimodal models for various tasks.
\end{abstract}

\vspace{-3mm}
\section{Introduction}
\vspace{-1mm}

A core research question in multimodal learning is to understand the nature of \textit{interactions} between modalities for a task: how much information is shared between both modalities, lies in each modality alone, and the emergence of new task-relevant information during learning from both modalities that was not present in either modality alone~\citep{liang2022foundations}. In settings where labeled multimodal data is abundant, the study of multimodal interactions in the supervised setting has inspired fundamental advances in theoretical analysis~\citep{hessel2020emap,liang2023quantifying,sridharan2008information}, representation learning~\citep{Jayakumar2020Multiplicative,radford2021learning}, and the selection of suitable multimodal models for various real-world tasks~\citep{liang2023quantifying}.

In this paper, we study the problem of interaction quantification in a setting where there is only \textit{unlabeled multimodal data} $\mathcal{D}_M = \{(x_1,x_2)\}$ and some \textit{labeled unimodal data} $\mathcal{D}_i = \{(x_i,y)\}$ collected separately for each modality. This multimodal semi-supervised paradigm is reminiscent of many real-world settings with separate unimodal datasets like visual recognition~\citep{deng2009imagenet} and text classification~\citep{wang2018glue}, as well as naturally co-occurring multimodal data (e.g., news images and captions or video and audio), but when labeling them is time-consuming~\citep{hsu2018unsupervised,hu2019deep} or impossible due to partially observed modalities~\citep{liang2022highmmt} or privacy concerns~\citep{che2023multimodal}.
Despite these data constraints, we still want to understand how the modalities can share, exchange, and create information in order to inform our decisions on data collection and modeling~\citep{Jayakumar2020Multiplicative,liang2023quantifying,zadeh2017tensor}.

Using a precise information-theoretic definition of interactions~\citep{bertschinger2014quantifying}, our key contributions are the derivations of lower and upper bounds to quantify multimodal interactions in this semi-supervised setting with only $\mathcal{D}_i$ and $\mathcal{D}_M$. We propose two lower bounds: the first relates interactions with the amount of \textit{shared information} between modalities, and the second is based on the \textit{disagreement} of classifiers trained separately on each modality.
Finally, we propose an upper bound through connections to approximate algorithms for \textit{min-entropy couplings}~\citep{cicalese2002supermodularity}.
To validate our bounds, we experiment on both synthetic and large real-world datasets with varying amounts of interactions.
In addition, these theoretical results naturally yield new guarantees regarding the performance of multimodal models. By analyzing the relationship between interaction estimates and downstream task performance assuming optimal multimodal classifiers are trained on labeled multimodal data, we can \textit{closely predict multimodal model performance, before even training the model itself}. These performance estimates also help develop new guidelines for deciding when to \textit{collect additional modality data} and \textit{select the appropriate multimodal fusion models}.
We believe these results shed light on the intriguing connections between multimodal interactions, modality disagreement, and model performance, and release our code and models at \url{https://github.com/pliang279/PID}.

\vspace{-2mm}
\section{Related Work and Technical Background}

\subsection{Semi-supervised multimodal learning}

Let $\mathcal{X}_i$ and $\mathcal{Y}$ be finite sample spaces for features and labels.
Define $\Delta$ to be the set of joint distributions over $(\mathcal{X}_1, \mathcal{X}_2, \mathcal{Y})$.
We are concerned with features $X_1, X_2$ (with support $\mathcal{X}_i$) and labels $Y$ (with support $\mathcal{Y}$) drawn from some distribution $p \in \Delta$. We denote the probability mass function by $p(x_1,x_2,y)$, where omitted parameters imply marginalization. 
Many real-world applications such as multimedia and healthcare naturally exhibit multimodal data (e.g., images and captions, video and audio, multimodal medical readings) which are difficult to label~\citep{liang2022highmmt,radford2021learning,singh2022flava,yu2004efficient,zellers2022merlot}. As such, rather than the full distribution from $p$, we only have partial datasets:
\begin{itemize}[noitemsep,topsep=0pt,nosep,leftmargin=*,parsep=0pt,partopsep=0pt]
    \item \textit{Labeled unimodal} data $\mathcal{D}_1 = \{(x_1,y): \mathcal{X}_1 \times \mathcal{Y}\}$, $\mathcal{D}_2 = \{(x_2,y): \mathcal{X}_2 \times \mathcal{Y}\}$.
    \item \textit{Unlabeled multimodal} data $\mathcal{D}_M = \{(x_1,x_2): \mathcal{X}_1 \times \mathcal{X}_2\}$.
\end{itemize}
$\mathcal{D}_1$, $\mathcal{D}_2$ and $\mathcal{D}_M$ follow the \textit{pairwise marginals} $p(x_1, y)$, $p(x_2, y)$ and $p(x_1, x_2)$. 
We define $\Delta_{p_{1,2}} = \{ q \in \Delta: q(x_i,y)=p(x_i,y) \ \forall y\in\mathcal{Y}, x_i \in \mathcal{X}_i, i \in [2] \}$ as the set of joint distributions which agree with the labeled unimodal data $\mathcal{D}_1$ and $\mathcal{D}_2$, and $\Delta_{p_{1,2,12}} = \{ r \in \Delta: r(x_1,x_2)=p(x_1,x_2), r(x_i,y)=p(x_i,y) \}$ as the set of joint distributions which agree with all $\mathcal{D}_1, \mathcal{D}_2$ and $\mathcal{D}_M$.

\subsection{Multimodal interactions and information theory}

The study of \textbf{multimodal interactions} aims to quantify the information shared between both modalities, in each modality alone, and how modalities can combine to form new information not present in either modality, eventually using these insights to design machine learning models to capture interactions from large-scale multimodal datasets~\citep{liang2022foundations}. Existing literature has primarily studied the interactions captured by trained models, such as using Shapley values~\citep{ittner2021feature} and Integrated gradients~\citep{sundararajan2017axiomatic,tsang2018detecting,liang2023multiviz} to measure the importance a model assigns to each modality, or approximating trained models with additive or non-additive functions to determine what functions are best suited to capture interactions~\citep{friedman2008predictive,sorokina2008detecting,hessel2020emap}.
However, these measure interactions captured by a trained model - \textit{our work is fundamentally different in that interactions are properties of data}. Quantifying the interactions in data, independent of trained models, allows us to characterize datasets, predict model performance, and perform model selection, prior to choosing and training a model altogether. Prior work in understanding data interactions to design multimodal models is often driven by intuition, such as using contrastive learning~\citep{poklukar2022geometric,radford2021learning,tosh2021contrastive}, correlation analysis~\citep{andrew2013deep}, and agreement~\citep{ding2022cooperative} for shared information (e.g., images and descriptive captions), or using tensors and multiplicative interactions~\citep{zadeh2017tensor,Jayakumar2020Multiplicative} for higher-order interactions (e.g., in expressions of sarcasm from speech and gestures).

To fill the gap in data quantification, \textbf{information theory} has emerged as a theoretical foundation since it naturally formalizes information and its sharing as statistical properties of data distributions. 
Information theory studies the information that one random variable ($X_1$) provides about another ($X_2$), as quantified by Shannon's mutual information (MI) and conditional MI:
{\small
\begin{align*}
    I(X_1; X_2) = \int p(x_1,x_2) \log \frac{p(x_1,x_2)}{p(x_1) p(x_2)} d\bm{x}, \quad I(X_1;X_2|Y) = \int p(x_1,x_2,y) \log \frac{p(x_1,x_2|y)}{p(x_1|y) p(x_2|y)} d\bm{x} dy.
\end{align*}
}$I(X_1; X_2)$ measures the amount of information (in bits) obtained about $X_1$ by observing $X_2$, and by extension, $I(X_1;X_2|Y)$ is the expected value of MI given the value of a third (e.g., task $Y$).

To generalize information theory for multimodal interactions, Partial information decomposition (PID)~\citep{williams2010nonnegative} decomposes the total information that two modalities $X_1,X_2$ provide about a task $Y$ into 4 quantities: $I_p(\{X_1,X_2\}; Y) = R + U_1 + U_2 + S$, where $I_p(\{X_1,X_2\}; Y)$ is the MI between the joint random variable $(X_1,X_2)$ and $Y$. These 4 quantities are: redundancy $R$ for the task-relevant information shared between $X_1$ and $X_2$, uniqueness $U_1$ and $U_2$ for the information present in only $X_1$ or $X_2$ respectively, and synergy $S$ for the emergence of new information only when both $X_1$ and $X_2$ are present~\citep{bertschinger2014quantifying,griffith2014quantifying}:
\begin{definition}
\label{def:pid}
    (Multimodal interactions) Given $X_1$, $X_2$, and a target $Y$, we define their redundant ($R$), unique ($U_1$ and $U_2$), and synergistic ($S$) interactions as:
    \begin{align}
        R &= \max_{q \in \Delta_{p_{1,2}}} I_q(X_1; X_2; Y), \quad U_1 = \min_{q \in \Delta_{p_{1,2}}} I_q(X_1; Y | X_2), \quad U_2 = \min_{q \in \Delta_{p_{1,2}}} I_q(X_2; Y| X_1), \label{eqn:U2-def}\\
        S &= I_p(\{X_1,X_2\}; Y) - \min_{q \in \Delta_{p_{1,2}}} I_q(\{X_1,X_2\}; Y), \label{eqn:S-def}
    \end{align}
    where the notation $I_p(\cdot)$ and $I_q(\cdot)$ disambiguates mutual information (MI) under $p$ and $q$ respectively.
\end{definition}
$I(X_1; X_2; Y) = I(X_1; X_2) - I(X_1;X_2|Y)$ is a multivariate extension of information theory~\citep{bell2003co,mcgill1954multivariate}. Most importantly, $R$, $U_1$, and $U_2$ can be computed exactly using convex programming over distributions $q \in \Delta_{p_{1,2}}$ with access only to the marginals $p(x_1,y)$ and $p(x_2,y)$ by solving a convex optimization problem with linear marginal-matching constraints $q^* = \argmax_{q \in \Delta_{p_{1,2}}} H_q(Y | X_1, X_2)$~\citep{bertschinger2014quantifying,liang2023quantifying}, see Appendix~\ref{app:method3} for more details. This gives us an elegant interpretation that we need only labeled unimodal data in each feature from $\mathcal{D}_1$ and $\mathcal{D}_2$ to estimate redundant and unique interactions. Unfortunately, $S$ is impossible to compute via equation (\ref{eqn:S-def}) when we do not have access to the full joint distribution $p$, since the first term $I_p(\{X_1, X_2\};Y)$ is unknown.

It is worth noting that other valid information-theoretic definitions of multimodal interactions also exist, but are known to suffer from issues regarding over- and under-estimation, and may even be negative; these are critical problems with the application of information theory for shared $I(X_1; X_2; Y)$ and unique information $I(X_1; Y|X_2)$, $I(X_2; Y|X_1)$ often quoted in the co-training~\citep{blum1998combining,balcan2004co} and multi-view learning~\citep{tosh2021contrastive,tsai2020self,tian2020makes,sridharan2008information} literature.
We refer the reader to~\citet{griffith2014quantifying} for a full discussion. We choose the one in Definition~\ref{def:pid} above since it fulfills several desirable properties, but our results can be extended to other definitions as well.

\vspace{-1mm}
\section{Estimating Semi-supervised Multimodal Interactions}

Our goal is to estimate multimodal interactions $R$, $U_1$, $U_2$, and $S$ assuming access to only semi-supervised multimodal data $\mathcal{D}_1$, $\mathcal{D}_2$, and $\mathcal{D}_M$. Our first insight is that while $S$ cannot be computed exactly, $R$, $U_1$, and $U_2$ can be computed from equation~\ref{eqn:U2-def} with access to only semi-supervised data. Therefore, studying the relationships between $S$ and other multimodal interactions is key to its estimation. Using these relationships, we will then derive lower and upper bounds for synergy in the form $\underline{S} \leq S \leq \high$. Crucially, $\underline{S}$ and $\high$ depend \textit{only} on $\mathcal{D}_1$, $\mathcal{D}_2$, and $\mathcal{D}_M$.

\subsection{Understanding relationships between interactions}

We start by identifying two important relationships, between $S$ and $R$, and between $S$ and $U$.

\paragraph{Synergy and redundancy} Our first relationship stems from the case when two modalities contain shared information about the task. In studying these situations, a driving force for estimating $S$ is the amount of shared information $I(X_1;X_2)$ between modalities, with the intuition that more shared information naturally leads to redundancy which gives less opportunity for new synergistic interactions. Mathematically, we formalize this by relating $S$ to $R$,
\begin{align}
\label{eq:SandR}
    S = R - I_p(X_1;X_2;Y) = R - I_p(X_1;X_2) + I_p(X_1;X_2|Y).
\end{align}
implying that synergy exists when there is high redundancy and low (or even negative) three-way MI $I_p(X_1;X_2;Y)$.
By comparing the difference in $X_1,X_2$ dependence with and without the task (i.e., $I_p(X_1;X_2)$ vs $I_p(X_1;X_2|Y)$), $2$ cases naturally emerge (see left side of Figure~\ref{fig:splits}):
\begin{enumerate}[noitemsep,topsep=0pt,nosep,leftmargin=*,parsep=0pt,partopsep=0pt]
    \item $\mathbf{S>R}$: When both modalities do not share a lot of information as measured by low $I(X_1;X_2)$, but conditioning on $Y$ \textit{increases} their dependence: $I(X_1;X_2|Y) > I(X_1;X_2)$, then there is synergy between modalities when combining them for task $Y$. This setting is reminiscent of common cause structures. Examples of these distributions in the real world are multimodal question answering, where the image and question are less dependent (some questions like `what is the color of the car' or `how many people are there' can be asked for many images), but the answer (e.g., `blue car') connects the two modalities, resulting in dependence given the label. As expected, $S = 4.92,R=0.79$ for the VQA 2.0 dataset~\citep{goyal2017making}. 
    \item $\mathbf{R>S}$: Both modalities share a lot of information but conditioning on $Y$ \textit{reduces} their dependence: $I(X_1;X_2)>I(X_1;X_2|Y)$, which results in more redundant than synergistic information. This setting is reminiscent of common effect structures.
    A real-world example is in detecting sentiment from multimodal videos, where text and video are highly dependent since they are emitted by the same speaker, but the sentiment label explains away some of the dependencies between both modalities. Indeed, for multimodal sentiment analysis from text, video, and audio of monologue videos on \textsc{MOSEI}~\citep{zadeh2018multimodal}, $R=0.26$ and $S=0.04$.
\end{enumerate}

\paragraph{Synergy and uniqueness} The second relationship arises when two modalities contain disagreeing information about the task, and synergy arises due to this disagreement in information. To illustrate this, suppose $y_1=\argmax_y p(y|x_1)$ is the most likely prediction from the first modality, $y_2=\argmax_y p(y|x_2)$ for the second modality, and $y=\argmax_y p(y|x_1,x_2)$ is the true multimodal prediction. There are again $2$ cases (see right side of Figure~\ref{fig:splits}):
\begin{enumerate}[noitemsep,topsep=0pt,nosep,leftmargin=*,parsep=0pt,partopsep=0pt]
    \item $\mathbf{U>S}$: Multimodal prediction $y=\argmax_y p(y|x_1,x_2)$ is the same as one of the unimodal predictions (e.g., $y=y_2$), in which case unique information in modality $2$ leads to the outcome and there is no synergy.
    A real-world dataset is \textsc{MIMIC} involving mortality and disease prediction from tabular patient data and time-series medical sensors~\citep{johnson2016mimic} which primarily shows unique information in the tabular modality. The disagreement on \textsc{MIMIC} is high at $0.13$, but since disagreement is due to a lot of unique information, there is less synergy $S=0.01$.
    
    \item $\mathbf{S>U}$: Multimodal prediction $y$ is different from both $y_1$ and $y_2$, then both modalities interact synergistically to give rise to a final outcome different from both disagreeing unimodal predictions.
    This type of joint distribution is indicative of real-world expressions of sarcasm from language and speech - the presence of sarcasm is typically detected due to a contradiction between what is expressed in language and speech, as we observe from the experiments on \textsc{MUStARD}~\citep{castro2019towards} where $S=0.44$ and disagreement $=0.12$ are both large.
\end{enumerate}

\subsection{Lower and upper bounds on synergy}

Given these relationships between synergy and other interactions, we now derive bounds on $S$. We present two lower bounds $\lowa$ and $\lowb$, which are based on redundancy and uniqueness, as well as an upper bound $\high$. We also describe the computational complexity for computing each bound.

\textit{Remark on high dimensional, continuous modalities.} Our theoretical results are concerned with \textit{finite} spaces for features and labels. However, this may be restrictive when working with real-world datasets (e.g., images, video, text) which are often continuous and/or high-dimensional. In such situations, we preprocess by performing discretization of each modality via clustering to estimate $p(x_1,y)$, $p(x_2,y)$, $p(x_1,x_2)$, each with a small, finite support. These are subsequently used for the computation of $\lowa$, $\lowb$ and $\high$.
Discretization is a common way to approximate information theoretic quantities like mutual information~\citep{darbellay1999estimation,liang2023quantifying} and for learning representations over high-dimensional modalities~\citep{oord2018representation}. 

\begin{figure*}[t]
\vspace{-4mm}
\centering
\includegraphics[width=\linewidth]{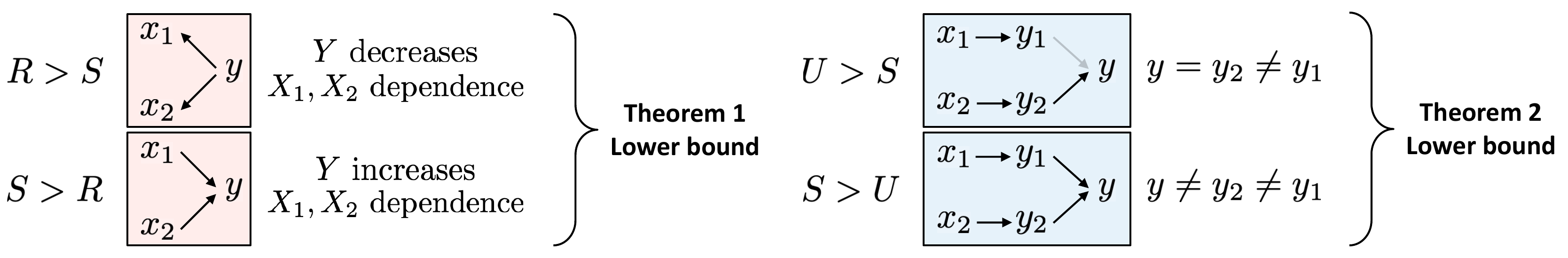}
\caption{We study the relationships between (left) \textit{synergy and redundancy} as a result of the task $Y$ either increasing or decreasing the shared information between $X_1$ and $X_2$ (i.e., common cause structures as opposed to redundancy in common effect), as well as (right) \textit{synergy and uniqueness} due to the disagreement between unimodal predictors resulting in a new prediction $y \neq y_1 \neq y_2$ (rather than uniqueness where $y = y_2 \neq y_1$).}
\label{fig:splits}
\vspace{-4mm}
\end{figure*}

\paragraph{Lower bound using redundancy} Our first lower bound uses the relationship between synergy, redundancy, and dependence in equation~\ref{eq:SandR}. In semi-supervised settings, we can compute $R$ exactly from $p(x_1,y),p(x_2,y)$, as well as the shared information $I(X_1;X_2)$ from $p(x_1,x_2)$. However, $I_p(X_1;X_2|Y)$ cannot be computed without access to the full distribution $p$. In Theorem~\ref{thm:lower_connections}, we obtain a lower bound on $I_p(X_1;X_2|Y)$, resulting in a lower bound $\lowa$ for synergy.
\begin{theorem}
\label{thm:lower_connections}
    (Lower-bound on synergy via redundancy) We relate $S$ to modality dependence
    \begin{align}
    \label{eqn:lower_connections}
        \lowa = R - I_p(X_1;X_2) + \min_{r \in \Delta_{p_{1,2,12}}} I_r(X_1;X_2|Y) \le S
    \end{align}
\end{theorem}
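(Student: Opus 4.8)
The plan is to reduce the inequality to the single observation that the true distribution $p$ is itself feasible for the minimization defining $\lowa$. I would start from the identity displayed just above the theorem,
\[
S = R - I_p(X_1;X_2;Y) = R - I_p(X_1;X_2) + I_p(X_1;X_2|Y),
\]
and note that among the three terms on the right, $R = \max_{q \in \Delta_{p_{1,2}}} I_q(X_1;X_2;Y)$ depends only on the unimodal marginals $p(x_1,y)$ and $p(x_2,y)$, and $I_p(X_1;X_2)$ depends only on $p(x_1,x_2)$ from $\mathcal{D}_M$; the sole term not computable from $\mathcal{D}_1,\mathcal{D}_2,\mathcal{D}_M$ is $I_p(X_1;X_2|Y)$. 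Hence it suffices to produce a lower bound on $I_p(X_1;X_2|Y)$ that uses only the three pairwise marginals.

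The key step is that $p \in \Delta_{p_{1,2,12}}$: by definition $\Delta_{p_{1,2,12}} = \{ r \in \Delta : r(x_1,x_2) = p(x_1,x_2),\ r(x_1,y) = p(x_1,y),\ r(x_2,y) = p(x_2,y) \}$, and $p$ satisfies all three marginal constraints trivially, so the feasible set is nonempty. Since $\Delta_{p_{1,2,12}}$ is a closed, bounded subset of the finite-dimensional simplex $\Delta$ it is compact, and $r \mapsto I_r(X_1;X_2|Y)$ is continuous on it (with the convention $0\log 0 = 0$, using $r(x_1,x_2,y) \le r(x_1,y)$ to control boundary terms), so the minimum is attained and $\min_{r \in \Delta_{p_{1,2,12}}} I_r(X_1;X_2|Y) \le I_p(X_1;X_2|Y)$. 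Substituting this into the identity gives
\[
\lowa = R - I_p(X_1;X_2) + \min_{r \in \Delta_{p_{1,2,12}}} I_r(X_1;X_2|Y) \ \le\ R - I_p(X_1;X_2) + I_p(X_1;X_2|Y) = S,
\]
and by construction every ingredient of $\lowa$ is estimable from $\mathcal{D}_1,\mathcal{D}_2,\mathcal{D}_M$, which is what the theorem asserts.

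I do not expect a genuine obstacle in the inequality chain itself — it is the elementary argument that $p$ is feasible, hence the constrained minimum cannot exceed $I_p(X_1;X_2|Y)$. The two points that actually warrant care, which I would either spell out inline or defer to the appendix, are: (i) legitimately invoking the background PID identity $S = R - I_p(X_1;X_2;Y)$, i.e.\ checking that the $R$ appearing there coincides with the $R$ of Definition~1; this follows from the Bertschinger et al.\ characterization through $I_p(X_i;Y) = R + U_i$ and $I_p(\{X_1,X_2\};Y) = R + U_1 + U_2 + S$, which together force $I_p(X_1;X_2;Y) = R - S$; and (ii) observing that although $\Delta_{p_{1,2,12}}$ is cut out by linear marginal-matching constraints, the objective $I_r(X_1;X_2|Y)$ is in general neither convex nor concave in $r$, so the inner minimization is a non-convex program — this does not affect the validity of $\lowa \le S$, but it is the reason computing $\lowa$ in practice requires an optimization scheme rather than a closed form.
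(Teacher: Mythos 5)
Your core argument --- the consistency identity $S = R - I_p(X_1;X_2) + I_p(X_1;X_2|Y)$ followed by the observation that $p$ is itself feasible in $\Delta_{p_{1,2,12}}$, so the constrained minimum cannot exceed $I_p(X_1;X_2|Y)$ --- is exactly the paper's proof of the inequality, and your side remark (i) correctly recovers $I_p(X_1;X_2;Y)=R-S$ from the PID consistency equations.

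However, your side remark (ii) is incorrect: the inner minimization defining $\lowa$ \emph{is} a convex program, not a non-convex one. On $\Delta_{p_{1,2,12}}$ all three pairwise marginals of $r$ are pinned to those of $p$, so $H_r(X_1,Y)$, $H_r(Y)$, and $H_r(X_2,Y)$ are all constants. Writing
\begin{align*}
    I_r(X_1;X_2|Y) = H_r(X_1,Y) - H_r(Y) + H_r(X_2,Y) - H_r(X_1,X_2,Y),
\end{align*}
the objective reduces on the feasible set to a constant minus the joint entropy $H_r(X_1,X_2,Y)$, which is concave in $r$; minimizing $I_r(X_1;X_2|Y)$ is therefore a concave maximization (equivalently, max-entropy) problem over the polytope $\Delta_{p_{1,2,12}}$. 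This is precisely the reformulation the paper uses to compute $\lowa$ exactly with a conic solver, and it is why the theorem is accompanied by the claim that the bound is efficiently computable rather than only existentially valid.
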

We include the full proof in Appendix~\ref{app:lower1}. This bound compares $S$ to $R$ via the difference of their dependence $I_p(X_1;X_2)$ and their dependence given the task $I_p(X_1;X_2|Y)$. Since the full distribution $p$ is not available to compute $I_p(X_1;X_2|Y)$, we prove a lower bound using conditional MI computed with respect to a set of auxiliary distributions $r \in \Delta_{p_{1,2,12}}$ that are close to $p$, as measured by matching both unimodal marginals $r(x_i,y) = p(x_i,y)$ and modality marginals $r(x_1,x_2) = p(x_1,x_2)$. If conditioning on the task increases the dependence and $I_r(X_1;X_2|Y)$ is large relative to $I_p(X_1;X_2)$ then we obtain a larger value of $\lowa$, otherwise if conditioning on the task decreases the dependence and $I_r(X_1;X_2|Y)$ is small relative to $I_p(X_1;X_2)$ then we obtain a smaller value of $\lowa$.

\textit{Computational complexity. } $R$ and $\min_{r \in \Delta_{p_{1,2,12}}} I_r(X_1;X_2|Y)$ are convex optimization problems solvable in polynomial time with off-the-shelf solvers. $I_p(X_1;X_2)$ can be computed directly.

\paragraph{Lower bound using uniqueness} Our second bound formalizes the relationship between disagreement, uniqueness, and synergy. The key insight is that while labeled multimodal data is unavailable, the output of unimodal classifiers may be compared against each other.
Consider unimodal classifiers $f_i: \mathcal{X}_i \rightarrow \mathcal{Y}$ and multimodal classifiers $f_M: \mathcal{X}_1 \times \mathcal{X}_2 \rightarrow \mathcal{Y}$.
Define \textit{modality disagreement} as:
\begin{definition}
    (Modality disagreement) Given $X_1$, $X_2$, and a target $Y$, as well as unimodal classifiers $f_1$ and $f_2$, we define modality disagreement as $\alpha(f_1,f_2) = \mathbb{E}_{p(x_1,x_2)} [d(f_1,f_2)]$ where $d: \mathcal{Y} \times \mathcal{Y} \rightarrow \mathbb{R}^{\ge0}$ is a distance function in label space scoring the disagreement of $f_1$ and $f_2$'s predictions.
\end{definition}
Connecting \textit{modality disagreement} and synergy via Theorem~\ref{thm:lower_disagreement} yields a lower bound $\lowb$:
\begin{theorem}
\label{thm:lower_disagreement}
    (Lower-bound on synergy via uniqueness, informal) We can relate synergy $S$ and uniqueness $U$ to modality disagreement $\alpha(f_1,f_2)$ of optimal unimodal classifiers $f_1,f_2$ as follows:
    \begin{align}
    \label{eqn:lower_disagreement}
        \lowb = \alpha(f_1,f_2) \cdot c - \max(U_1,U_2) \le S
    \end{align}
    for some constant $c$ depending on the label dimension $|\mathcal{Y}|$ and choice of label distance function $d$.
\end{theorem}
Theorem~\ref{thm:lower_disagreement} implies that if there is substantial disagreement $\alpha(f_1,f_2)$ between unimodal classifiers, it must be due to the presence of unique or synergistic information. If uniqueness is small, then disagreement must be accounted for by synergy, thereby yielding a lower bound $\lowb$. Note that the optimality of unimodal classifiers is important: poorly trained unimodal classifiers could show high disagreement but would be uninformative about true interactions. We include the formal version of the theorem based on Bayes' optimality and a full proof in Appendix~\ref{app:lower2}.

\textit{Computational complexity.} Lower bound $\lowb$ can also be computed efficiently by estimating ${p}(y|x_1)$ and ${p}(y|x_2)$ over modality clusters or training unimodal classifiers ${f}_\theta(y|x_1)$ and ${f}_\theta(y|x_2)$. $U_1$ and $U_2$ can be computed using a convex solver in polynomial time.

Hence, the relationships between $S$, $R$, and $U$ yield two lower bounds $\lowa$ and $\lowb$. Note that these bounds \textit{always} hold, so we could take $\underline{S}=\max\{\lowa, \lowb \}$.

\paragraph{Upper bound on synergy} 
By definition, $S = I_p(\{X_1,X_2\}; Y) - R - U_1 - U_2$. However, $I_p(\{X_1,X_2\};Y)$ cannot be computed exactly without the full distribution $p$. Using the same idea as lower bound 1, we upper bound synergy by \textit{considering the worst-case maximum} $I_r(\{X_1,X_2\};Y)$ computed over a set of auxiliary distributions $r \in \Delta_{p_{1,2,12}}$ that match both unimodal marginals $r(x_i,y) = p(x_i,y)$ and modality marginals $r(x_1,x_2) = p(x_1,x_2)$:
\begin{align}
    \max_{r \in \Delta_{p_{1,2,12}}} I_r(\{X_1,X_2\}; Y) &= 
    \max_{r \in \Delta_{p_{1,2,12}}} \left\{ H_r(X_1, X_2) + H_r(Y) - H_r(X_1, X_2, Y) \right\} \\ &=
    H_p(X_1, X_2) + H_p(Y) - \min_{r \in \Delta_{p_{1,2,12}}} H_r(X_1, X_2, Y),
\end{align}
where the second line follows from the definition of $\Delta_{p_{1,2,12}}$. While the first two terms are easy to compute, the third may be difficult, as shown in the following theorem:
\begin{theorem}
\label{thm:min-entropy-np-hard}
Solving $r^* = \argmin_{r \in \Delta_{p_{1,2,12}}} H_r(X_1, X_2, Y)$ is NP-hard, even for a fixed $|\mathcal{Y}| \geq 4$. 
\end{theorem}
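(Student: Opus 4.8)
The plan is to establish NP-hardness by reducing a known NP-complete problem to the decision version of this minimization. The first step is to simplify the objective. Every $r \in \Delta_{p_{1,2,12}}$ shares the fixed marginal $r(x_1,x_2) = p(x_1,x_2)$, so by the chain rule $H_r(X_1,X_2,Y) = H_p(X_1,X_2) + H_r(Y \mid X_1, X_2)$, and hence $\min_{r \in \Delta_{p_{1,2,12}}} H_r(X_1,X_2,Y) = H_p(X_1,X_2) + \min_{r} H_r(Y \mid X_1, X_2) \ge H_p(X_1,X_2)$, with equality exactly when some feasible $r$ makes $Y$ a deterministic function $g$ of $(X_1,X_2)$ on $\mathrm{supp}(p(x_1,x_2))$. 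Matching the remaining two marginals then says precisely that the weighted label counts $\sum_{x_2} p(x_1,x_2)\,\mathbf{1}[g(x_1,x_2)=y]$ equal $p(x_1,y)$ for all $x_1,y$, and symmetrically for $x_2$. So a polynomial-time algorithm computing $r^*$ (equivalently, the optimal value) decides whether such a \emph{realizing labeling} $g$ exists, and it suffices to show that \emph{this} decision problem is NP-hard.

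The second step is the gadget: construct a family of instances whose realizing labelings are exactly the solutions of an NP-complete problem, using a constant number of labels. I would reduce from the realizability of a three-dimensional $0/1$ contingency table with prescribed two-dimensional margins (in the spirit of Irving and Jerrum), specialized so that the margin corresponding to $(X_1,X_2)$ is itself $0/1$ and the label dimension is at most $4$ -- or, equivalently, from a suitable bounded-alphabet constraint-satisfaction / edge-coloring instance with four symbols. Given such an instance, I scale the prescribed line sums into pairwise marginals $p(x_1,x_2)$ (uniform of some weight $\gamma$ on its $0/1$ support), $p(x_1,y)$, $p(x_2,y)$ over feature spaces of size polynomial in the instance with $|\mathcal{Y}|=4$, and check that these margins are mutually consistent so that $\Delta_{p_{1,2,12}} \neq \emptyset$ (this is LP feasibility, and holds by construction) -- this is what keeps the theorem non-vacuous. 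Because $p(x_1,x_2)$ is uniform on its support, a realizing labeling $g$ is \emph{exactly} a $0/1$ three-dimensional table with the prescribed margins; hence the minimum joint entropy equals the constant $H_p(X_1,X_2)$ iff the source instance is a yes-instance, and strictly exceeds it otherwise. The reduction is clearly polynomial, giving NP-hardness for $|\mathcal{Y}|=4$, and therefore for every fixed $|\mathcal{Y}|\ge 4$ (the extra labels are forced to carry zero mass).

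I expect the main obstacle to be precisely this gadget design. A naive, fully-uniform instance degenerates into a bipartite-edge-coloring-type question that is polynomially solvable (for instance, any bipartite graph is properly edge-colorable with its maximum degree, so prescribing equal label margins everywhere is easy), so the hardness has to be injected through carefully chosen \emph{non-uniform} label margins that force consistency across many cells simultaneously -- and achieving this with only four label values, while keeping the transportation polytope $\Delta_{p_{1,2,12}}$ non-empty, is the delicate part and the reason a small constant like $4$ both suffices and is natural. Everything else is routine: the objective simplification above, verifying marginal consistency of the constructed instance, and noting that no non-deterministic $r$ can attain the entropy floor $H_p(X_1,X_2)$ (immediate, since $H_r(Y\mid X_1,X_2)=0$ forces $Y$ to be a function of $(X_1,X_2)$). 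Finally, since we only need to decide whether the optimal value equals a fixed, polynomially-computable quantity, no quantitative gap argument is required; any exact solver for the minimization settles it.
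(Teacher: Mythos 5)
Your setup matches the paper exactly: both proofs begin with the chain-rule decomposition $H_r(X_1,X_2,Y)=H_p(X_1,X_2)+H_r(Y\mid X_1,X_2)$, observe that the minimum is attained and equals $H_p(X_1,X_2)$ iff some feasible $r$ makes $Y$ a deterministic function of $(X_1,X_2)$, and therefore reduce the theorem to showing that deciding existence of such a \emph{realizing labeling} is NP-hard. Your observation that matching $p(x_1,y)$ and $p(x_2,y)$ turns this into a $0/1$ three-way table (or constrained edge-coloring) feasibility problem, and that uniform $p(x_1,x_2)$ on its support is the natural normalization, is also correct and is implicitly what the paper does.

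However, the proposal stops precisely where the paper's proof begins to do its real work, and you say so yourself: ``I expect the main obstacle to be precisely this gadget design.'' That is a genuine gap, not a routine step. You propose to import hardness from Irving--Jerrum-style contingency-table results or an unspecified four-symbol CSP, but you do not verify that any such result yields NP-hardness \emph{with the label dimension fixed at $4$ and the $(X_1,X_2)$ margin $0/1$}. Those are exactly the constraints that defeat the easy constructions: as you correctly note, naive uniform instances reduce to bipartite edge coloring with $\Delta$ colors, which is polynomial (K\H{o}nig). The paper resolves this by reducing from the Restricted Timetable Problem of Even, Itai and Shamir (NP-hard via 3-SAT), passing to a variant $Q$-RTT that additionally fixes the number of lessons per hour, and then building a gadget with three ``holding room'' vertices $Z_1,Z_2,Z_3$ and a fourth ``null'' label; the holding rooms absorb classes not assigned to a teacher in a given hour, and the null label marks unused holding-room slots. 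This is where the careful non-uniform choices of $p(x_1,y)$ and $p(x_2,y)$ that you say are ``delicate'' actually get made (e.g.\ $p(Z_i,\cdot,0)=\alpha q_i$, $p(Z_i,\cdot,i)=\alpha(m-q_i)$), and it is why the construction naturally lands on $|\mathcal{Y}|=4$. Without this or an equivalent gadget, the reduction you sketch is not a proof; the citation of Irving--Jerrum is plausible but unsubstantiated for the bounded-alphabet, $0/1$-margin setting the theorem requires.
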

Theorem~\ref{thm:min-entropy-np-hard} suggests we cannot tractably find a joint distribution which tightly upper bounds synergy when the feature spaces are large. Fortunately, a relaxation of $r \in \Delta_{p_{1,2,12}}$ to $r \in \Delta_{p_{12,y}}$, where $r(x_1,x_2)=p(x_1,x_2)$ and $r(y)=p(y)$, recovers the classic \textit{min-entropy coupling} problem over $(X_1, X_2)$ and $Y$, which is still NP-hard but admits good approximations~\citep{cicalese2002supermodularity,cicalese2017find,kocaoglu2017entropic,compton2023minimum}. 
Our final upper bound $\high$ is:
\begin{theorem}
\label{thm:upper}
    (Upper-bound on synergy)
    \begin{align}
        S \leq H_p(X_1, X_2) + H_p(Y) - \min_{r \in \Delta_{p_{12,y}}} H_r(X_1, X_2, Y) - R - U_1 - U_2 = \high 
    \end{align}
\end{theorem}
Proofs of Theorem~\ref{thm:min-entropy-np-hard}, \ref{thm:upper}, and detailed approximation algorithms for min-entropy couplings are included in Appendix~\ref{app:nphard} and~\ref{app:upper}.

\textit{Computational complexity.} The upper bound $\high$ can be computed efficiently since solving the variant of the min-entropy problem in Theorem~\ref{thm:upper} admits approximations that can be computed in time $O(k \log k)$ where $k=\max( |\mathcal{X}_1|, |\mathcal{X}_2| )$. All other entropy and $R, U_1, U_2$ terms are easy to compute (or have been computed via convex optimization from the lower bounds).

Practically, calculating all three bounds is extremely simple, with just a few lines of code. The computation takes < 1 minute and < 180 MB memory space on average for our large datasets ($1,000$-$20,000$ datapoints), more efficient than training even the smallest multimodal prediction model which takes at least $3$x time and $15$x memory. As a result, \textit{these bounds scale to large and high-dimensional multimodal datasets found in the real world}, which we verify in the following experiments.

\section{Experiments}

We design comprehensive experiments to validate these estimated bounds and relationships between different multimodal interactions. Using these results, we describe applications in estimating optimal multimodal performance before training the model itself, which can be used to guide data collection and select appropriate multimodal models for various tasks.

\subsection{Verifying interaction estimation in semi-supervised learning}

\paragraph{Synthetic bitwise datasets} 
Let $\mathcal{X}_1=\mathcal{X}_2=\mathcal{Y}=\{0,1\}$. We generate joint distributions $\Delta$ by sampling $100,000$ vectors from the 8-dim probability simplex and assigning them to $p(x_1,x_2,y)$.

\begin{wrapfigure}{R}{0.3\textwidth}
    \centering
    \vspace{-4mm}
    \includegraphics[width=\linewidth]{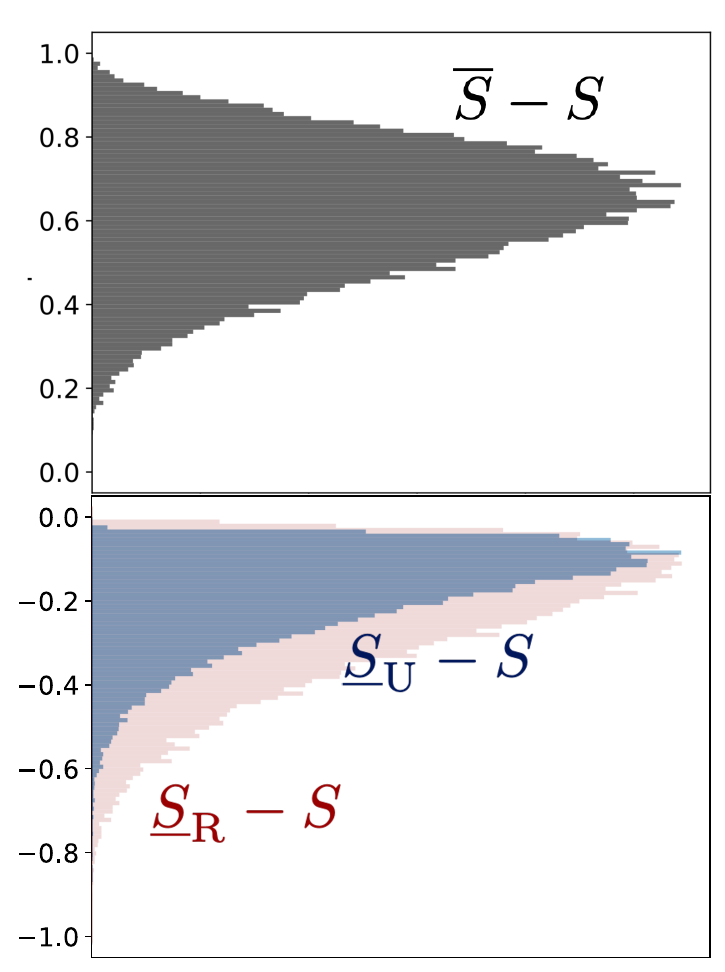}
    \vspace{-6mm}
    \caption{Our two lower bounds {\color{red}$\lowa$} and {\color{blue}$\lowb$} track actual synergy $S$ from below, and the upper bound $\high$ tracks $S$ from above. We find that $\lowa,\lowb$ tend to approximate $S$ better than $\high$.}
    \label{fig:gaps}
\vspace{-8mm}
\end{wrapfigure}

\vspace{-1mm}
\paragraph{Large real-world multimodal datasets} We use a collection of $10$ real-world datasets from MultiBench~\citep{liang2021multibench} which add up to a size of more than $700,000$ datapoints.
\begin{enumerate}[noitemsep,topsep=0pt,nosep,leftmargin=*,parsep=0pt,partopsep=0pt]
    \item \textsc{MOSI}: $2,199$ videos for sentiment analysis~\citep{zadeh2016mosi},
    
    \item \textsc{MOSEI}: $23,000$ videos for sentiment and emotion analysis~\citep{zadeh2018multimodal},

    \item \textsc{MUStARD}: $690$ videos for sarcasm detection~\citep{castro2019towards}, 

    \item \textsc{UR-FUNNY}: a dataset of humor detection from $16,000$ TED talk videos~\citep{hasan2019ur},

    \item \textsc{MIMIC}: $36,212$ examples predicting patient mortality and diseases from tabular patient data and medical sensors~\citep{johnson2016mimic},

    \item \textsc{ENRICO}: $1,460$ examples classifying mobile user interfaces and screenshots~\citep{leiva2020enrico}.

    \item IRFL: $6,697$ images and figurative captions (e.g, ‘the car is as fast as a cheetah’ describing an image with a fast car in it)~\citep{yosef2023irfl}.

    \item NYCaps: $1,820$ New York Yimes cartoon images and humorous captions describing these images~\citep{hessel2022androids}.
    
    \item VQA: $614,000$ questions and answers about natural images~\citep{antol2015vqa}.

    \item ScienceQA: $21,000$ questions and answers about science problems with scientific diagrams~\citep{lu2022learn}.
\end{enumerate}

These high-dimensional and continuous modalities require approximating disagreement and mutual information: we train unimodal classifiers $\hat{f}_\theta(y|x_1)$ and $\hat{f}_\theta(y|x_2)$ to estimate disagreement, and we cluster modality features to approximate continuous modalities by discrete distributions with finite support to compute the lower and upper bounds. We summarize the following regarding the validity of each bound (see details in Appendix~\ref{app:experiments}):

\paragraph{1. Overall trends} For the $100,000$ bitwise distributions, we compute $S$, the true value of synergy assuming oracle knowledge of the full multimodal distribution, and compute $\lowa-S$, $\lowb-S$, and $S-\high$ for each point. Plotting these points as a histogram in Figure~\ref{fig:gaps}, we find that the two lower bounds track synergy from below ($\lowa-S$ and $\lowb-S$ approaching $0$ from below), and the upper bound tracks synergy from above ($S-\high$ approaching $0$ from above). The two lower bounds are quite tight, as we see that for many points $\lowa-S$ and $\lowb-S$ are approaching close to $0$, with an average gap of $0.18$. $\lowb$ seems to be tighter empirically than $\lowa$: for half the points, $\lowb$ is within $0.14$ and $\lowa$ is within $0.2$ of $S$. For the upper bound, there is an average gap of $0.62$. However, it performs especially well on high synergy data: when $S > 0.6$, the average gap is $0.24$, with more than half of the points within $0.25$ of $S$.

\begin{table}[t]
\centering
\fontsize{9}{11}\selectfont
\setlength\tabcolsep{1.5pt}
\vspace{-4mm}
\caption{We compute lower bounds $\lowa$, $\lowb$, and upper bound $\high$ in semi-supervised multimodal settings and compare them to $S$ assuming knowledge of the full joint distribution $p$. The bounds always hold and track $S$ well on \textsc{MOSEI}, \textsc{UR-FUNNY}, \textsc{MOSI}, and \textsc{MUStARD}: true $S$ increases as estimated $\lowa$ and $\lowb$ increases.}
\vspace{-1mm}
\centering
\footnotesize
\begin{tabular}{l|ccccccccccc}
\hline \hline
& \textsc{MOSEI} & \textsc{UR-FUNNY} & \textsc{MOSI} & \textsc{MUStARD} & \textsc{MIMIC} & \textsc{ENRICO} & \textsc{NYCaps} & \textsc{IRFL} & \textsc{VQA} & \textsc{ScienceQA} \\
\hline
$\high$ & $0.97$ & $0.97$ & $0.92$ & $0.79$ & $0.41$ & $2.09$ & $0.68$ & $0.01$ & $0.97$ & $1.67$\\
$S$ & $0.03$ & $0.18$ & $0.24$ & $0.44$ & $0.02$ & $1.02$ & $0.09$ & $0$ & $0.05$ & $0.16$\\
$\lowa$ & $0$ & $0$ & $0.01$ & $0.04$ & $0$ & $0.01$ & $0$ & $0$ & $0$ & $0.01$\\
$\lowb$ & $0.01$ & $0.01$ & $0.03$ & $0.11$ & $-0.12$ & $-0.55$ & $-0.03$ & $-0.01$ & $0$ & $0$\\
\hline \hline
\end{tabular}

\vspace{-2mm}
\label{tab:s}
\end{table}

On real-world MultiBench datasets, we show the estimated bounds and actual $S$ computed assuming knowledge of full $p$ in Table~\ref{tab:s}. The lower and upper bounds track true $S$: as estimated $\lowa$ and $\lowb$ increases from \textsc{MOSEI} to \textsc{UR-FUNNY} to \textsc{MOSI} to \textsc{MUStARD}, true $S$ also increases.
For datasets like \textsc{MIMIC} with disagreement but high uniqueness, $\lowb$ can be negative, but we can rely on $\lowa$ to give a tight estimate on low synergy. Unfortunately, our bounds do not track synergy well on \textsc{ENRICO}. We believe this is because \textsc{ENRICO} displays all interactions: $R=0.73,U_1=0.38,U_2=0.53,S=0.34$, which makes it difficult to distinguish between $R$ and $S$ using $\lowa$ or $U$ and $S$ using $\lowb$ since no interaction dominates over others, and $\high$ is also quite loose. Given these general observations, we now carefully analyze the relationships between redundancy, uniqueness, and synergy.

\begin{table}[t]
\vspace{-0mm}
\begin{subtable}{0.24\textwidth}
\centering
\begin{tabular}{ccc|c}
\hline \hline
$x_1$ & $x_2$ & $y$ & $p$ \\
\hline
$0$ & $0$ & $0$ & $0$ \\
$0$ & $0$ & $1$ & $0.05$ \\
$0$ & $1$ & $0$ & $0.03$ \\
$0$ & $1$ & $1$ & $0.28$ \\
$1$ & $0$ & $0$ & $0.53$ \\
$1$ & $0$ & $1$ & $0.03$ \\
$1$ & $1$ & $0$ & $0.01$ \\
$1$ & $1$ & $1$ & $0.06$ \\
\hline \hline
\end{tabular}
\caption{Disagreement XOR}
\label{tab:dis_xor}
\end{subtable}
\begin{subtable}{0.24\textwidth}
\centering
\begin{tabular}{ccc|c}
\hline \hline
$x_1$ & $x_2$ & $y$ & $p$ \\
\hline
$0$ & $0$ & $0$ & $0.25$ \\
$0$ & $1$ & $1$ & $0.25$ \\
$1$ & $0$ & $1$ & $0.25$ \\
$1$ & $1$ & $0$ & $0.25$ \\
\hline \hline
\end{tabular}
\caption{Agreement XOR}
\label{tab:xor}
\end{subtable}
\begin{subtable}{0.24\textwidth}
\centering
\begin{tabular}{ccc|c}
\hline \hline
$x_1$ & $x_2$ & $y$ & $p$ \\
\hline
$0$ & $0$ & $0$ & $0.25$ \\
$0$ & $1$ & $0$ & $0.25$ \\
$1$ & $0$ & $1$ & $0.25$ \\
$1$ & $1$ & $1$ & $0.25$ \\
\hline \hline
\end{tabular}
\caption{$y=x_1$}
\label{tab:u}
\end{subtable}
\begin{subtable}{0.24\textwidth}
\centering
\begin{tabular}{ccc|c}
\hline \hline
$x_1$ & $x_2$ & $y$ & $p$ \\
\hline
$0$ & $0$ & $0$ & $0.5$ \\
$1$ & $1$ & $1$ & $0.5$ \\
\hline \hline
\end{tabular}
\caption{$y=x_1=x_2$}
\label{tab:r}
\end{subtable}
\vspace{0mm}
\caption{Four representative examples: (a) disagreement XOR has high disagreement and high synergy, (b) agreement XOR has no disagreement and high synergy, (c) $y=x_1$ has high disagreement and uniqueness but no synergy, and (d) $y=x_1=x_2$ has high agreement and redundancy but no synergy.}
\vspace{-4mm}
\label{tab:lower_dis_ex}
\end{table}

\vspace{-1mm}
\paragraph{2. Guidelines} We provide a guideline to decide whether a lower or upper bound on synergy can be considered `close enough'. It is close enough if the maximum interaction can be consistently estimated - often the exact value of synergy is not the most important (e.g, whether $S$ is $0.5$ or $0.6$) but rather synergy relative to other interactions (e.g., if we estimate $S \in [0.2, 0.5]$, and exactly compute $R = U_1 = U_2 = 0.1$, then we know for sure that $S$ is the most important interaction and can collect data or design models based on that). We find that our bounds accurately identify the same highest interaction on all 10 real-world datasets as the true synergy does. Furthermore, we observed that the estimated synergy correlates very well with true synergy: as high as $1.05$ on ENRICO (true $S = 1.02$) and as low as $0.21$ on MIMIC (true $S = 0.02$).

\vspace{-1mm}
\paragraph{3. The relationship between $S$ and $R$} In Table~\ref{tab:xor} we show the classic \textsc{agreement XOR} distribution where $X_1$ and $X_2$ are independent, but $Y=1$ sets $X_1 \neq X_2$ to increase their dependence. $I(X_1;X_2;Y)$ is negative, and $\lowa = 1 \le 1=S$ is tight.
On the other hand, Table~\ref{tab:r} is an extreme example where the probability mass is distributed uniformly only when $y=x_1=x_2$ and $0$ elsewhere. As a result, $X_1$ is always equal to $X_2$ (perfect dependence), and yet $Y$ perfectly explains away the dependence between $X_1$ and $X_2$ so $I(X_1;X_2|Y) = 0$: $\lowa = 0 \le 0=S$. A real-world example is multimodal sentiment analysis from text, video, and audio on \textsc{MOSEI}, $R=0.26$ and $S=0.03$, and as expected the lower bound is small $\lowa = 0 \le 0.03=S$ (Table~\ref{tab:s}).

\paragraph{4. The relationship between $S$ and $U$} In Table~\ref{tab:dis_xor} we show an example called \textsc{disagreement XOR}. There is maximum disagreement between $p(y|x_1)$ and $p(y|x_2)$: the likelihood for $y$ is high when $y$ is the opposite bit as $x_1$, but reversed for $x_2$. Given both $x_1$ and $x_2$: $y$ takes a `disagreement' XOR of the individual marginals, i.e. $p(y|x_1,x_2) = \argmax_y p(y|x_1) \ \textrm{XOR} \ \argmax_y p(y|x_2)$, which indicates synergy (note that an exact XOR would imply perfect agreement and high synergy). The actual disagreement is $0.15$, $S$ is $0.16$, and $U$ is $0.02$, indicating a very strong lower bound $\lowb=0.14 \le 0.16=S$. A real-world equivalent dataset is \textsc{MUStARD}, where the presence of sarcasm is often due to a contradiction between what is expressed in language and speech, so disagreement $\alpha=0.12$ is the highest out of all the video datasets, giving a lower bound $\lowb=0.11 \le 0.44 = S$.

The lower bound is low when all disagreement is explained by uniqueness (e.g., $y=x_1$, Table~\ref{tab:u}), which results in $\lowb = 0 \le 0 = S$ ($\alpha$ and $U$ cancel each other out). A real-world equivalent is \textsc{MIMIC}: from Table~\ref{tab:s}, disagreement is high $\alpha=0.13$ due to unique information $U_1=0.25$, so the lower bound informs us about the lack of synergy $\lowb = -0.12 \le 0.02 = S$.
Finally, the lower bound is loose when there is synergy without disagreement, such as \textsc{agreement XOR} ($y=x_1 \textrm{ XOR } x_2$, Table~\ref{tab:xor}) where the marginals $p(y|x_i)$ are both uniform, but there is full synergy: $\lowb = 0 \le 1 = S$. Real-world datasets include \textsc{UR-FUNNY} where there is low disagreement in predicting humor $\alpha=0.03$, and relatively high synergy $S=0.18$, which results in a loose lower bound $\lowb = 0.01 \le 0.18=S$.

\paragraph{5. On upper bounds for synergy} The upper bound for \textsc{MUStARD} is close to real synergy, $\high = 0.79 \ge 0.44=S$. On \textsc{MIMIC}, the upper bound is the lowest $\high = 0.41$, matching the lowest $S=0.02$. Some of the other examples in Table~\ref{tab:s} show weaker bounds.
This could be because (i) there exists high synergy distributions that match $\mathcal{D}_i$ and $\mathcal{D}_M$, but these are rare in the real world, or (ii) our approximation used in Theorem~\ref{thm:upper} is loose. We leave these as directions for future work.

\begin{table}[t]
\centering
\fontsize{9}{11}\selectfont
\setlength\tabcolsep{3.0pt}
\vspace{-4mm}
\caption{Estimated lower, upper, and average bounds on optimal multimodal performance in comparison with the actual best unimodal model, the best simple fusion model, and the best complex fusion model. Our performance estimates closely predict actual model performance, \textit{despite being computed only on semi-supervised data and never training the model itself.}}
\centering
\footnotesize
\begin{tabular}{l|cccccccccc}
\hline \hline
& \textsc{MOSEI} & \textsc{UR-FUNNY} & \textsc{MOSI} & \textsc{MUStARD} & \textsc{MIMIC} & \textsc{ENRICO} \\
\hline
Estimated upper bound & $1.07$ & $1.21$ & $1.29$ & $1.63$ & $1.27$ & $0.88$ \\
Best complex multimodal & $0.88$ & $0.77$ & $0.86$ & $0.79$ & $0.92$ & $0.51$ \\
Best simple multimodal & $0.85$ & $0.76$ & $0.84$ & $0.74$ & $0.92$ & $0.49$ \\
Best unimodal & $0.82$ & $0.74$ & $0.83$ & $0.74$ & $0.92$ & $0.47$  \\
Estimated lower bound & $0.52$ & $0.58$ & $0.62$ & $0.78$ & $0.76$ & $0.48$ \\
\hline
Estimated average & $0.80$ & $0.90$ & $0.96$ & $1.21$ & $1.02$ & $0.68$ \\
\hline \hline
\end{tabular}

\vspace{-4mm}
\label{tab:acc}
\end{table}

\paragraph{Additional results} In Appendix~\ref{app:experiments} and~\ref{app:lower_exp}, we also study the effect of imperfect unimodal predictors and disagreement measurements on our derived bounds, by perturbing the label by various noise levels (from no noise to very noisy) and examining the changes in estimated upper and lower bounds. We found these bounds are quite robust to label noise, still giving close trends of $S$. We also include more discussions studying the relationships between various interactions, and how the relationship between disagreement and synergy can inspire new self-supervised learning methods.

\subsection{Implications towards performance, data collection, model selection}

Now that we have validated the accuracy of these bounds, we apply them to estimate multimodal performance in semi-supervised settings. This serves as a strong signal for deciding (1) whether to collect paired and labeled data from a second modality, and (2) what type of multimodal fusion method should be used.
To estimate performance given $\mathcal{D}_1$, $\mathcal{D}_2$, and $\mathcal{D}_M$, we first compute our lower and upper bounds $\underline{S}$ and $\overline{S}$. Combined with the exact computation of $R$ and $U$, we obtain the total information $I_p(\{X_1,X_2\}; Y)$, and combine a result from~\citet{feder1994relations} with Fano's inequality~\citep{fano1968transmission} to yield tight bounds of performance as a function of total information.
\begin{theorem}
\label{thm:performance}
    Let $P_\textrm{acc}(f_M^*) = \mathbb{E}_p \left[ \mathbf{1} \left[ f_M^*(x_1,x_2) = y  \right] \right]$ denote the accuracy of the Bayes' optimal multimodal model $f_M^*$ (i.e., $P_\textrm{acc} (f_M^*) \ge P_\textrm{acc} (f'_M)$ for all $f'_M \in \mathcal{F}_M$). We have that
    \begin{align}
        2^{I_p(\{X_1,X_2\}; Y)-H(Y)} \leq P_\textrm{acc}(f_M^*) \leq \frac{I_p(\{X_1,X_2\}; Y) + 1}{\log |\mathcal{Y}|},
    \end{align}
    and we can plug in $R+U_1,U_2+\underline{S} \le I_p(\{X_1,X_2\}; Y) \le R+U_1,U_2+\high$ to obtain lower $\underline{P}_\textrm{acc}(f_M^*)$ and upper $\overline{P}_\textrm{acc}(f_M^*)$ bounds on optimal multimodal performance.
\end{theorem}
We show the proof in Appendix~\ref{app:upper_exp}. Finally, we summarize estimated multimodal performance as the average $\hat{P}_M = (\underline{P}_\textrm{acc}(f_M^*) + \overline{P}_\textrm{acc}(f_M^*))/2$. A high $\hat{P}_M$ suggests the presence of important joint information from both modalities (not present in each) which could boost accuracy, so it is worthwhile to collect the full distribution $p$ and explore multimodal fusion.

\vspace{-1mm}
\paragraph{Setup} For each MultiBench dataset, we implement a suite of unimodal and multimodel models spanning simple and complex fusion. Unimodal models are trained and evaluated separately on each modality. Simple fusion includes ensembling by taking an additive or majority vote between unimodal models~\citep{hastie1987generalized}. Complex fusion is designed to learn higher-order interactions as exemplified by bilinear pooling~\citep{fukui2016multimodal}, multiplicative interactions~\citep{Jayakumar2020Multiplicative}, tensor fusion~\citep{zadeh2017tensor}, and cross-modal self-attention~\citep{tsai2019multimodal}. See Appendix~\ref{app:upper_exp} for models and training details. We include unimodal, simple and complex multimodal performance, as well as estimated lower and upper bounds on performance in Table~\ref{tab:acc}.

\paragraph{RQ1: Estimating multimodal fusion performance} \textit{How well could my multimodal model perform?} We find that estimating interactions enables us to \textit{closely predict multimodal model performance, before even training a model}. For example, on \textsc{MOSEI}, we estimate the performance to be $52\%$ based on the lower bound and $107\%$ based on the upper bound, for an average of $80\%$ which is very close to true model performance ranging from $82\%$ for the best unimodal model, and $85\%-88\%$ for various multimodal model. Estimated performances for \textsc{ENRICO}, \textsc{UR-FUNNY}, and \textsc{MOSI} are $68\%$, $90\%$, $96\%$, which track true performances $51\%$, $77\%$, $86\%$.

\begin{wrapfigure}{R}{0.6\textwidth}
\centering
\vspace{1mm}
\begin{subfigure}{.3\textwidth}
  \centering
  \vspace{-2mm}
  \includegraphics[width=\linewidth]{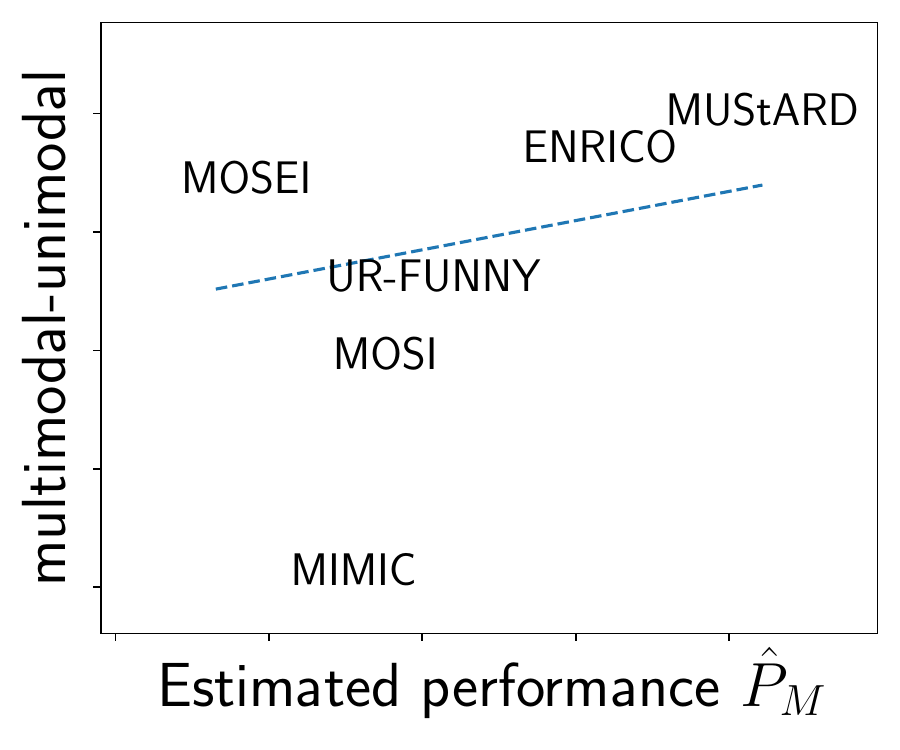}
  \vspace{-6mm}
\end{subfigure}%
\begin{subfigure}{.3\textwidth}
  \centering
  \vspace{-2mm}
  \includegraphics[width=\linewidth]{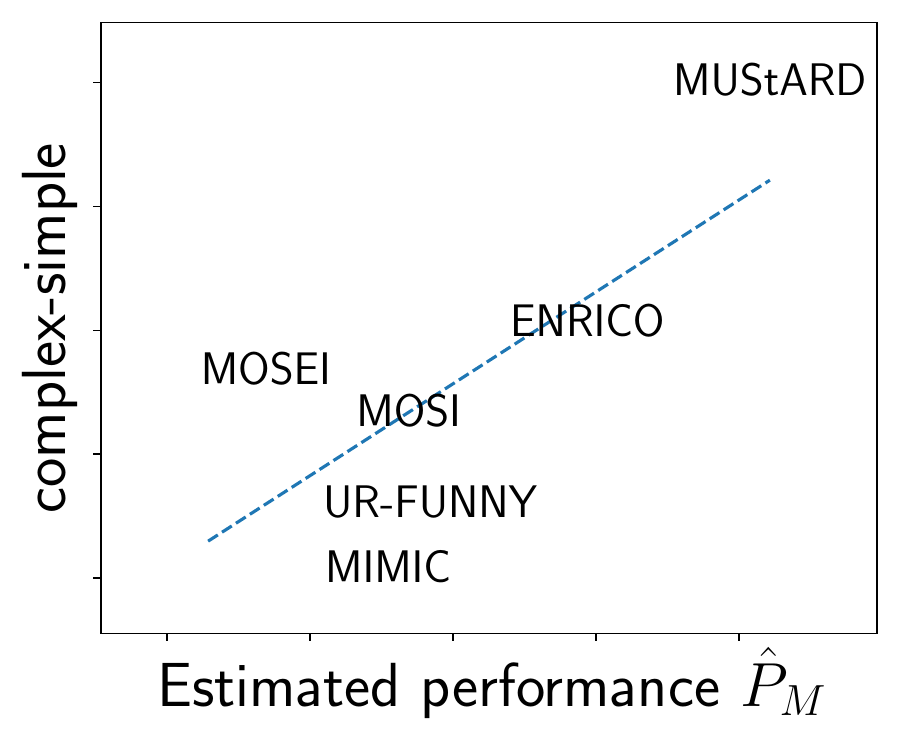}
  \vspace{-6mm}
\end{subfigure}
\vspace{-4mm}
\caption{Datasets with higher estimated multimodal performance $\hat{P}_M$ tend to show improvements from unimodal to multimodal (left) and from simple to complex multimodal fusion (right).}
\label{fig:plot2}
\vspace{-4mm}
\end{wrapfigure}

\paragraph{RQ2: Data collection} \textit{Should I collect multimodal data?} We compare estimated performance $\hat{P}_M$ with the actual difference between unimodal and best multimodal performance in Figure~\ref{fig:plot2} (left).
Higher estimated $\hat{P}_M$ correlates with a larger gain from unimodal to multimodal (correlation $\rho = 0.21$ and rises to $0.53$ if ignoring the outlier in MIMIC). \textsc{MUStARD} and \textsc{ENRICO} show the most opportunity for multimodal modeling. Therefore, a rough guideline is that if the estimated multimodal performance based on semi-supervised data is higher, then collecting the full labeled multimodal data is worth it.

\paragraph{RQ3: Model selection} \textit{What model should I choose for multimodal fusion?} We note strong relationships between estimated performance and the performance of different fusion methods.
From Table~\ref{tab:acc}, synergistic datasets like \textsc{MUStARD} and \textsc{ENRICO} show best multimodal performance only slightly above our estimated lower bound, indicating that there is a lot of room for improvement in better fusion methods. Indeed, more complex fusion methods such as multimodal transformer designed to capture synergy is the best on \textsc{MUStARD} which matches its high synergy ($72\%$ accuracy).
For datasets with less synergy like \textsc{MOSEI} and \textsc{MIMIC}, the best multimodal performance is much higher than the estimated lower bound, indicating that existing fusion methods may already be quite optimal. Indeed, simpler fusion methods such as feature alignment, designed to capture redudnancy, are the best on \textsc{MOSEI} which matches its high redundancy ($80\%$ accuracy).

Figure~\ref{fig:plot2} (right) shows a visual comparison, where plotting the performance gap between complex and simple fusion methods against estimated performance $\hat{P}_M$ shows a correlation coefficient of $0.77$.
We again observe positive trends between higher estimated performance and improvements with complex fusion, with large gains on \textsc{MUStARD} and \textsc{ENRICO}.
We expect new methods to further improve the state-of-the-art on these datasets due to their generally high interaction values and low multimodal performance relative to estimated lower bound $\underline{P}_\textrm{acc}(f_M^*)$. Therefore, a rough guideline is that if the estimated multimodal performance based on semi-supervised data is higher, then there is more potential for improvement by trying more complex multimodal fusion strategies.

\vspace{-1mm}
\section{Conclusion and Broader Impacts}
\label{sec:conclusion}
\vspace{-1mm}

We proposed estimators of multimodal interactions when observing only \textit{labeled unimodal data} and some \textit{unlabeled multimodal data}, a general semi-supervised setting that encompasses many real-world constraints involving partially observable modalities, limited labels, and privacy concerns. Our key results draw new connections between multimodal interactions, the disagreement of unimodal classifiers, and min-entropy couplings, which yield new insights for estimating multimodal model performance, data analysis, and model selection. We are aware of some potential \textbf{limitations}:
\begin{enumerate}[noitemsep,topsep=0pt,nosep,leftmargin=*,parsep=0pt,partopsep=0pt]
    \item These estimators only approximate real interactions due to cluster preprocessing or unimodal models, which naturally introduce optimization and generalization errors. We expect progress in density estimators, generative models, and unimodal classifiers to address these problems.

    \item It is harder to quantify interactions for certain datasets, such as \textsc{ENRICO} which displays all interactions which makes it difficult to distinguish between $R$ and $S$ or $U$ and $S$.

    \item Finally, there exist challenges in quantifying interactions since the data generation process is never known for real-world datasets, so we have to resort to human judgment, other automatic measures, and downstream tasks such as estimating model performance and model selection.
\end{enumerate}

\textbf{Future work} should investigate more applications of multivariate information theory in designing self-supervised models, predicting multimodal performance, and other tasks involving feature interactions such as privacy-preserving and fair representation learning from high-dimensional data~\citep{dutta2020information,hamman2023demystifying}.
Being able to provide guarantees for fairness and privacy-preserving learning, especially for semi-supervised pretraining datasets, can be particularly impactful.

\vspace{-1mm}
\section*{Acknowledgements}
\vspace{-1mm}

This material is based upon work partially supported by National Science Foundation awards 1722822 and 1750439, National Institutes of Health awards R01MH125740, R01MH132225, R01MH096951 and R21MH130767, and Meta.
PPL is supported in part by a Siebel Scholarship and a Waibel Presidential Fellowship.
RS is supported in part by ONR grant N000142312368 and DARPA FA87502321015.
Any opinions, findings, conclusions, or recommendations expressed in this material are those of the author(s) and do not necessarily reflect the views of the sponsors, and no official endorsement should be inferred. Finally, we would also like to acknowledge feedback from anonymous reviewers who significantly improved the paper and NVIDIA’s GPU support.

{\footnotesize
\bibliography{refs}
\bibliographystyle{iclr2024_conference}
}

\clearpage

\appendix

\section*{Appendix}

\section{Broader Impact}
\label{sec:impact}
\vspace{-1mm}

Multimodal semi-supervised models are ubiquitous in a range of real-world applications with only labeled unimodal data and naturally co-occurring multimodal data (e.g., unlabeled images and captions, video and corresponding audio) but when labeling them is time-consuming. This paper is our attempt at formalizing the learning setting of multimodal semi-supervised learning, allowing us to derive bounds on the information existing in multimodal semi-supervised datasets and what can be learned by models trained on these datasets. We do not foresee any negative broad impacts of our theoretical results, but we do note the following concerns regarding the potential empirical applications of these theoretical results in real-world multimodal datasets:

\textbf{Biases}: We acknowledge risks of potential biases surrounding gender, race, and ethnicity in large-scale multimodal datasets~\citep{bolukbasi2016man}, especially those collected in a semi-supervised setting with unlabeled and unfiltered images and captions~\citep{birhane2021multimodal}. Formalizing the types of bias in multimodal datasets and mitigating them is an important direction for future work.

\textbf{Privacy}: When making predictions from multimodal datasets with recorded human behaviors and medical data, there might be privacy risks of participants. Following best practices in maintaining the privacy and safety of these datasets, (1) these datasets have only been collected from public data that are consented for public release (creative commons license and following fair use guidelines of YouTube)~\citep{castro2019towards,hasan2019ur,zadeh2018multimodal}, or collected from hospitals under strict IRB and restricted access guidelines~\citep{johnson2016mimic}, and (2) have been rigorously de-identified in accordance with Health Insurance Portability and Accountability Act such that all possible personal and protected information has been removed from the dataset~\citep{johnson2016mimic}. Finally, we only use these datasets for research purposes and emphasize that any multimodal models trained to perform prediction should only be used for scientific study and should not in any way be used for real-world harm.

\section{Detailed proofs}

\subsection{Information decomposition}
\label{app:method2}

Partial information decomposition (PID)~\citep{williams2010nonnegative} decomposes of the total information 2 variables provide about a task $I(\{X_1,X_2\}; Y)$ into 4 quantities: redundancy $R$ between $X_1$ and $X_2$, unique information $U_1$ in $X_1$ and $U_2$ in $X_2$, and synergy $S$.~\citet{williams2010nonnegative}, who first proposed PIDs, showed that they should satisfy the following consistency equations:
\begin{alignat}{3}
R + U_1 &= I(X_1; Y), \label{eqn:const1} \\
R + U_2 &= I(X_2; Y), \label{eqn:const2} \\
U_1 + S &= I(X_1; Y | X_2), \label{eqn:const3} \\
U_2 + S &= I(X_2; Y | X_1), \label{eqn:const4} \\
R - S &= I(X_1; X_2; Y). \label{eqn:const5}
\end{alignat}
We choose the PID definition by~\citet{bertschinger2014quantifying}, where redundancy, uniqueness, and synergy are defined by the solution to the following optimization problems:
\begin{align}
R &= \max_{q \in \Delta_p} I_q(X_1; X_2; Y) \label{eqn:R-def-app}\\
U_1 &= \min_{q \in \Delta_p} I_q(X_1; Y | X_2) \label{eqn:U1-def-app}\\
U_2 &= \min_{q \in \Delta_p} I_q(X_2; Y| X_1) \label{eqn:U2-def-app}\\
S &= I_p(\{X_1,X_2\}; Y) - \min_{q \in \Delta_p} I_q(\{X_1,X_2\}; Y) \label{eqn:S-def-app}
\end{align}
where $\Delta_p = \{ q \in \Delta: q(x_i,y)=p(x_i,y) \ \forall y, x_i, i \in \{1,2\} \}$, $\Delta$ is the set of all joint distributions over $X_1, X_2, Y$, and the notation $I_p(\cdot)$ and $I_q(\cdot)$ disambiguates MI under joint distributions $p$ and $q$ respectively. The key difference in this definition of PID lies in optimizing $q \in \Delta_p$ to satisfy the marginals $q(x_i,y)=p(x_i,y)$, but relaxing the coupling between $x_1$ and $x_2$: $q(x_1,x_2)$ need not be equal to $p(x_1,x_2)$. The intuition behind this is that one should be able to infer redundancy and uniqueness given only access to separate marginals $p(x_1,y)$ and $p(x_2,y)$, and therefore they should only depend on $q \in \Delta_p$ which match these marginals. Synergy, however, requires knowing the coupling $p(x_1,x_2)$, and this is reflected in equation (\ref{eqn:S-def-app}) depending on the full $p$ distribution.

\subsection{Computing $q^*$, redundancy, and uniqueness}
\label{app:method3}

According to~\citet{bertschinger2014quantifying}, it suffices to solve for $q$ using the following max-entropy optimization problem $q^* = \argmax_{q \in \Delta_p} H_q(Y | X_1, X_2)$, the same $q^*$ equivalently solves any of the remaining problems defined for redundancy, uniqueness, and synergy. 
This is a concave maximization problem with linear constraints. When $\mathcal{X}_i$ and $\mathcal{Y}$ are small and discrete, we can represent all valid distributions $q(x_1,x_2,y)$ as a set of tensors $Q$ of shape $|\mathcal{X}_1| \times |\mathcal{X}_2| \times |\mathcal{Y}|$ with each entry representing $Q[i,j,k] = p(X_1=i,X_2=j,Y=k)$. The problem then boils down to optimizing over valid tensors $Q \in \Delta_p$ that match the marginals $p(x_i,y)$ for the objective function $H_q(Y | X_1, X_2)$. We rewrite conditional entropy as a KL-divergence~\citep{globerson2007approximate}, $H_q(Y|X_1, X_2) = \log |\mathcal{Y}| - KL(q||\tilde{q})$, where $\tilde{q}$ is an auxiliary product density of $q(x_1,x_2) \cdot \frac{1}{|\mathcal{Y}|}$ enforced using linear constraints: $ \tilde{q}(x_1, x_2, y) = q(x_1,x_2) / |\mathcal{Y}|$. The KL-divergence objective is recognized as convex, allowing the use of conic solvers such as SCS~\citep{ocpb:16}, ECOS~\citep{domahidi2013ecos}, and MOSEK~\citep{mosek}.

Finally, optimizing over $Q \in \Delta_p$ that match the marginals can also be enforced through linear constraints: the 3D-tensor $Q$ summed over the second dimension gives $q(x_1,y)$ and summed over the first dimension gives $q(x_2,y)$, yielding the final optimization problem:
\begin{align}
    \argmax_{Q,\tilde{Q}} KL(Q||\tilde{Q}), \quad \textrm{s.t.} \quad &\tilde{Q}(x_1, x_2, y) = Q(x_1,x_2) / |\mathcal{Y}|, \\
    &\sum_{x_2} Q = p(x_1,y), \sum_{x_1} Q = p(x_2,y), Q \ge 0, \sum_{x_1,x_2,y} Q = 1.
    \label{eqn:cvx-optimizer}
\end{align}
After solving this optimization problem, plugging $q^*$ into (\ref{eqn:R-def-app})-(\ref{eqn:U2-def-app}) yields the desired estimators for redundancy and uniqueness: $R = I_{q^*}(X_1; X_2; Y), U_1 = I_{q^*}(X_1; Y | X_2), U_2 = I_{q^*}(X_2; Y| X_1)$, and more importantly, can be inferred from access to only labeled unimodal data $p(x_1,y)$ and $p(x_2,y)$. Unfortunately, $S$ is impossible to compute via equation (\ref{eqn:S-def-app}) when we do not have access to the full joint distribution $p$, since the first term $I_p(X_1, X_2;Y)$ is unknown.
Instead, we will aim to provide lower and upper bounds in the form $\underline{S} \leq S \leq \high$ so that we can have a minimum and maximum estimate on what synergy could be. Crucially, $\underline{S}$ and $\high$ should depend \textit{only} on $\mathcal{D}_1$, $\mathcal{D}_2$, and $\mathcal{D}_M$ in the multimodal semi-supervised setting.

\subsection{Lower bound on synergy via redundancy (Theorem~\ref{thm:lower_connections})}
\label{app:lower1}

We first restate Theorem~\ref{thm:lower_connections} from the main text to obtain our first lower bound $\lowa$ linking synergy to redundancy:
\begin{theorem}
\label{thm:lower_connections_app}
    (Lower-bound on synergy via redundancy, same as Theorem~\ref{thm:lower_connections}) We can relate $S$ to $R$ as follows
    \begin{align}
    \label{eqn:lower_connections_app}
        \lowa = R - I_p(X_1;X_2) + \min_{r \in \Delta_{p_{1,2,12}}} I_r(X_1;X_2|Y) \le S
    \end{align}
    where $\Delta_{p_{1,2,12}} = \{ r \in \Delta: r(x_1,x_2)=p(x_1,x_2), r(x_i,y)=p(x_i,y) \}$. $\min_{r \in \Delta_{p_{1,2,12}}} I_r(X_1;X_2|Y)$ is a max-entropy convex optimization problem which can be solved exactly using linear programming.
\end{theorem}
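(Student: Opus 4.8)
The plan is to start from the exact PID decomposition of synergy in terms of redundancy, isolate the single term that depends on the unavailable joint $p$, and replace it by a quantity computable from $\mathcal{D}_i$ and $\mathcal{D}_M$ via a one-line monotonicity argument. Concretely, I would first invoke the consistency relation $R - S = I_p(X_1;X_2;Y)$ (equation (\ref{eqn:const5}), equivalently the displayed identity preceding Theorem~\ref{thm:lower_connections}) together with the elementary co-information identity $I_p(X_1;X_2;Y) = I_p(X_1;X_2) - I_p(X_1;X_2|Y)$, which yields $S = R - I_p(X_1;X_2) + I_p(X_1;X_2|Y)$. Of these three terms, $R$ is recovered exactly from the labeled unimodal marginals $p(x_i,y)$ through the max-entropy program of Appendix~\ref{app:method3}, and $I_p(X_1;X_2)$ is recovered from $\mathcal{D}_M$; only $I_p(X_1;X_2|Y)$ genuinely needs the full joint. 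So the task reduces to lower-bounding $I_p(X_1;X_2|Y)$ by something estimable.

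The crux is the observation that the true distribution $p$ itself lies in $\Delta_{p_{1,2,12}}$: trivially $p(x_1,x_2)=p(x_1,x_2)$ and $p(x_i,y)=p(x_i,y)$, so all the defining linear constraints of $\Delta_{p_{1,2,12}}$ are satisfied at $r=p$. Hence $\min_{r\in\Delta_{p_{1,2,12}}} I_r(X_1;X_2|Y) \le I_p(X_1;X_2|Y)$, and plugging this into the decomposition above gives $\lowa = R - I_p(X_1;X_2) + \min_{r\in\Delta_{p_{1,2,12}}} I_r(X_1;X_2|Y) \le R - I_p(X_1;X_2) + I_p(X_1;X_2|Y) = S$, which is exactly the claimed inequality.

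It then remains to argue the inner minimization is a tractable convex (max-entropy) program. Here I would note that the constraints of $\Delta_{p_{1,2,12}}$ pin down $r(y)=p(y)$ and the conditional marginals $r(x_i\mid y)$, so $H_r(X_1\mid Y)$ and $H_r(X_2\mid Y)$ are constants over $\Delta_{p_{1,2,12}}$; writing $I_r(X_1;X_2\mid Y) = H_r(X_1\mid Y) + H_r(X_2\mid Y) - H_r(X_1,X_2\mid Y)$ then shows that minimizing it is equivalent to maximizing the joint entropy $H_r(X_1,X_2,Y)$ (which equals $H_r(X_1,X_2\mid Y) + H_p(Y)$) over the polytope $\Delta_{p_{1,2,12}}$ — a concave maximization with linear marginal-matching constraints, solvable by conic solvers after the KL-divergence reformulation used in Appendix~\ref{app:method3}.

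I do not expect a genuine obstacle here: the substantive content of the theorem is the remark $p\in\Delta_{p_{1,2,12}}$, after which the bound is immediate. The only points requiring care are (i) confirming that $S = R - I_p(X_1;X_2;Y)$ holds under the Bertschinger et al.\ definition of PID (already established in the cited references and reproduced in Appendix~\ref{app:method2}), and (ii) tracking precisely which marginals each constraint set fixes, so that the reduction of $\min_r I_r(X_1;X_2\mid Y)$ to a max-entropy problem is valid.
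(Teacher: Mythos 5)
Your proof is correct and follows essentially the same route as the paper's: invoke the PID consistency relation $S = R - I_p(X_1;X_2;Y) = R - I_p(X_1;X_2) + I_p(X_1;X_2|Y)$, observe that $p \in \Delta_{p_{1,2,12}}$ so that $\min_{r\in\Delta_{p_{1,2,12}}} I_r(X_1;X_2|Y) \le I_p(X_1;X_2|Y)$, and then note that the minimization is a concave max-entropy program over linear marginal constraints. The only small difference is in how you exhibit the convex program — you reduce $\min_r I_r(X_1;X_2\mid Y)$ to maximizing $H_r(X_1,X_2,Y)$ over the full polytope $\Delta_{p_{1,2,12}}$, whereas the paper rewrites it as maximizing $H_r(X_1\mid X_2,Y)$ and drops the $p(x_2,y)$ constraint; your formulation is arguably cleaner and avoids that constraint-removal step.
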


\begin{proof}
By consistency equation (\ref{eqn:const5}), $S = R - I_p(X_1;X_2;Y) = R - I_p(X_1;X_2) + I_p(X_1;X_2|Y)$.
Note that $R$ can be computed exactly based on $p(x_1, y)$, $p(x_2, y)$, and $I_p(X_1;X_2)$ can be computed exactly based on $p(x_1,x_2)$, but $I_r(X_1;X_2|Y)$ requires knowledge of the full distribution $p(x_1,x_2,y)$ to compute. We will instead lower bound the conditional mutual information $I_p(X_1,X_2|Y)$ by computing its minimum with respect to a set of auxiliary distributions $r \in \Delta_{p_{1,2,12}}$ that match both unimodal marginals $r(x_i,y) = p(x_i,y)$ and modality marginals $r(x_1,x_2) = p(x_1,x_2)$, which yields a lower bound on synergy. We obtain:
\begin{align}
    \min_{r \in \Delta_{p_{1,2,12}}} I_r(X_1;X_2|Y) &= \min_{r \in \Delta_{p_{1,2,12}}} H_r(X_1) - I_r(X_1;Y) - H_r(X_1|X_2,Y) \\
    &= H_p(X_1) - I_p(X_1;Y) - \max_{r \in \Delta_{p_{1,12}}} H_r(X_1|X_2,Y)
\end{align}
where in the last line the $p_2$ constraint is removed since $H_r(X_1|X_2,Y)$ is fixed with respect to $p(x_2,y)$. To solve $\max_{r \in \Delta_{p_{1,12}}} H_r(X_1|X_2,Y)$, we observe that it is also a concave maximization problem with linear constraints. When $\mathcal{X}_i$ and $\mathcal{Y}$ are small and discrete, we can represent all valid distributions $r(x_1,x_2,y)$ as a set of tensors $R$ of shape $|\mathcal{X}_1| \times |\mathcal{X}_2| \times |\mathcal{Y}|$ with each entry representing $R[i,j,k] = p(X_1=i,X_2=j,Y=k)$. The problem then boils down to optimizing over valid tensors $R \in \Delta_{p_{1,12}}$ that match the marginals $p(x_1,y)$ and $p(x_1,x_2)$. Given a tensor $R$ representing $r$, our objective is the concave function $H_r(X_1 | X_2, Y)$ which we rewrite as a KL-divergence $\log |\mathcal{X}_1| - KL(r||\tilde{r})$ using an auxiliary distribution $\tilde{r} = r(x_2,y) \cdot \frac{1}{|\mathcal{X}_1|}$ and solve it exactly using convex programming with linear constraints:
\begin{align}
    \argmax_{R,\tilde{R}} KL(R||\tilde{R}), \quad \textrm{s.t.} \quad &\tilde{R}(x_1, x_2, y) = R(x_2,y) / |\mathcal{Y}|, \\
    &\sum_{x_2} R = p(x_1,y), \sum_{y} R = p(x_1,x_2), R \ge 0, \sum_{x_1,x_2,y} R = 1.
\end{align}
with marginal constraints $R \in \Delta_{p_{1,12}}$ enforced through linear constraints on tensor $R$. Plugging the optimized $r^*$ into (\ref{eqn:lower_connections_app}) yields the desired lower bound $\lowa = R - I_p(X_1;X_2) + I_{r^*}(X_1;X_2|Y)$.
\end{proof}

\subsection{Lower bound on synergy via uniqueness (Theorem~\ref{thm:lower_disagreement})}
\label{app:lower2}

We first restate some notation and definitions from the main text for completeness. The key insight behind Theorem~\ref{thm:lower_disagreement}, a relationship between synergy and uniqueness, is that while labeled multimodal data is unavailable, the output of unimodal classifiers may be compared against each other. Let $\simplex_\mathcal{Y} = \{ r \in \mathbb{R}_+^{|\mathcal{Y}|} \ | \ ||r||_1 = 1 \} $ be the probability simplex over labels $\mathcal{Y}$. Consider the set of unimodal classifiers $\mathcal{F}_i \ni f_i: \mathcal{X}_i \rightarrow \delta_\mathcal{Y}$ and multimodal classifiers $\mathcal{F}_M \ni f_M: \mathcal{X}_1 \times \mathcal{X}_2 \rightarrow \delta_\mathcal{Y}$.

\begin{definition}
    (Unimodal and multimodal loss) The loss of a given unimodal classifier $f_i \in \mathcal{F}_i$ is given by $L(f_i) = \mathbb{E}_{p(x_i,y)} \left[ \ell( f_i(x_i), y) \right]$ for a loss function over the label space $\ell: \mathcal{Y} \times \mathcal{Y} \rightarrow \mathbb{R}^{\ge0}$. We denote the same for multimodal classifier $f_M \in \mathcal{F}_M$, with a slight abuse of notation $L(f_M) = \mathbb{E}_{p(x_1,x_2,y)} \left[ \ell( f_M(x_1, x_2), y) \right]$ for a loss function over the label space $\ell$.
\end{definition}

\begin{definition}
    (Unimodal and multimodal accuracy) The accuracy of a given unimodal classifier $f_i \in \mathcal{F}_i$ is given by $P_\textrm{acc} (f_i) = \mathbb{E}_p \left[ \mathbf{1} \left[ f_i(x_i) = y  \right] \right]$. We denote the same for multimodal classifier $f_M \in \mathcal{F}_M$, with a slight abuse of notation $P_\textrm{acc} (f_M) = \mathbb{E}_p \left[ \mathbf{1} \left[ f_M(x_1, x_2) = y  \right] \right]$.
\end{definition}
An unimodal classifier $f_i^*$ is Bayes-optimal (or simply optimal) with respect to a loss function $L$ if $L(f_i^*) \le L(f'_i)$ for all $f'_i \in \mathcal{F}_i$. Similarly, a multimodal classifier $f_M^*$ is optimal with respect to loss $L$ if $L(f_M^*) \le L(f'_M)$ for all $f'_M \in \mathcal{F}_M$. 

Bayes optimality can also be defined with respect to accuracy, if $P_\textrm{acc} (f_i^*) \ge P_\textrm{acc} (f'_i)$ for all $f'_i \in \mathcal{F}_i$ for unimodal classifiers, or if $P_\textrm{acc} (f_M^*) \ge P_\textrm{acc} (f'_M)$ for all $f'_M \in \mathcal{F}_M$ for multimodal classifiers.

The crux of our method is to establish a connection between \textit{modality disagreement} and a lower bound on synergy.
\begin{definition}
    (Modality disagreement) Given $X_1$, $X_2$, and a target $Y$, as well as unimodal classifiers $f_1$ and $f_2$, we define modality disagreement as $\alpha(f_1,f_2) = \mathbb{E}_{p(x_1,x_2)} [d(f_1,f_2)]$ where $d: \mathcal{Y} \times \mathcal{Y} \rightarrow \mathbb{R}^{\ge0}$ is a distance function in label space scoring the disagreement of $f_1$ and $f_2$'s predictions,
\end{definition}
where the distance function $d$ must satisfy some common distance properties, following~\citet{sridharan2008information}:
\begin{assumption}
\label{ass:tri}
    (Relaxed triangle inequality) For the distance function $d: \mathcal{Y} \times \mathcal{Y} \rightarrow \mathbb{R}^{\ge0}$ in label space scoring the disagreement of $f_1$ and $f_2$'s predictions, there exists $c_d \ge 1$ such that
    \begin{align}
        \forall \hat{y}_1, \hat{y}_2, \hat{y}_3 \in \hat{\mathcal{Y}}, \quad d(\hat{y}_1, \hat{y}_2) \le c_d \left( d(\hat{y}_1, \hat{y}_3) + d(\hat{y}_3, \hat{y}_2) \right).
    \end{align}
\end{assumption}

\begin{assumption}
\label{ass:lip}
    (Inverse Lipschitz condition) For the function $d$, it holds that for all $f$,
    \begin{equation}
        \mathbb{E} [d(f(x_1,x_2), f^*(x_1,x_2))] \le |L(f)- L(f^*)|
    \end{equation}
    where $f^*$ is the Bayes optimal multimodal classifier with respect to loss $L$, and 
    \begin{equation}
        \mathbb{E} [d(f_i(x_i), f_i^*(x_i))] \le |L(f_i)- L(f_i^*)|
    \end{equation}
    where $f_i^*$ is the Bayes optimal unimodal classifier with respect to loss $L$.
\end{assumption}

\begin{assumption}
\label{ass:opt}
    (Classifier optimality) For any unimodal classifiers $f_1,f_2$ in comparison to the Bayes' optimal unimodal classifiers $f_1^*,f_2^*$, there exists constants $\epsilon_1,\epsilon_2>0$ such that
    \begin{align}
        | L(f_1) - L(f_1^*) |^2 \le \epsilon_1, | L(f_2) - L(f_2^*) |^2 \le \epsilon_2
    \end{align}
\end{assumption}
We now restate Theorem~\ref{thm:lower_disagreement} from the main text obtaining $\lowb$, our second lower bound on synergy linking synergy to uniqueness:
\begin{theorem}
\label{thm:lower_disagreement_app}
    (Lower-bound on synergy via uniqueness, same as Theorem~\ref{thm:lower_disagreement}) We can relate synergy $S$ and uniqueness $U$ to modality disagreement $\alpha(f_1,f_2)$ of optimal unimodal classifiers $f_1,f_2$ as follows:
    \begin{align}
    \label{eqn:lower_disagreement_app}
        \lowb = \alpha(f_1,f_2) \cdot c - \max(U_1,U_2) \le S
    \end{align}
    for some constant $c$ depending on the label dimension $|\mathcal{Y}|$ and choice of label distance function $d$.
\end{theorem}
Theorem~\ref{thm:lower_disagreement_app} implies that if there is substantial disagreement between the unimodal classifiers $f_1$ and $f_2$, it must be due to the presence of unique or synergistic information. If uniqueness is small, then disagreement must be accounted for by the presence of synergy, which yields a lower bound.
\begin{proof}
The first part of the proof is due to an intermediate result by~\citet{sridharan2008information}, which studies how multi-view agreement can help train better multiview classifiers. We restate the key proof ideas here for completeness. The first step is to relate $I_p(X_2;Y|X_1)$ to $| L(f_1^*) - L(f^*) |^2$, the difference in errors between the Bayes' optimal unimodal classifier $f_1^*$ with the Bayes' optimal multimodal classifier $f^*$ for some appropriate loss function $L$ on the label space:
\begin{align}
    | L(f_1^*) - L(f^*) |^2 &= | \mathbb{E}_X \mathbb{E}_{Y|X_1,X_2} \ell (f^*(x_1,x_2), y) - \mathbb{E}_X \mathbb{E}_{Y|X_1} \ell (f^*(x_1,x_2), y) |^2 \\
    &\le | \mathbb{E}_{Y|X_1,X_2} \ell (f^*(x_1,x_2), y) - \mathbb{E}_{Y|X_1} \ell (f^*(x_1,x_2), y) |^2 \\
    &\le \textrm{KL} (p(y|x_1,x_2), p(y|x_1) ) \label{eqn:pinsker}  \\
    &\le \mathbb{E}_X \textrm{KL} (p(y|x_1,x_2), p(y|x_1) )  \label{eqn:jensen} \\
    &= I_p(X_2;Y|X_1),
\end{align}
where we used Pinsker's inequality in (\ref{eqn:pinsker}) and Jensen's inequality in (\ref{eqn:jensen}). Symmetrically, $| L(f_2^*) - L(f^*) |^2 \le I_p(X_1;Y|X_2)$, and via the triangle inequality through the Bayes' optimal multimodal classifier $f^*$ and the inverse Lipschitz condition we obtain
\begin{align}
    \mathbb{E}_{p(x_1,x_2)} [d(f_1^*,f_2^*)] &\le \mathbb{E}_{p(x_1,x_2)} [d(f_1^*,f^*)] + \mathbb{E}_{p(x_1,x_2)} [d(f^*,f_2^*)] \\
    &\le | L(f_1^*) - L(f^*) |^2 + | L(f_2^*) - L(f^*) |^2 \\
    &\le I_p(X_2;Y|X_1) + I_p(X_1;Y|X_2).
\end{align}
Next, we relate disagreement $\alpha(f_1,f_2)$ to $I_p(X_2;Y|X_1)$ and $I_p(X_1;Y|X_2)$ via the triangle inequality through the Bayes' optimal unimodal classifiers $f_1^*$ and $f_2^*$:
\begin{align}
    \alpha(f_1,f_2) &= \mathbb{E}_{p(x_1,x_2)} [d(f_1,f_2)] \\
    &\le c_d \left( \mathbb{E}_{p(x_1,x_2)} [d(f_1,f_1^*)] + \mathbb{E}_{p(x_1,x_2)} [d(f_1^*,f_2^*)] + \mathbb{E}_{p(x_1,x_2)} [d(f_2^*,f_2)] \right) \\
    &\le c_d \left( \epsilon_1' + I_p(X_2;Y|X_1) + I_p(X_1;Y|X_2) + \epsilon_2' \right) \label{eqn:opt} \\
    &\le 2 c_d (\max(I_p(X_1;Y|X_2), I_p(X_2;Y|X_1)) + \max(\epsilon_1', \epsilon_2'))
\end{align}
where used classifier optimality assumption for unimodal classifiers $f_1, f_2$ in (\ref{eqn:opt}). Finally, we use consistency equations of PID relating $U$ and $S$ in (\ref{eqn:const3})-(\ref{eqn:const4}): to complete the proof:
\begin{align}    
    \alpha(f_1,f_2) &\le 2 c_d (\max(I_p(X_1;Y|X_2), I_p(X_2;Y|X_1)) + \max(\epsilon_1', \epsilon_2')) \\
    &= 2 c_d (\max(U_1+S, U_2+S) + \max(\epsilon_1', \epsilon_2')) \\
    &= 2 c_d (S + \max(U_1, U_2) + \max(\epsilon_1', \epsilon_2')),
\end{align}
In practice, setting $f_1$ and $f_2$ as neural network function approximators that can achieve the Bayes' optimal risk~\citep{hornik1989multilayer} results in $\max(\epsilon_1', \epsilon_2') = 0$, and rearranging gives us the desired inequality.
\end{proof}


\subsection{Proof of NP-hardness (Theorem~\ref{thm:min-entropy-np-hard})}
\label{app:nphard}

Our proof is based on a reduction from the restricted timetable problem, a well-known scheduling problem closely related to constrained edge coloring in bipartite graphs.
Our proof description proceeds along 4 steps.
\begin{enumerate}
    \item Description of our problem.
    \item How the minimum entropy objective can engineer ``classification'' problems using a technique from \citet{kovavcevic2015entropy}.
    \item Description of the RTT problem of \citet{even1975complexity}, how to visualize RTT as a bipartite edge coloring problem, and a simple variant we call $Q$-RTT which RTT reduces to.
    \item Polynomial reduction of $Q$-RTT to our problem.
\end{enumerate}
\subsubsection{Formal description of our problem}
    Recall that our problem was
    $$\min_{r \in \Delta_{p_{1,2,12}}} H_r(X_1, X_2, Y)$$
    where
    $\Delta_{p_{1,2,12}} = \{ r \in \Delta: r(x_1,x_2)=p(x_1,x_2), r(x_i,y)=p(x_i,y) \}$. \footnote{Strictly speaking, the marginals $p(x_1, x_2)$ and $p(x_i, y)$ ought to be rational. This is not overly restrictive, since in practice these marginals often correspond to empirical distributions which would naturally be rational.}
    Our goal is to find the minimum-entropy distribution over $\mathcal{X}_1 \times \mathcal{X}_2 \times \mathcal{Y}$ where the pairwise marginals over $(X_1, X_2)$, $(X_1, Y)$ and $(X_2, Y)$ are specified as part of the problem. Observe that this description is symmetrical, $X_i$ and $Y$ could be swapped without loss of generality. 
    
\subsubsection{Warm up: using the min-entropy objective to mimic multiclass classification}
    We first note the strong similarity of our min-entropy problem to the classic \textit{min-entropy coupling problem} in two variables. There where the goal is to find the min-entropy joint distribution over $\mathcal{X} \times \mathcal{Y}$ given fixed marginal distributions of $p(x)$ and $p(y)$. This was shown to be an NP-hard problem which has found many practical applications in recent years. An approximate solution up to $1$ bit can be found in polynomial time (and is in fact the same approximation we give to our problem). Our NP-hardness proof involves has a similar flavor as~\citet{kovavcevic2015entropy}, which is based on a reduction from the classic subset sum problem, exploiting the min-entropy objective to enforce discrete choices.
    
    \paragraph{Subset sum} There are $d$ items with value $c_1 \dots c_d \geq 0$, which we assume WLOG to be normalized such that $\sum_i^d c_i = 1$. 
    Our target sum is $0 \leq T \leq 1$. The goal is to find if some subset $\mathcal{S} \subseteq [d]$ exists such that $\sum_{i \in \mathcal{S}} c_i = T$.
    
    \paragraph{Reduction from subset sum to min-entropy coupling~\citep{kovavcevic2015entropy}} Let $\mathcal{X}$ be the $d$ items and $\mathcal{Y}$ be binary, indicating whether the item was chosen. Our joint distribution is of size $|\mathcal{X}| \times |\mathcal{Y}|$. We set the following constraints on marginals. 
    \begin{enumerate}[label=(\roman*)]
    \item $p(x_{i}) = c_i$ for all $i$, (row constraints)
    \item $p(\text{include})=T$, $p(\text{omit})=1-T$, (column constraints)
    \end{enumerate}
    Constraints (i) split the value of each item additively into nonnegative components to be included and not included from our chosen subset, while (ii) enforces that the items included sum to $T$. Observe that the min-entropy objective $H(X,Y) = H(Y|X)+H(X)$, which is solely dependent on $H(Y|X)$ since $H(X)$ is a constant given marginal constraints on $X$. Thus, $H(Y|X)$ is nonnegative and is only equal to 0 \textit{if and only if} $Y$ is deterministic given $X$, i.e., $r(x_i, \text{include}) = 0$ or $r(x_i, \text{omit}) = 0$. If our subset sum problem has a solution, then this instantiation of the min-entropy coupling problem would return a deterministic solution with $H(Y|X)=0$, which in turn corresponds to a solution in subset sum. Conversely, if subset sum has no solution, then our min-entropy coupling problem is either infeasible OR gives solutions where $H(Y|X) > 0$ strictly, i.e., $Y|X$ is non-deterministic, which we can detect and report.
    
\paragraph{Relationship to our problem} 
    Observe that our joint entropy objective may be decomposed
    $$H_r(X_1, X_2, Y) = H_r(Y|X_1, X_2) + H_r(X_1, X_2).$$
    Given that $p(x_1, x_2)$ is fixed under $\Delta_{p_{1,2,12}}$, our objective is equivalent to minimizing $H_r(Y| X_1, X_2)$. 
    Similar to before, we know that $H_r(Y| X_1, X_2)$ is nonnegative and equal to zero if and only if $Y$ is deterministic given $(X_1, X_2)$. 
    
    Intuitively, we can use $\mathcal{X}_1, \mathcal{X}_2$ to represent vertices in a bipartite graph, such that $(X_1, X_2)$ are edges (which may or may not exist), and $\mathcal{Y}$ as colors for the edges. Then, the marginal constraints for $p(x_1, x_2)$ could be used alongside the min-entropy objective to ensure that each edge has exactly one color. The marginal constraints $p(x_1, y)$ and $p(x_2, y)$ tell us (roughly speaking) the number of edges of each color that is adjacent to vertices in $\mathcal{X}_1$ and $\mathcal{X}_2$.
    
    However, this insight alone is not enough; first, edge coloring problems in bipartite graphs (e.g., colorings in regular bipartite graphs) can be solved in polynomial time, so we need a more difficult problem. Second, we need an appropriate choice of marginals for $p(x_i, y)$ that does not immediately `reveal' the solution. Our proof uses a reduction from the \textit{restricted timetable problem}, one of the most primitive scheduling problems available (and closely related to edge coloring or multicommodity network flow).
\subsubsection{Restricted Timetable Problem (RTT)}
    The restricted timetable (RTT) problem was introduced by \citet{even1975complexity}, and has to do with how to schedule teachers to classes they must teach. It comprises the following
    \begin{itemize}[noitemsep]
        \item A collection of $\{ T_1, \dots, T_n \}$, where $T_i \subseteq [3]$. These represent $n$ teachers, each of which is available for the hours given in $T_i$.
        \item $m$ students, each of which is available at any of the $3$ hours
        \item An binary matrix $\{ 0, 1\}^ {n \times m}$. $R_{ij} = 1$ if teacher $i$ is required to teach class $j$, and $0$ otherwise. Since $R_{ij}$ is binary, each class is taught by a teacher \textit{at most once}.
        \item Each teacher is \textit{tight}, i.e., $|T_i| = \sum_{j=1}^m R_{ij}$. That is, every teacher must teach whenever they are available.
    \end{itemize}
    Suppose there are exactly 3 hours a day. The problem is to determine if there exists a meeting function 
    $$ f: [n] \times [m] \times [3] \rightarrow \{ 0, 1\}, $$
    where our goal is to have $f(i,j,h) = 1$ if and only if teacher $i$ teaches class $j$ at the $h$-th hour. We require the following conditions in our meeting function:
    \begin{enumerate}[noitemsep]
        \item $f(i,j,h)=1 \implies h \in T_i$. This implies that teachers are only teaching in the hours they are available.
        \item $\sum_{h \in [3]} f(i,j,h) = R_{ij}$ for all $i \in [n], j\in[m]$. This ensures that every class gets the teaching they are required, as specified by $R$.
        \item $\sum_{i \in [n]} f(i,j,h) \leq 1$ for all $j \in [m]$ and $h \in [3]$. This ensures no class is taught by more than one teachers at once.
        \item $\sum_{j \in [m]} f(i,j,h) \leq 1$ for all $i \in [n]$ and $h \in [3]$. This ensures no teacher is teaching more than one class simultaneously.
    \end{enumerate}
    \citet{even1975complexity} showed that RTT is NP-hard via a clever reduction from 3-SAT. Our strategy is to reduce RTT to our problem.
    \usetikzlibrary{
  calc,  
  cd     
}
\tikzset{
  node split radius/.initial=1,
  node split color 1/.initial=red,
  node split color 2/.initial=green,
  node split color 3/.initial=blue,
  node split half/.style={node split={#1,#1+180}},
  node split/.style args={#1,#2}{
    path picture={
      \tikzset{
        x=($(path picture bounding box.east)-(path picture bounding box.center)$),
        y=($(path picture bounding box.north)-(path picture bounding box.center)$),
        radius=\pgfkeysvalueof{/tikz/node split radius}}
      \foreach \ang[count=\iAng, remember=\ang as \prevAng (initially #1)] in {#2,360+#1}
        \fill[line join=round, draw, fill=\pgfkeysvalueof{/tikz/node split color \iAng}]
          (path picture bounding box.center)
          --++(\prevAng:\pgfkeysvalueof{/tikz/node split radius})
          arc[start angle=\prevAng, end angle=\ang] --cycle;
} } }
\colorlet{hour1}{red!50}
\colorlet{hour2}{green!50}
\colorlet{hour3}{cyan!50}
\colorlet{hour-null}{black!50}
\tikzset{
    hold1/.style={draw=hour1, ultra thick},
    hold2/.style={draw=hour2, ultra thick},
    hold3/.style={draw=hour3, ultra thick}
}
\begin{figure}[ht]
\tikzset{
    teach1/.style={node split color 1=hour1, node split color 2=hour2, node split half=45},
    teach2/.style={node split color 1=hour1, node split color 2=hour3, node split half=45},
    teach3/.style={fill=hour3}
}
\centering
\begin{subfigure}[t]{0.21 \textwidth}
\centering
\begin{tikzpicture}
  \foreach \x in {1,...,3}{
    \node[circle, draw, minimum size=0.8cm, teach\x
    ] (T\x) at (0,-\x) {${\x}$};
  }

  \foreach \y in {1,2}{
    \node[circle, draw, minimum size=0.8cm] (C\y) at (2.1,-\y-0.5) {${\y}$};
  }

    \draw[hour2, very thick] (T1) -- (C1);
    \draw[hour1, very thick] (T1) -- (C2);
    \draw[hour1, very thick] (T2) -- (C1);
    \draw[hour3, very thick] (T2) -- (C2);
    \draw[hour3, very thick] (T3) -- (C1);

\end{tikzpicture}
\caption{Valid coloring}
\label{fig:valid-coloring-orig}
\end{subfigure}
\hspace{5.0pt}
\begin{subfigure}[t]{0.21 \textwidth}
\centering
\begin{tikzpicture}
  \foreach \x in {1,...,3}{
    \node[circle, draw, minimum size=0.8cm, teach\x
    ] (T\x) at (0,-\x) {${\x}$};
  }
  
  \foreach \y in {1,2}{
    \node[circle, draw, minimum size=0.8cm] (C\y) at (2.1,-\y-0.5) {${\y}$};
  }

    \draw[hour1, very thick] (T1) -- (C1);
    \draw[hour2, very thick] (T1) -- (C2);
    \draw[hour3, very thick] (T2) -- (C1);
    \draw[hour1, very thick] (T2) -- (C2);
    \draw[hour3, very thick] (T3) -- (C1);
  
\end{tikzpicture}
\caption{Invalid: class 1 used repeated color (blue)}
\label{fig:fail-coloring-repeated-class-orig}
\end{subfigure}
\hspace{5.0pt}
\begin{subfigure}[t]{0.21 \textwidth}
\centering
\begin{tikzpicture}
  \foreach \x in {1,...,3}{
    \node[circle, draw, minimum size=0.8cm, teach\x
    ] (T\x) at (0,-\x) {${\x}$};
  }
  
  \foreach \y in {1,2}{
    \node[circle, draw, minimum size=0.8cm] (C\y) at (2.1,-\y-0.5) {${\y}$};
  }

    \draw[hour2, very thick] (T1) -- (C1);
    \draw[hour2, very thick] (T1) -- (C2);
    \draw[hour1, very thick] (T2) -- (C1);
    \draw[hour3, very thick] (T2) -- (C2);
    \draw[hour3, very thick] (T3) -- (C1);

\end{tikzpicture}
\caption{Invalid: teacher 1 used repeated color (green)}
\label{fig:fail-coloring-repeated-teacher-orig}
\end{subfigure}
\hspace{5.0pt}
\begin{subfigure}[t]{0.21 \textwidth}
\centering
\begin{tikzpicture}
  \foreach \x in {1,...,3}{
    \node[circle, draw, minimum size=0.8cm, teach\x
    ] (T\x) at (0,-\x) {${\x}$};
  }

  \foreach \y in {1,2}{
    \node[circle, draw, minimum size=0.8cm] (C\y) at (2.1,-\y-0.5) {${\y}$};
  }

\draw[hour2, very thick] (T1) -- (C1);
    \draw[hour1, very thick] (T1) -- (C2);
    \draw[hour1, very thick] (T2) -- (C1);
    \draw[hour2, very thick] (T2) -- (C2);
    \draw[hour3, very thick] (T3) -- (C1);

\end{tikzpicture}
\caption{Invalid: teacher 2 used invalid color (green)}
\label{fig:fail-coloring-invalid-color-orig}
\end{subfigure}
\caption{Examples of valid and invalid colorings. Left vertices are teachers 1, 2, 3. Right vertices are classes 1, 2. The colors red, green, blue are for hours 1, 2, 3 respectively, color of teacher vertices are the hours where the teachers are available (by definition of RTT, the number of distinct colors per teacher vertex is equal to its degree). The color of an edge (red, green or blue) says that a teacher is assigned to that class at that hour. Figure~\ref{fig:valid-coloring-orig} shows a valid coloring (or timetabling), since (i) all edges are colored, (ii) no edge of the same colors are adjacent, and (iii) edges adjacent to teachers correspond to the vertex's color. Figures~\ref{fig:fail-coloring-repeated-class-orig}, \ref{fig:fail-coloring-repeated-teacher-orig}, \ref{fig:fail-coloring-invalid-color-orig} are invalid colorings because of same-colored edges being adjacent, or teacher vertex colors differing to adjacent edges.}
\label{fig:main-coloring-orig}
\end{figure}
    \paragraph{Viewing RTT through the lens of bipartite edge coloring} 
    RTT can be visualized as a variant of constrained edge coloring in bipartite graphs (Figure~\ref{fig:main-coloring-orig}). The teachers and classes are the two different sets of vertices, while $R$ gives the adjacency structure. There are 3 colors available, corresponding to hours in a day. The task is to color the edges of the graph with these $3$ colors such that
    \begin{enumerate}[noitemsep]
        \item No two edges of the same color are adjacent. This ensures students and classes are at most teaching/taking one session at any given hour (condition 3 and 4)
        \item Edges adjacent to teacher $i$ are only allowed colors in $T_i$. This ensures teachers are only teaching in available hours (condition 1)
    \end{enumerate}
    If every edge is colored while obeying the above conditions, then it follows from the tightness of teachers (in the definition of RTT) that every class is assigned their required lessons (condition 2). The decision version of the problem is to return if such a coloring is possible.
    \paragraph{Time Constrained RTT ($Q$-RTT)} A variant of RTT that will be useful is when we impose restrictions on the number of classes being taught at any each hour. We call this $Q$-RTT, where $Q = (q_1,q_2,q_3) \in \mathbb{Z}^3$. $Q$-RTT returns true if, in addition to the usual RTT conditions, we require the meeting function to satisfy
    $$ \sum_{i \in [n],j \in [m]} f(i,j,h) = q_h.$$
    That is, the total number of hours taught by teachers in hour $h$ is exactly $q_h$.
    From the perspective of edge coloring, $Q$-RTT simply imposes an additional restriction on the total number of edges of each color, i.e., there are $q_k$ edges of color $k$ for each $k\in[3]$. 
    
    Obviously, RTT can be Cook reduced to $Q$-RTT: since there are only $3$ hours and a 
    total of $g = \sum_{i \in [n],j \in [m]} R_{ij}$ total lessons to be taught, there are at most $\mathcal{O}(g^2)$ ways of splitting the required number of lessons up amongst the 3 hours. Thus, we can solve RTT by making at most $\mathcal{O}(g^2)$ calls to $Q$-RTT. This is polynomial in the size of RTT, and we conclude $Q$-RTT is NP-hard.
\subsubsection{Reduction of $Q$-RTT to our problem} 
    We will reduce $Q$-RTT to our problem.
    Let $\alpha = 1/\left(\sum_{i,j} R_{ij} + 3m \right)$, where $1/\alpha$ should be seen as a normalizing constant given by the number of edges in a bipartite graph. One should think of $\alpha$ as an indicator of the boolean \textsc{TRUE} and $0$ as \textsc{FALSE}. 
    We use the following construction
    \begin{enumerate}
    \item Let $\mathcal{X}_1 = [n] \cup \mathcal{Z}$, where $\mathcal{Z} = \{Z_1, Z_2, Z_3\}$. From a bipartite graph interpretation, these form one set of vertices that we will match to classes. $Z_1, Z_2, Z_3$ are ``holding rooms'', one for each of the 3 hours. Holding rooms are like teachers whose classes can be assigned in order to pass the time. They will not fulfill any constraints on $R$, but they \textit{can} accommodate multiple classes at once.
    We will explain the importance of these holding rooms later. 
    \item Let $\mathcal{X}_2 = [m]$. These form the other set of vertices, one for each class. 
    \item Let $\mathcal{Y} = [3] \cup \{ 0 \}$. 1, 2, and 3 are the 3 distinct hours, corresponding to edge colors. 0 is a special ``null'' color which will only be used when coloring edges adjacent to the holding rooms.
    \item Let $p(i,j, \cdot )= \alpha \cdot R_{ij}$ and $p(i, j) = \alpha$ for all $i \in \mathcal{Z} $, $j \in [m]$. Essentially, there is an edge between a teacher and class if $R$ dictates it. There are also \textit{always} edges from every holding room to each class.
    \item For $i \in [n]$, set $p(i, \cdot, h) = \alpha$ if $h \in T_i$, $0$ otherwise. For $Z_i \in \mathcal{Z}$, we set 
    $$p(Z_i, \cdot, h)=
    \begin{cases}
    \alpha \cdot q_i   \quad & h = 0 \\
    \alpha \cdot (m-q_i) \quad & h = i \\
    0 \quad & \text{otherwise} 
    \end{cases}
    $$ 
    In order words, at hour $h$, when a class is not assigned to some teacher (which would to contribute to $q_h$), they must be placed in holding room $Z_h$.
    \item Let $p(\cdot, j, h) = \alpha$ for $h \in [3]$, and $p(\cdot, j, h) = \alpha \cdot \sum_{i \in [n]} R_{i, j}$. The former constraint means that for each of the 3 hours, the class must be taking some lesson with a teacher OR in the holding room. The second constraint assigns the special ``null'' value to the holding rooms which were not used by that class.
    \end{enumerate} 

    
\usetikzlibrary{
  calc,  
  cd     
}
\tikzset{
  node split radius/.initial=1,
  node split color 1/.initial=red,
  node split color 2/.initial=green,
  node split color 3/.initial=blue,
  node split half/.style={node split={#1,#1+180}},
  node split/.style args={#1,#2}{
    path picture={
      \tikzset{
        x=($(path picture bounding box.east)-(path picture bounding box.center)$),
        y=($(path picture bounding box.north)-(path picture bounding box.center)$),
        radius=\pgfkeysvalueof{/tikz/node split radius}}
      \foreach \ang[count=\iAng, remember=\ang as \prevAng (initially #1)] in {#2,360+#1}
        \fill[line join=round, draw, fill=\pgfkeysvalueof{/tikz/node split color \iAng}]
          (path picture bounding box.center)
          --++(\prevAng:\pgfkeysvalueof{/tikz/node split radius})
          arc[start angle=\prevAng, end angle=\ang] --cycle;
} } }
\colorlet{hour1}{red!50}
\colorlet{hour2}{green!50}
\colorlet{hour3}{cyan!50}
\colorlet{hour-null}{black!50}
\tikzset{
    hold1/.style={draw=hour1, ultra thick},
    hold2/.style={draw=hour2, ultra thick},
    hold3/.style={draw=hour3, ultra thick}
}

\begin{figure}[ht]
\tikzset{
    teach1/.style={node split color 1=hour1, node split color 2=hour2, node split half=45},
    teach2/.style={node split color 1=hour1, node split color 2=hour3, node split half=45},
    teach3/.style={fill=hour3}
}
\centering
\begin{subfigure}[t]{0.21 \textwidth}
\centering
\begin{tikzpicture}
  \foreach \x in {1,...,3}{
    \node[circle, draw, minimum size=0.8cm, teach\x
    ] (T\x) at (0,-\x) {${\x}$};
  }
  
  \foreach \z in {1,...,3}{
    \node[circle, draw, minimum size=0.8cm, hold\z
    ] (Z\z) at (0,-\z-3) {$Z_{\z}$};
  }
  
  \foreach \y in {1,2}{
    \node[circle, draw, minimum size=0.8cm] (C\y) at (2.1,-\y-2) {${\y}$};
  }

    \draw[hour2, very thick] (T1) -- (C1);
    \draw[hour1, very thick] (T1) -- (C2);
    \draw[hour1, very thick] (T2) -- (C1);
    \draw[hour3, very thick] (T2) -- (C2);
    \draw[hour3, very thick] (T3) -- (C1);

    \draw[hour-null, very thick] (Z1) -- (C1);
    \draw[hour-null, very thick] (Z1) -- (C2);
    \draw[hour-null, very thick] (Z2) -- (C1);
    \draw[hour2, very thick] (Z2) -- (C2);
    \draw[hour-null, very thick] (Z3) -- (C1);
    \draw[hour-null, very thick] (Z3) -- (C2);
\end{tikzpicture}
\caption{Valid coloring}
\label{fig:valid-coloring}
\end{subfigure}
\hspace{5.0pt}
\begin{subfigure}[t]{0.21 \textwidth}
\centering
\begin{tikzpicture}
  \foreach \x in {1,...,3}{
    \node[circle, draw, minimum size=0.8cm, teach\x
    ] (T\x) at (0,-\x) {${\x}$};
  }
  
  \foreach \z in {1,...,3}{
    \node[circle, draw, minimum size=0.8cm, hold\z
    ] (Z\z) at (0,-\z-3) {$Z_{\z}$};
  }
  
  \foreach \y in {1,2}{
    \node[circle, draw, minimum size=0.8cm] (C\y) at (2.1,-\y-2) {${\y}$};
  }

    \draw[hour1, very thick] (T1) -- (C1);
    \draw[hour2, very thick] (T1) -- (C2);
    \draw[hour3, very thick] (T2) -- (C1);
    \draw[hour1, very thick] (T2) -- (C2);
    \draw[hour3, very thick] (T3) -- (C1);
  
    \draw[hour-null, very thick] (Z1) -- (C1);
    \draw[hour-null, very thick] (Z1) -- (C2);
    \draw[hour2, very thick] (Z2) -- (C1);
    \draw[hour-null, very thick] (Z2) -- (C2);
    \draw[hour-null, very thick] (Z3) -- (C1);
    \draw[hour3, very thick] (Z3) -- (C2);
\end{tikzpicture}
\caption{Invalid: class 1 used repeated color (blue)}
\label{fig:fail-coloring-repeated-class}
\end{subfigure}
\hspace{5.0pt}
\begin{subfigure}[t]{0.21 \textwidth}
\centering
\begin{tikzpicture}
  \foreach \x in {1,...,3}{
    \node[circle, draw, minimum size=0.8cm, teach\x
    ] (T\x) at (0,-\x) {${\x}$};
  }
  
  \foreach \z in {1,...,3}{
    \node[circle, draw, minimum size=0.8cm, hold\z
    ] (Z\z) at (0,-\z-3) {$Z_{\z}$};
  }
  
  \foreach \y in {1,2}{
    \node[circle, draw, minimum size=0.8cm] (C\y) at (2.1,-\y-2) {${\y}$};
  }

    \draw[hour2, very thick] (T1) -- (C1);
    \draw[hour2, very thick] (T1) -- (C2);
    \draw[hour1, very thick] (T2) -- (C1);
    \draw[hour3, very thick] (T2) -- (C2);
    \draw[hour3, very thick] (T3) -- (C1);

    \draw[hour-null, very thick] (Z1) -- (C1);
    \draw[hour1, very thick] (Z1) -- (C2);
    \draw[hour-null, very thick] (Z2) -- (C1);
    \draw[hour-null, very thick] (Z2) -- (C2);
    \draw[hour-null, very thick] (Z3) -- (C1);
    \draw[hour-null, very thick] (Z3) -- (C2);
  
\end{tikzpicture}
\caption{Invalid: teacher 1 used repeated color (green)}
\label{fig:fail-coloring-repeated-teacher}
\end{subfigure}
\hspace{5.0pt}
\begin{subfigure}[t]{0.21 \textwidth}
\centering
\begin{tikzpicture}
  \foreach \x in {1,...,3}{
    \node[circle, draw, minimum size=0.8cm, teach\x
    ] (T\x) at (0,-\x) {${\x}$};
  }
  
  \foreach \z in {1,...,3}{
    \node[circle, draw, minimum size=0.8cm, hold\z
    ] (Z\z) at (0,-\z-3) {$Z_{\z}$};
  }
  
  \foreach \y in {1,2}{
    \node[circle, draw, minimum size=0.8cm] (C\y) at (2.1,-\y-2) {${\y}$};
  }

\draw[hour2, very thick] (T1) -- (C1);
    \draw[hour1, very thick] (T1) -- (C2);
    \draw[hour1, very thick] (T2) -- (C1);
    \draw[hour2, very thick] (T2) -- (C2);
    \draw[hour3, very thick] (T3) -- (C1);

    \draw[hour-null, very thick] (Z1) -- (C1);
    \draw[hour-null, very thick] (Z1) -- (C2);
    \draw[hour-null, very thick] (Z2) -- (C1);
    \draw[hour-null, very thick] (Z2) -- (C2);
    \draw[hour-null, very thick] (Z3) -- (C1);
    \draw[hour3, very thick] (Z3) -- (C2);
  
\end{tikzpicture}
\caption{Invalid: teacher 2 used invalid color (green)}
\label{fig:fail-coloring-invalid-color}
\end{subfigure}
\caption{Examples of valid and invalid colorings when holding rooms are included. For simplicity, we illustrate all constraints except those on $Q$. Left vertices are teachers 1, 2, 3 and holding rooms $Z_1, Z_2, Z_3$. Right vertices are classes 1, 2. The colors red, green, blue are for hours 1, 2, 3 respectively, color of teacher vertices are the hours where the teachers are available (by definition of RTT, the number of distinct colors per teacher vertex is equal to its degree). Border color of holding room vertices are the hour that the holding room is available. The color of an edge (red, green or blue) says that a teacher (or holding room) is assigned to that class at that hour. Gray edges are the ``null'' color, meaning that that waiting room is not used by that class. Figure~\ref{fig:valid-coloring} shows a valid coloring (or timetabling), since all edges are colored, no edge of the same colors are adjacent (other than the gray ones), and edges adjacent to teachers correspond to the vertex's color. Figures~\ref{fig:fail-coloring-repeated-class}, \ref{fig:fail-coloring-repeated-teacher}, \ref{fig:fail-coloring-invalid-color} are invalid colorings because of non-gray edges being adjacent, or teacher vertices being adjacent to colors different from itself.}
\end{figure}
    \paragraph{A solution to our construction with 0 conditional entropy implies a valid solution to $Q$-RTT} Suppose that our construction returns a distribution $r$ such that every entry $r(x_1,x_2,y)$ is either $\alpha$ or $0$. 
    We claim that the meeting function $f(i,j,h)=1$ if $r(i,j,h)=\alpha$ and $0$ otherwise solves $Q$-RTT. 
    \begin{itemize}
        \item Teachers are only teaching in the hours they are available, because of our marginal constraint on $p(i,\cdot, h)$.
        \item Every class gets the teaching they need. This follows from the fact that teachers are tight and the marginal constraint $p(i,\cdot,h)$, which forces teachers to be teaching whenever they can. The students are getting the lessons from the right teachers because of the marginal constraint on $p(i, j, \cdot)$, since teachers who are not supposed to teach a class have those marginal values set to $0$.
        \item No class is taught by more than one teacher at once. This follows from marginal constraint $p(\cdot, j, h)$. For each of the hours, a class is with either a single teacher or the holding room.
        \item No teacher is teaching more than one class simultaneously. This holds again from our marginal constraint on $p(i,\cdot, h)$.
        \item Lastly, the total number of lessons (not in holding rooms) held in each hour is $q_h$ as required by $Q$-RTT. To see why, we consider each color (hour). Each color (excluding the null color) is used exactly $m$ times by virtue of $p(\cdot, j, h)$. Some of these are in holding rooms, other are with teachers. The former (over all classes) is given by $m-q_h$ because of our constraint on $p(i, \cdot, h)$, which means that exactly $q_h$ lessons in hour $h$ as required. 
    \end{itemize}
\paragraph{A valid solution to $Q$-RTT implies a solution to our construction with 0 conditional entropy} Given a solution to $Q$-RTT, we recover a candidate solution to our construction in a natural way. If teacher $i$ is teaching class $j$ in hour $h$, then color edge $ij$ with color $h$, i.e., $r(i,j,h)=\alpha$ and $r(i,j,h')=0$ if $h' \neq h$. Since in RTT each teacher and class can be assigned one lesson per hour at most, there will be no clashes with this assignment. For all other $i \in [3], j\in[m]$ where $R_{ij}=0$, we assign $r(i,j,\cdot)=0$. Now, we will also need to assign students to holding rooms. For $h \in [3]$, we set $r(Z_h, j, h) = \alpha$ if class $j$ was not assigned to any teacher in hour $h$. If class $j$ was assigned some teacher in hour $h$, then $r(Z_h, j, 0)=\alpha$, i.e., we give it the special null color. All other entries are given a value of $0$. We can verify
\begin{itemize}
    \item $r$ is a valid probability distribution. The nonnegativity of $r$ follows from the fact that $\alpha > 0$ strictly. We need to check that $r$ sums to $1$. We break this down into two cases based on whether the first argument of $r$ is some $Z_h$ or $i$.
    In Case 1, we have
    \begin{align*}
        \sum_{i \in [n], h \in [3] \cup \{ 0 \}, j \in [m]} r(i,j,h) 
        =&\sum_{i \in [n], h \in [3], j \in [m]} r(i,j,h)\\
        =&\alpha \cdot \sum_{i \in [n], j \in [m]} R_{ij},
    \end{align*}
    where the first line follows from the fact that we never color a teacher-class edge with the null color, and the second line is because every class gets its teaching requirements satisfied. In Case 2, we know that by definition every class is matched to every holding room and assigned either the null color or that room's color, hence
    \begin{align*}
        \sum_{i \in \{Z_1, Z_2, Z_3\}, h \in [3] \cup \{ 0 \}, j \in [m]} r(i, j, h) 
        =& 3m 
    \end{align*}
    Summing them up, we have $\alpha \cdot \left( 3m + \sum_{i \in [n], j \in [m] } R_{ij} \right) = 1$ (by our definition of $\alpha$.
    \item This $r$ distribution has only entries in $\alpha$ or $0$. This follows by definition.
    \item This $r$ distribution has minimum conditional entropy. For a fixed $i,j$, $r(i,j,\cdot)$ is either $\alpha$ or $0$. That is, $Y$ is deterministic given $X_1,X_2$, hence $H(Y |X_1, X_2)=0$.  
    \item All 3 marginal constraints in our construction are obeyed. We check them in turn. 
    \begin{itemize}
    \item Marginal constraint $r(i, j) = p(i, j)$. When $i \in [3]$: (i) when $R_{ij}=1$ exactly one time $h$ is assigned to teacher $i$ and class $j$, hence $r(i,j)=\alpha = p(i,j)$ as required, (ii)when $R_{ij}=0$ as specified. Now when $i \in \{Z_1, Z_2, Z_3 \}$, we have $r(i,j,\cdot)=\alpha = p(i,j)$ since every holding room is either assigned it's color to a class, or assigned the special null color.
    \item Marginal constraint $r(i, h)=p(i,h)$. When $i \in [3]$, this follows directly from tightness. 
    Similarly, when $i \in \{ Z_1, Z_2 ,Z_3\}$, we have by definition of $Q$-RTT the assignments to holding rooms equal to $m - q_h$ for hour $h$, and consequently, $q_h$ null colors adjacent to $Z_h$ as required.
    \item Marginal constraint $r(j,h)=p(j,h)$. For every $h \in [3]$, the class is assigned either to a teacher or a holding room, so this is equal to $\alpha$ as required. For $h = 0$, i.e., the null color, this is used exactly $\sum_{i \in [n]} R_{ij}$ times (since these were the number hours that were \textit{not} assigned to teachers), as required, making its marginal $\sum_{i \in [n]} R_{ij}$ and $r(j,h)=\alpha \cdot \sum_{i \in [n]} R_{ij}$ as required.
    \end{itemize}
\end{itemize}
Thus, if RTT returns \textsc{TRUE}, our construction will also return a solution with entries in $\{ 0, \alpha \}$, and vice versa.

\paragraph{Corollary} The decision problem of whether there exists a distribution in $r \in \Delta_{p_{1,2,12}}$ such that $H(Y| X_1, X_2) = 0$ is NP-complete. This follows because the problem is in NP since checking if $Y$ is deterministic (i.e., $H(Y|X_1, X_2) = 0$) can be done in polynomial time, while NP-hardness follows from the same argument as above.

\subsection{Upper bound on synergy (Theorem~\ref{thm:upper})}
\label{app:upper}
We begin by restating Theorem~\ref{thm:upper} from the main text:
\begin{theorem}
    (Upper-bound on synergy, same as Theorem~\ref{thm:upper}).
    \begin{align}
        S \leq H_p(X_1, X_2) + H_p(Y) - \min_{r \in \Delta_{p_{12,y}}} H_r(X_1, X_2, Y) - \max_{q \in \Delta_{p_{1,2}}} I_q(\{X_1,X_2\}; Y) = \high 
    \end{align}
\end{theorem}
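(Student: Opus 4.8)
The plan is to chain two observations: (i) $p$ itself lies in the feasible set $\Delta_{p_{1,2,12}}$, so $I_p(\{X_1,X_2\};Y)$ is at most the max of $I_r(\{X_1,X_2\};Y)$ over that set; and (ii) dropping two of the three marginal-matching constraints only enlarges the feasible set, which can only decrease a minimum entropy. Concretely, I would start from the definition $S = I_p(\{X_1,X_2\}; Y) - \max_{q \in \Delta_{p_{1,2}}} I_q(\{X_1,X_2\}; Y)$ and note that since $p \in \Delta_{p_{1,2,12}}$,
\begin{align*}
S \le \max_{r \in \Delta_{p_{1,2,12}}} I_r(\{X_1,X_2\}; Y) - \max_{q \in \Delta_{p_{1,2}}} I_q(\{X_1,X_2\}; Y).
\end{align*}

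Next I would rewrite the first term exactly as in the main text: using $I_r(\{X_1,X_2\};Y) = H_r(X_1,X_2) + H_r(Y) - H_r(X_1,X_2,Y)$ and the fact that $H_r(X_1,X_2)$ and $H_r(Y)$ are pinned to $H_p(X_1,X_2)$ and $H_p(Y)$ for every $r \in \Delta_{p_{1,2,12}}$ (the latter because $r(y)$ is determined by $r(x_i,y)$), we get
\begin{align*}
\max_{r \in \Delta_{p_{1,2,12}}} I_r(\{X_1,X_2\}; Y) = H_p(X_1, X_2) + H_p(Y) - \min_{r \in \Delta_{p_{1,2,12}}} H_r(X_1, X_2, Y).
\end{align*}

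The key step is then to observe the containment $\Delta_{p_{1,2,12}} \subseteq \Delta_{p_{12,y}}$: any $r$ matching $p(x_1,x_2)$ and both $p(x_i,y)$ automatically matches $p(x_1,x_2)$ and $p(y)$. Minimizing the same objective $H_r(X_1,X_2,Y)$ over the larger set $\Delta_{p_{12,y}}$ can only yield a smaller (or equal) value, so $-\min_{r \in \Delta_{p_{1,2,12}}} H_r(X_1,X_2,Y) \le -\min_{r \in \Delta_{p_{12,y}}} H_r(X_1,X_2,Y)$. Substituting this into the previous display gives exactly $\overline{S}$, completing the proof. To close out the statement I would also remark that $\min_{r \in \Delta_{p_{12,y}}} H_r(X_1,X_2,Y)$ is, after treating $(X_1,X_2)$ as a single variable with marginal $p(x_1,x_2)$ and $Y$ with marginal $p(y)$, precisely the two-marginal min-entropy coupling problem, hence computable up to the known approximation guarantees, while $\max_{q \in \Delta_{p_{1,2}}} I_q(\{X_1,X_2\};Y)$ is the convex max-entropy program from Appendix~\ref{app:method3}.

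There is no genuine obstacle here beyond bookkeeping: the argument is a two-line relaxation, and the only subtlety worth stating carefully is why $H_r(Y)$ is constant over $\Delta_{p_{1,2,12}}$ (marginalization of $r(x_i,y)$) so that the mutual-information-to-entropy rewrite is exact rather than merely an inequality. The "hard part" is not in this proof but in the accompanying fact (Theorem~\ref{thm:min-entropy-np-hard}) that the tighter bound over $\Delta_{p_{1,2,12}}$ is NP-hard, which is what forces the relaxation to $\Delta_{p_{12,y}}$ in the first place.
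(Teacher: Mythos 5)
Your proposal is correct and follows the paper's own argument essentially line for line: both rewrite $\max_{r \in \Delta_{p_{1,2,12}}} I_r(\{X_1,X_2\};Y)$ as $H_p(X_1,X_2) + H_p(Y) - \min_{r \in \Delta_{p_{1,2,12}}} H_r(X_1,X_2,Y)$ using the pinned marginals, and then relax the feasible set to the superset $\Delta_{p_{12,y}}$, which can only decrease the minimum entropy. You merely make explicit two steps the paper leaves implicit, namely that $p \in \Delta_{p_{1,2,12}}$ justifies bounding $I_p$ by the max, and that $H_r(Y)$ is constant over $\Delta_{p_{1,2,12}}$ because $r(y)$ is obtained by marginalizing $r(x_i,y)$.
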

where $\Delta_{p_{12,y}} = \{ r \in \Delta : r(x_1,x_2)=p(x_1,x_2), r(y)=p(y) \}$. 
\begin{proof}
Recall that this upper bound boils down to finding $\max_{r \in \Delta_{p_{1,2,12}}} I_r(\{X_1,X_2\}; Y)$. We have
    \begin{align}
    \max_{r \in \Delta_{p_{1,2,12}}} I_r(\{X_1,X_2\}; Y) &= 
    \max_{r \in \Delta_{p_{1,2,12}}} \left\{ H_r(X_1, X_2) + H_r(Y) - H_r(X_1, X_2, Y) \right\} \\ &=
    H_p(X_1, X_2) + H_p(Y) - \min_{r \in \Delta_{p_{1,2,12}}} H_r(X_1, X_2, Y), \\
    &\leq H_p(X_1, X_2) + H_p(Y) - \min_{r \in \Delta_{p_{12,y}}} H_r(X_1, X_2, Y) \label{eqn:replace_majorization}
\end{align}
where the first two lines are by definition. The last line follows since $\Delta_{p_{12,y}}$ is a superset of $r \in \Delta_{p_{1,2,12}}$, which implies that minimizing over it would yield a a no larger objective.
\end{proof}
In practice, we use the slightly tighter bound which maximizes over all the pairwise marginals, \begin{align}
    \max_{r \in \Delta_{p_{1,2,12}}} I_r(X_1,X_2; Y) 
    &\le H_p(X_1, X_2) + H_p(Y) - \max \begin{Bmatrix} \min_{r \in \Delta_{p_{12,y}}} H_r(X_1, X_2, Y) \\
    \min_{r \in \Delta_{p_{1,x_2}}} H_r(X_1, X_2, Y) \\
    \min_{r \in \Delta_{p_{2,x_1}}} H_r(X_1, X_2, Y) \end{Bmatrix}.
\end{align}

\paragraph{Estimating $\high$ using min-entropy couplings}
We only show how to compute $\min_{r \in \Delta_{p_{12,y}}} H_r(X_1, X_2, Y)$, since the other variants can be computed in the same manner via symmetry.
We recognize that by treating $(X_1, X_2)=X$ as a single variable, we recover the classic \textit{min-entropy coupling} over $X$ and $Y$, which is still NP-hard but admits good approximations~\citep{cicalese2002supermodularity,cicalese2017find,kocaoglu2017entropic,rossi2019greedy,compton2022tighter,compton2023minimum}. 

There are many methods to estimate such a coupling, for example~\citet{kocaoglu2017entropic} give a greedy algorithm running in linear-logarithmic time, which was further proven by~\citet{rossi2019greedy,compton2022tighter} to be a 1-bit approximation of the minimum coupling \footnote{This a special case when there are 2 modalities. For more modalities, the bounds will depend on the sizes and number of signals.}. Another line of work was by~\citep{cicalese2017find}, which constructs an appropriate coupling and shows that it is optimal to 1-bit to a lower bound $H(p(x_1,x_2) \wedge p(y))$, where $\wedge$ is the greatest-lower-bound operator, which they showed in~\citep{cicalese2002supermodularity} can be computed in linear-logarithmic time. We very briefly describe this method; more details may be found in~\citep{cicalese2017find,cicalese2002supermodularity} directly.

\paragraph{Remark} A very recent paper by \citet{compton2023minimum} show that one can get an approximation tighter than 1-bit. We leave the incorporation of these more advanced methods as future work. 


Without loss of generality, suppose that $\mathcal{X}$ and $\mathcal{Y}$ are ordered and indexed such that $p(x)$ and $p(y)$ are sorted in non-increasing order of the marginal constraints, i.e., $p(X=x_i) \geq p(X=x_j)$ for all $i \leq j$. We also assume WLOG that the supports of $X$ and $Y$ are of the same size $n$, if they are not, then pad the smaller one with dummy values and introduce marginals that constrain these values to never occur (and set $n$ accordingly if needed). For simplicity, we will just refer to $p_i$ and $q_j$ for the distributions of $p(X=x_i)$ and $p(Y=y_j)$ respectively.

Given $2$ distributions $p, q$ we say that $p$ is majorized by $q$, written as $p \preceq q $ if and only if 
\begin{align}
    \sum_{i=1}^k p_i \leq \sum_{i=1}^k q_i \qquad \text{for all } k \in 1 \dots n
\end{align}
As~\citet{cicalese2002supermodularity} point out, there is a strong link between majorization and Schur-convex functions; in particular, if $p \preceq q$, then we have $H(p) \geq H(q)$.
Indeed, if we treat $\succeq$ as a partial order and consider the set 
\begin{align*}
\mathcal{P}^n = \left\{ p = (p_1, \dots, p_n) : p_i \in [0, 1], \sum_{i}^n p_i = 1, p_{i} \geq p_{i+1} \right\}
\end{align*}
as the set of finite (ordered) distributions with support size $n$ with non-increasing probabilities, then we obtain a lattice with a unique greatest lower bound $(\wedge)$ and least upper bound $(\vee)$. Then, \citet{cicalese2002supermodularity} show that that $p \wedge q$ can be computed recursively as $p \wedge q = \alpha(p, q) = (a_1, \dots, a_n)$ where
\begin{align*}
    a_i = \min \{ \sum_{j=1}^i p_j, \sum_{j=1}^i q_j\} + \sum_{j=1}^{i-1} a_{j-1}
\end{align*}
It was shown by \citet{cicalese2017find} that \textit{any} coupling satisfying the marginal constraints given by $p$ and $q$, i.e.,
\begin{align*}
    M \in C(p, q) = \left\{ M = m_{ij}: \sum_j m_{ij} = p_i, \sum_{i} m_{ij} = p_j\right\}
\end{align*}
has entropy $H(M) \geq H(p \wedge q)$. In particular, this includes the min-entropy one. 
Since we only need the optimal value of such a coupling and not the actual coupling per-se, we can use plug the value of $H(p \wedge q)$ into the minimization term \eqref{eqn:replace_majorization}, which yields an upper bound for $\max_{r \in \Delta_{p_{1,2,12}}} I_r(\{X_1,X_2\}; Y)$, which would form an upper bound on $\high$ itself.

\section{Experimental Details}
\label{app:experiments}

\subsection{Verifying lower and upper bounds}

\textbf{Synthetically generated datasets}: To test our derived bounds on synthetic data, We randomly sampled $100,000$ distributions of $\{X_1, X_2, Y\}$ to calculate their bounds and compare with their actual synergy values. We set $X_1, X_2$, and $Y$ as random binary values, so each distribution can be represented as a size $8$ vector of randomly sampled entries that sum up to $1$.

\begin{figure*}[t]
\centering
\vspace{-6mm}
\begin{subfigure}{.33\textwidth}
  \centering
  \includegraphics[width=1.0\linewidth]{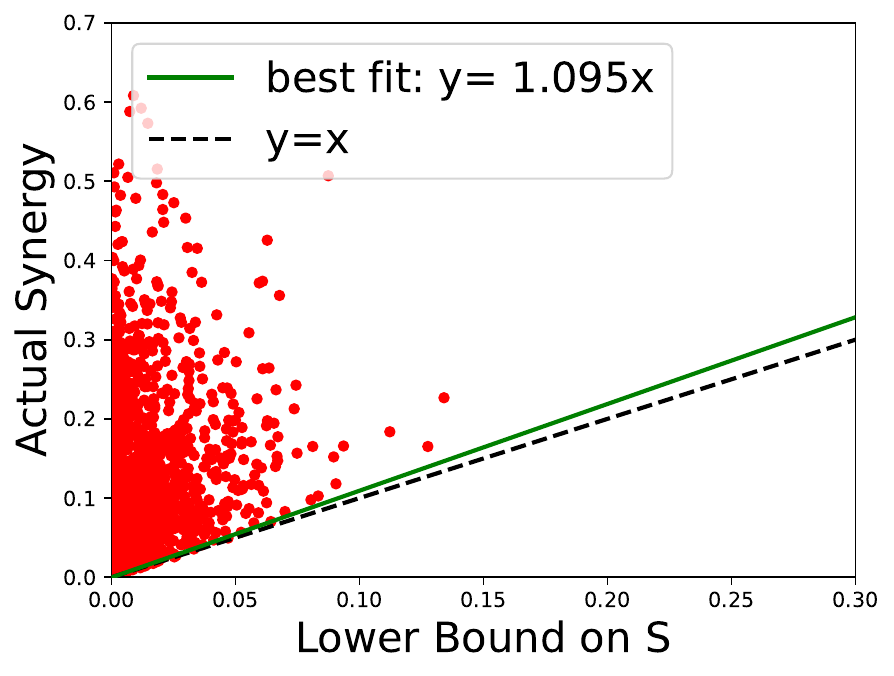}
  \caption{$S$ vs $\lowa$}
  \label{fig:lower_bound1}
\end{subfigure}%
\begin{subfigure}{.33\textwidth}
  \centering
  \includegraphics[width=1.0\linewidth]{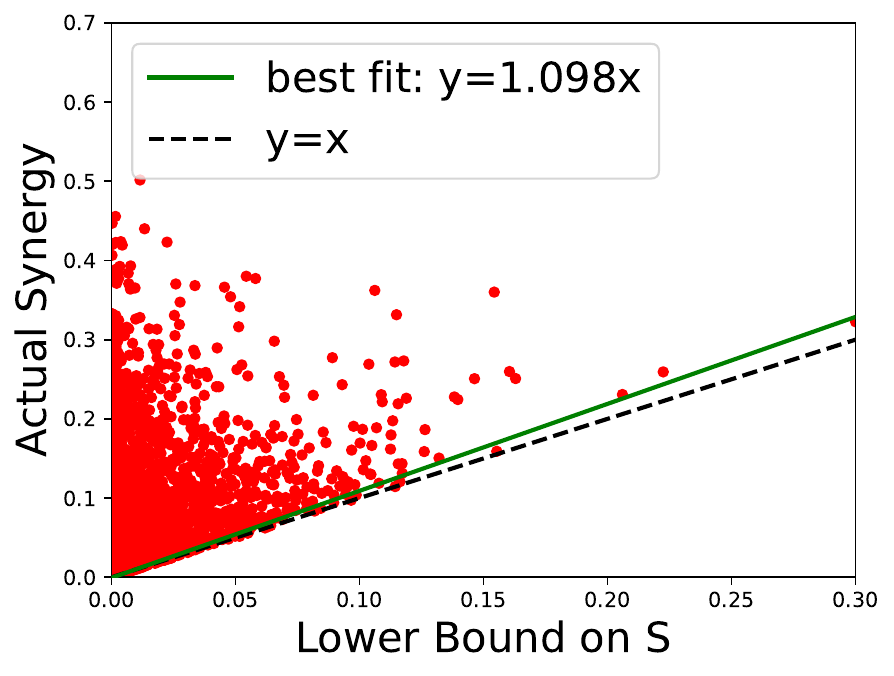}
  \caption{$S$ vs $\lowb$}
  \label{fig:lower_bound2}
\end{subfigure}%
\begin{subfigure}{.33\textwidth}
  \centering
  \includegraphics[width=1.0\linewidth]{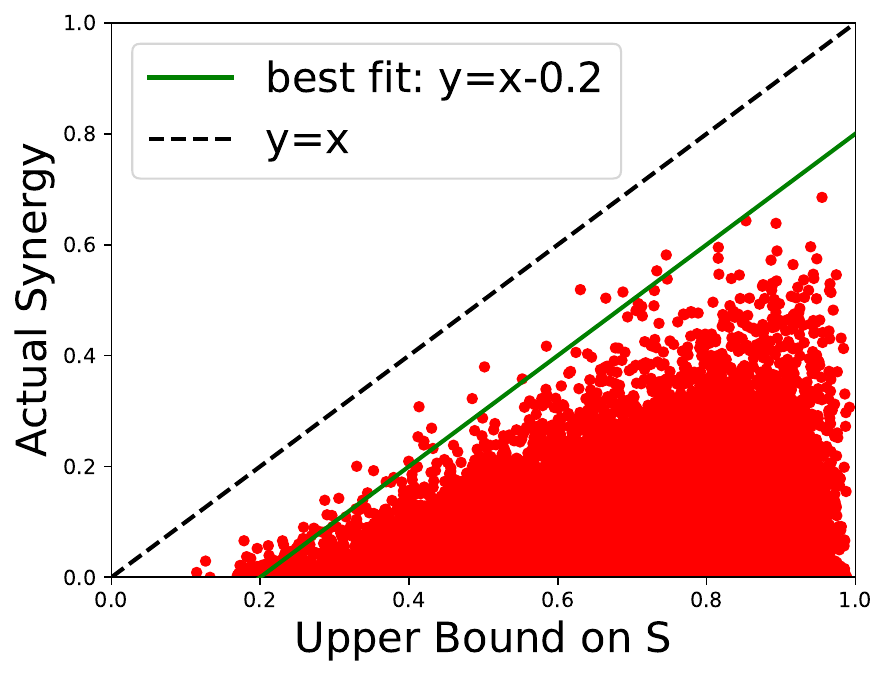}
  \caption{$S$ vs $\high$}
  \label{fig:upper_bound}
\end{subfigure}%
\vspace{-1mm}
\caption{Plots of actual synergy against our estimated (a) lower bound on synergy $\lowa$, (b) lower bound on synergy $\lowa$, and (c) upper bound $\high$. The bounds closely track true synergy, which we show via three lines of best fit that almost exactly track true synergy: $y=1.095x$, $y=1.098x$, and $y=x-0.2$.}
\vspace{-4mm}
\label{fig:plots}
\end{figure*}

\textbf{Results}: We calculated the lower bound via redundancy, lower bound via disagreement, and upper bound of all distributions and plotted them with actual synergy value (Figure \ref{fig:plots}). We define a distribution to be on the boundary if its lower/upper bound is within $10\%$ difference from its actual synergy value. We conducted the least mean-square-error fitting on these distributions close to the boundary. We plot actual synergy against $\lowa$ in Figure~\ref{fig:plots} (left), and find that it again tracks a lower bound of synergy. In fact, we can do better and fit a linear trend $y=1.095x$ on the distributions along the margin (RMSE $=0.0013$).

We also plot actual synergy against computed $\lowb$ in Figure~\ref{fig:plots} (middle).
As expected, the lower bound closely tracks actual synergy. Similarly, we can again fit a linear model on the points along the boundary, obtaining $y=1.098x$ with a RMSE of $0.0075$ (see this line in Figure~\ref{fig:plots} (middle)).

Finally, we plot actual synergy against estimated $\high$ in Figure~\ref{fig:plots} (right). Again, we find that the upper bound consistently tracks the highest attainable synergy - we can fit a single constant $y=x-0.2$ to obtain an RMSE of $0.0022$ (see this line in Figure~\ref{fig:plots} (right)). This implies that our bound enables both accurate comparative analysis of relative synergy across different datasets, and precise estimation of absolute synergy.

\textbf{Real-world datasets}: We also use the large collection of real-world datasets in MultiBench~\citep{liang2021multibench}: (1) \textsc{MOSI}: video-based sentiment analysis~\citep{zadeh2016mosi},
(2) \textsc{MOSEI}: video-based sentiment and emotion analysis~\citep{zadeh2018multimodal},  (3) \textsc{MUStARD}: video-based sarcasm detection~\citep{castro2019towards},  (5) \textsc{MIMIC}: mortality and disease prediction from tabular patient data and medical sensors~\citep{johnson2016mimic}, and (6) \textsc{ENRICO}: classification of mobile user interfaces and screenshots~\citep{leiva2020enrico}.
While the previous bitwise datasets with small and discrete support yield exact lower and upper bounds, this new setting with high-dimensional continuous modalities requires the approximation of disagreement and information-theoretic quantities: we train unimodal neural network classifiers $\hat{f}_\theta(y|x_1)$ and $\hat{f}_\theta(y|x_2)$ to estimate disagreement, and we cluster representations of $X_i$ to approximate the continuous modalities by discrete distributions with finite support to compute lower and upper bounds.

\textbf{Implementation details}: We first apply PCA to reduce the dimension of multimodal data. For the test split, we use unsupervised clustering to generate $20$ clusters. We obtain a clustered version of the original dataset $\mathcal{D}=\{(x_1,x_2,y)\}$ as $\mathcal{D}_\text{cluster}=\{(c_1,c_2,y)\}$ where $c_i\in\{1,\dots,20\}$ is the ID of the cluster that $x_i$ belongs to. In our experiments, where $\mathcal{Y}$ is typically a classification task, we set the unimodal classifiers $f_1 = \hat{p}(y|x_1)$ and $f_2 = \hat{p}(y|x_2)$ as the Bayes optimal classifiers for multiclass classification tasks. 

For classification, $\mathcal{Y}$ is the set of $k$-dimensional 1-hot vectors. Given two logits $\hat{y}_1, \hat{y}_2$ obtained from $x_1, x_2$ respectively, define $d(\hat{y}_1, \hat{y}_2) = \left(\hat{y}_1-\hat{y}_2\right)^2$. We have that $c_d=1$, and $\epsilon_1 = |L(f_1) - L(f_1^*)|^2 = 0$ and $\epsilon_2 = |L(f_2) - L(f_2^*)|^2 = 0$ for well-trained neural network unimodal classifiers $f_1$ and $f_2$ for Theorem~\ref{thm:lower_disagreement}. For datasets with $3$ modalities, we perform the experiments separately for each of the $3$ modality pairs, before taking an average over the $3$ modality pairs. Extending the definitions of redundancy, uniqueness, and synergy, as well as our derived bounds on synergy for 3 or more modalities is an important open question for future work.

\subsection{Relationships between agreement, disagreement, and interactions}

\textbf{1. The relationship between synergy and redundancy}: We give some example distributions to analyze when the lower bound based on redundancy $\lowa$ is high or low. The bound is high for distributions where $X_1$ and $X_2$ are independent, but $Y=1$ sets $X_1 \neq X_2$ to increase their dependence (i.e., \textsc{agreement XOR} distribution in Table~\ref{tab:xor}). Since $X_1$ and $X_2$ are independent but become dependent given $Y$, $I(X_1;X_2;Y)$ is negative, and the bound is tight $\lowa = 1 \le 1=S$. Visual Question Answering 2.0~\citep{goyal2017making} falls under this category, with $S = 4.92,R=0.79$, where the image and question are independent (some questions like `what is the color of the object' or `how many people are there' can be asked for many images), but the answer connects the two modalities, resulting in dependence given the label. As expected, the estimated lower bound for synergy: $\lowa = 4.03 \le 4.92=S$.

Conversely, the bound is low for Table~\ref{tab:r} with the probability mass distributed uniformly only when $y=x_1=x_2$ and $0$ elsewhere. As a result, $X_1$ is always equal to $X_2$ (perfect dependence), and yet $Y$ perfectly explains away the dependence between $X_1$ and $X_2$ so $I(X_1;X_2|Y) = 0$: $\lowa = 0 \le 0=S$.
Note that this is an example of perfect redundancy and zero synergy - for an example with synergy, refer back to \textsc{disagreement XOR} in Table~\ref{tab:dis_xor} - due to disagreement there is non-zero $I(X_1;X_2)$ but the label explains some of the relationships between $X_1$ and $X_2$ so $I(X_1;X_2|Y) < I(X_1;X_2)$: $\lowa = -0.3 \le 1=S$.
A real-world example is multimodal sentiment analysis from text, video, and audio of monologue videos on \textsc{MOSEI}, $R=0.26$ and $S=0.04$, and as expected the lower bound is small $\lowa = 0.01 \le 0.04=S$.


\textbf{2. The relationship between synergy and uniqueness}: To give an intuition of the relationship between disagreement, uniqueness, and synergy, we use one illustrative example shown in Table~\ref{tab:dis_xor}, which we call \textsc{disagreement XOR}. We observe that there is maximum disagreement between marginals $p(y|x_1)$ and $p(y|x_2)$: the likelihood for $y$ is high when $y$ is the same bit as $x_1$, but reversed for $x_2$. Given both $x_1$ and $x_2$: $y$ seems to take a `disagreement' XOR of the individual marginals, i.e. $p(y|x_1,x_2) = p(y|x_1) \ \textrm{XOR} \ p(y|x_2)$, which indicates synergy (note that an exact XOR would imply perfect agreement and high synergy). The actual disagreement is $0.15$, synergy is $0.16$, and uniqueness is $0.02$, indicating a very strong lower bound $\lowb=0.13 \le 0.16=S$. A real-world equivalent dataset is \textsc{MUStARD} for sarcasm detection from video, audio, and text~\citep{castro2019towards}, where the presence of sarcasm is often due to a contradiction between what is expressed in language and speech, so disagreement $\alpha=0.12$ is the highest out of all the video datasets, giving a lower bound $\lowb=0.11 \le 0.44 = S$.

On the contrary, the lower bound is low when all disagreement is explained by uniqueness (e.g., $y=x_1$, Table~\ref{tab:u}), which results in $\lowb = 0 \le 0 = S$ ($\alpha$ and $U$ cancel each other out). A real-world equivalent is \textsc{MIMIC} involving mortality and disease prediction from tabular patient data and time-series medical sensors~\citep{johnson2016mimic}. Disagreement is high $\alpha=0.13$ due to unique information $U_1=0.25$, so the lower bound informs us about the lack of synergy $\lowb = -0.12 \le 0.02 = S$.

Finally, the lower bound is loose when there is synergy without disagreement, such as \textsc{agreement XOR} ($y=x_1 \textrm{ XOR } x_2$, Table~\ref{tab:xor}) where the marginals $p(y|x_i)$ are both uniform, but there is full synergy: $\lowb = 0 \le 1 = S$. Real-world datasets that have synergy without disagreement include \textsc{UR-FUNNY} where there is low disagreement in predicting humor $\alpha=0.03$, and relatively high synergy $S=0.18$, which results in a loose lower bound $\lowb = 0.01 \le 0.18=S$.

\begin{wrapfigure}{R}{0.3\textwidth}
    \vspace{-4mm}
    \begin{minipage}{0.3\textwidth}
    \vspace{-0mm}
    \includegraphics[width=\linewidth]{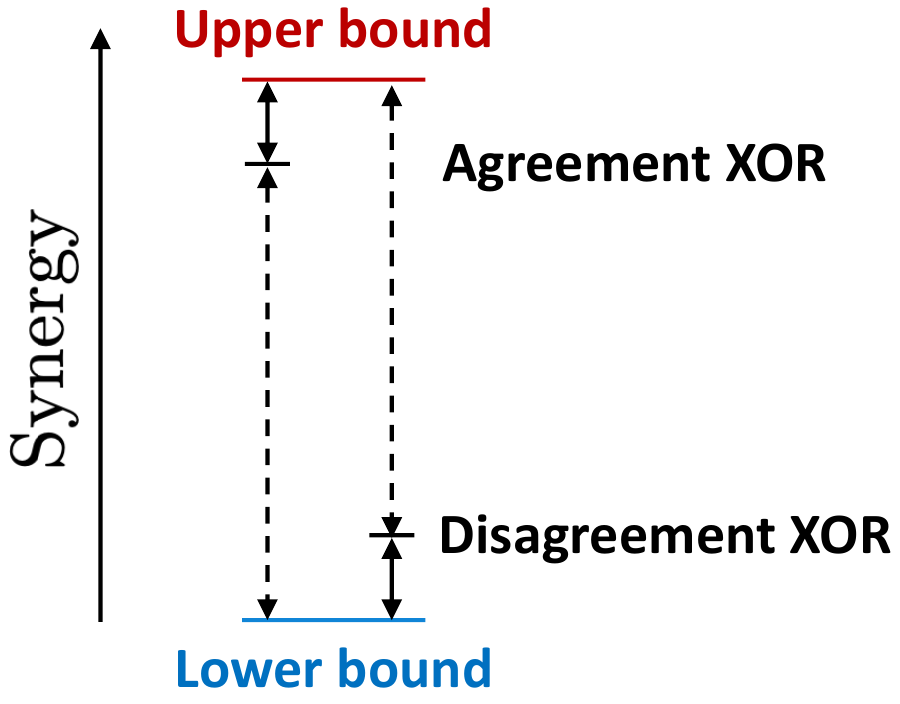}
    \vspace{-4mm}
    \caption{Comparing the qualities of the bounds when there is agreement, disagreement, and synergy. When there is agreement and synergy, $\lowa$ is tight, $\lowb$ is loose, and $\high$ is tight. When there is disagreement and synergy, $\lowa$ is loose, $\lowb$ is tight, and $\high$ is loose with respect to true $S$.}
    \label{fig:upperlower}
\end{minipage}
\vspace{-10mm}
\end{wrapfigure}

\textbf{3. On upper bounds for synergy}: We also run experiments to obtain estimated upper bounds on synthetic and MultiBench datasets. The quality of the upper bound shows some intriguing relationships with that of lower bounds. For distributions with perfect agreement and synergy such as $y = x_1 \textrm{ XOR } x_2 $ (Table~\ref{tab:xor}), $\high = 1 \ge 1 = S$ is really close to true synergy, $\lowa = 1 \le 1 = S$ is also tight, but $\lowb = 0 \le 1 = S$ is loose. For distributions with disagreement and synergy (Table~\ref{tab:dis_xor}), $\high = 0.52 \ge 0.13 = S$ far exceeds actual synergy, $\lowa = -0.3 \le 1=S$ is much lower than actual synergy, but $\lowb=0.13 \le 0.16=S$ is tight (see relationships in Figure~\ref{fig:upperlower}).

Finally, while some upper bounds (e.g., \textsc{MUStARD}, \textsc{MIMIC}) are close to true $S$, some of the other examples in Table~\ref{tab:s} show bounds that are quite weak. 
This could be because (i) there indeed exists high synergy distributions that match $\mathcal{D}_i$ and $\mathcal{D}_M$, but these are rare in the real world, or (ii) our approximation used in Theorem~\ref{thm:upper} is mathematically loose. We leave these for future work.

\begin{table}[t]
\centering
\fontsize{9}{11}\selectfont
\setlength\tabcolsep{1.0pt}
\vspace{-4mm}
\caption{We show the full list of computed lower and upper bounds on $S$ without labeled multimodal data and compare them to the true $S$ assuming knowledge of the full joint distribution $p$: the bounds track $S$ well on \textsc{MUStARD} and \textsc{MIMIC}, and also show general trends on the other datasets except \textsc{ENRICO} where estimating synergy is difficult. V = video, T = text, A = audio modalities.}
\vspace{1mm}
\centering
\footnotesize
\begin{tabular}{l|cccccccccccccccccccc}

\hline \hline
& \textsc{MOSEI$_\text{V+T}$} & \textsc{MOSEI$_\text{V+A}$} & \textsc{MOSEI$_\text{A+T}$} & \textsc{UR-FUNNY$_\text{V+T}$} & \textsc{UR-FUNNY$_\text{V+A}$} & \textsc{UR-FUNNY$_\text{A+T}$}  \\
\hline
$\high$ & $0.96$ & $0.98$ & $0.97$ & $0.96$ & $0.96$ & $0.99$ \\
$S$ & $0.04$ & $0.03$ & $0.03$ & $0.21$ & $0.24$ & $0.08$ \\
$\lowa$ & $0.01$ & $0.0$ & $0.0$ & $0.0$ & $0.0$ & $0.0$ \\
$\lowb$ & $0.01$ & $0.01$ & $0.0$ & $0.0$ & $0.0$ & $0.01$ \\
\hline \hline
\end{tabular}

\vspace{4mm}

\begin{tabular}{l|cccccccccccccccccccc}
\hline\hline
& \textsc{MOSI$_\text{V,T}$} & \textsc{MOSI$_\text{V,A}$} & \textsc{MOSI$_\text{A,T}$} & \textsc{MUStARD$_\text{V,T}$} & \textsc{MUStARD$_\text{V,A}$} & \textsc{MUStARD$_\text{A,T}$} & \textsc{MIMIC} & \textsc{ENRICO} \\
\hline
$\high$ & $0.92$ & $0.92$ & $0.93$ & $0.79$ & $0.78$ & $0.79$ & $0.41$ & $2.09$ \\
$S$ & $0.31$ & $0.28$ & $0.14$ & $0.49$ & $0.31$ & $0.51$ & $0.02$ & $1.02$ \\
$\lowa$ & $0.01$ & $0.01$ & $0.0$ & $0.04$ & $0.01$ & $0.06$ & $0.0$ & $0.01$ \\
$\lowb$ & $0.03$ & $0.03$ & $0.02$ & $0.07$ & $0.06$ & $0.11$ & $-0.12$ & $-0.55$ \\
\hline \hline
\end{tabular}

\vspace{-2mm}
\label{tab:s_app}
\end{table}

\subsection{\mbox{Comparisons with Other Interaction Measures}}
\label{app:other_interaction}

\newcommand{\red}{$R$}
\newcommand{\uone}{$U_1$}
\newcommand{\utwo}{$U_2$}
\newcommand{\syn}{$S$}

\begin{table*}[t]
\centering
\fontsize{9}{11}\selectfont
\setlength\tabcolsep{2.5pt}
\vspace{-0mm}
\caption{Estimating multimodal interactions on synthetic generative model datasets. Ground truth total information is computed based on an upper bound from the best multimodal test accuracy. Our estimated interactions are consistent with ground truth interactions. We also implement 3 other definitions: \textbf{I-min} can sometimes overestimate synergy and uniqueness; \textbf{WMS} is actually synergy minus redundancy, so can be negative and when $R$ \& $S$ are of equal magnitude WMS cancels out to be 0; \textbf{CI} can also be negative and sometimes incorrectly concludes highest uniqueness for S-only data.}
\centering
\footnotesize
\vspace{-2mm}

\begin{tabular}{l|cccc|cccc|cccc|cccc}
\hline \hline
Task & \multicolumn{4}{c|}{$R$-only data} & \multicolumn{4}{c|}{$U_1$-only data} & \multicolumn{4}{c|}{$U_2$-only data} & \multicolumn{4}{c}{$S$-only data} \\
\hline
Interaction & \red & \uone & \utwo & \syn & \red & \uone & \utwo & \syn & \red & \uone & \utwo & \syn & \red & \uone & \utwo & \syn \\
\hline
\textsc{I-MIN} & $0.17$ & $0.08$ & $0.07$ & $0$ & $0$ & $0.23$ & $0$ & $0.06$ & $0$ & $0$ & $0.25$ & $0.08$ & $0.07$ & $0.03$ & $0.04$ & $\mathbf{0.17}$ \\
\textsc{WMS} & $0$ & $0.20$ & $0.20$ & $-0.11$ & $0$ & $0.25$ & $0.02$ & $0.05$ & $0$ & $0.03$ & $\mathbf{0.27}$ & $0.05$ & $0$ & $0.14$ & $0.15$ & $0.07$ \\
\textsc{CI} & $\mathbf{0.34}$ & $-0.09$ & $-0.10$ & $0.17$ & $0$ & $0.23$ & $0$ & $0.06$ & $0$ & $0.01$ & $0.25$ & $0.07$ & $-0.02$ & $0.13$ & $0.14$ & $0.08$  \\
\hline
\textbf{Ours} & $0.16$ & $0$ & $0$ & $0.05$ & $0$ & $0.16$ & $0$ & $0.05$ & $0$ & $0$ & $0.17$ & $0.05$ & $0.07$ & $0$ & $0.01$ & $\mathbf{0.14}$ \\
\hline
Truth & $0.58$ & $0$ & $0$ & $0$ & $0$ & $0.56$ & $0$ & $0$ & $0$ & $0$ & $0.54$ & $0$ & $0$ & $0$ & $0$ & $0.56$ \\
\hline \hline
\end{tabular}

\vspace{-0mm}
\label{tab:other_interaction}
\end{table*}

We are not aware of other related work in mathematically formalizing multimodal interactions, much less for semi-supervised data. Most of our results are new theoretical and empirical insights about different multimodal interactions. Other valid definitions of multimodal interactions do exist, but are known to suffer from issues regarding over- and under-estimation, and may even be negative~\citep{griffith2014quantifying}.

In Table~\ref{tab:other_interaction}, we compare our estimators with other previously used measures for feature interactions on the synthetic datasets. We implement $6$ other definitions: (1) \textbf{I-min} can sometimes overestimate synergy and uniqueness, (2) \textbf{WMS} is actually synergy minus redundancy, so can be negative and when $R$ \& $S$ are of equal magnitude WMS cancels out to be 0, (3) \textbf{CI} can also be negative and sometimes incorrectly concludes highest uniqueness for $S$-only data, (4) \textbf{Shapley values}, (5) \textbf{Integrated gradients (IG)}, and (6) \textbf{CCA}, which are based on quantifying interactions captured by a multimodal model. Our work is fundamentally different in that interactions are properties of data before training any models. Our estimated interactions are more consistent with ground truth interactions. 

\subsection{Robustness to imperfect unimodal classifiers}
\label{app:noise}

\begin{figure}[htp]
\centering
\vspace{-0mm}
\includegraphics[width=0.24\linewidth]{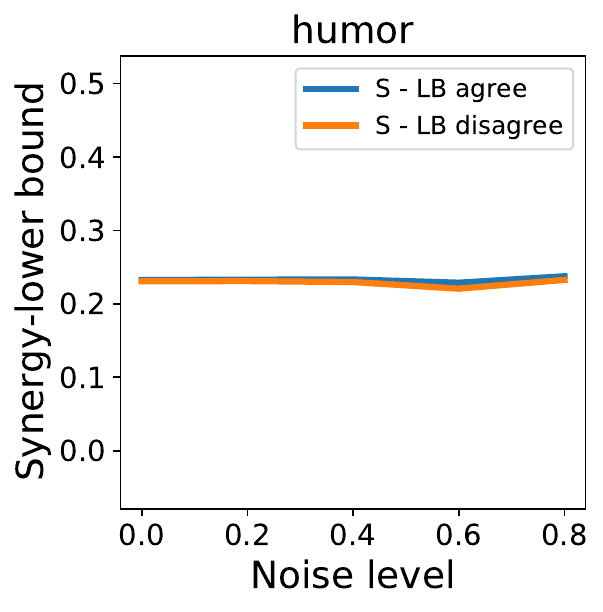}
\includegraphics[width=0.24\linewidth]{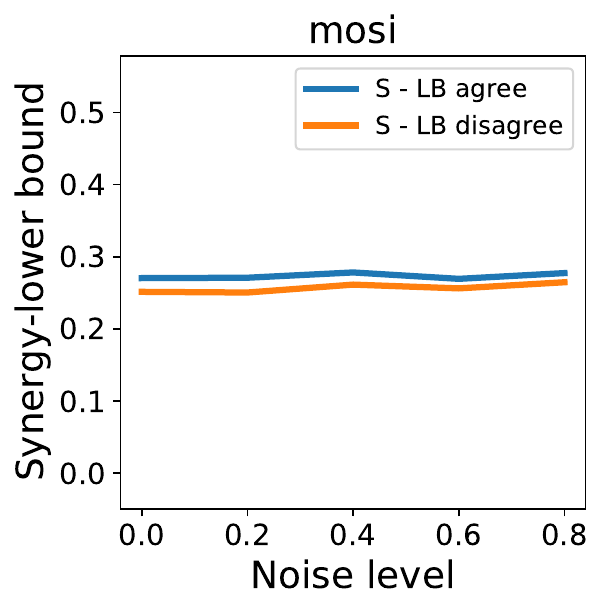}
\includegraphics[width=0.24\linewidth]{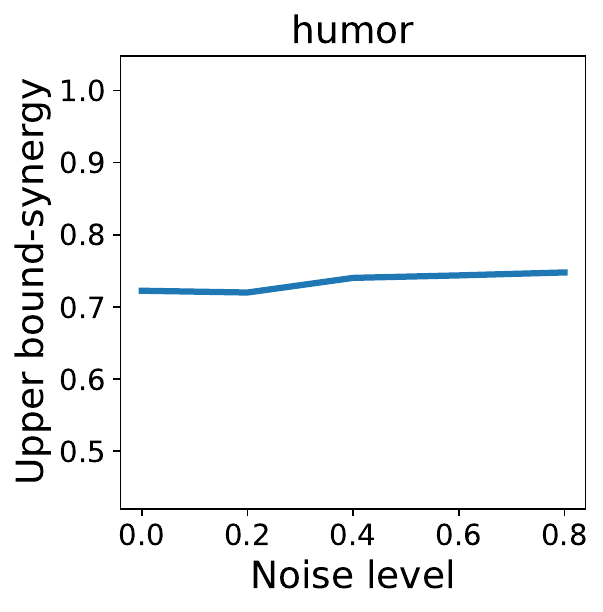}
\includegraphics[width=0.24\linewidth]{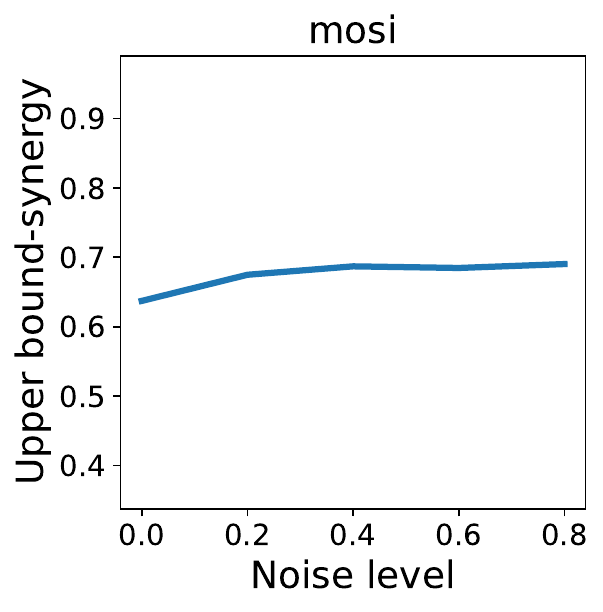}
\vspace{-2mm}
\caption{We study the effect of noisy unimodal predictors and disagreement by perturbing the label by various noise levels (x-axis from noise = 0.0 to 0.8) and examining the change in estimated upper and lower bounds (left: y-axis is true synergy - lower bound and right: y-axis is upper bound - true synergy). On 2 real-world datasets (UR-FUNNY and MOSI) we find bounds quite robust to label noise, giving stable trends of real synergy.}
\label{fig:noisy}
\vspace{-2mm}
\end{figure}

We also study the effect of imperfect unimodal predictors and therefore imperfect disagreement measurements on our derived bounds, by perturbing the label by various noise levels (from no noise to very noisy) and examining the change in both the estimated upper and lower bounds. In Figure~\ref{fig:noisy}, we find these bounds to be quite robust to imperfect unimodal classifiers, still giving close trends of real synergy.

\section{Estimating multimodal performance}
\label{app:upper_exp}

Formally, we estimate performance via a combination of~\citet{feder1994relations} and Fano's inequality~\citep{fano1968transmission} together yield tight bounds of performance as a function of total information $I_p(\{X_1,X_2\}; Y)$. We restate Theorem~\ref{thm:performance} from the main text:
\begin{theorem}
\label{thm:performance_app}
    Let $P_\textrm{acc}(f_M^*) = \mathbb{E}_p \left[ \mathbf{1} \left[ f_M^*(x_1,x_2) = y  \right] \right]$ denote the accuracy of the Bayes' optimal multimodal model $f_M^*$ (i.e., $P_\textrm{acc} (f_M^*) \ge P_\textrm{acc} (f'_M)$ for all $f'_M \in \mathcal{F}_M$). We have that
    \begin{align}
        2^{I_p(\{X_1,X_2\}; Y)-H(Y)} \leq P_\textrm{acc}(f_M^*) \leq \frac{I_p(\{X_1,X_2\}; Y) + 1}{\log |\mathcal{Y}|},
    \end{align}
\end{theorem}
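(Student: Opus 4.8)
The plan is to prove the two inequalities independently; each is a specialization of a classical entropy--error bound once the conditional entropy is rewritten as $H_p(Y\mid X_1,X_2) = H_p(Y) - I_p(\{X_1,X_2\};Y)$. Throughout I would use two facts: (i) the accuracy-optimal multimodal classifier is the MAP rule, so that $P_\textrm{acc}(f_M^*) = \mathbb{E}_{p(x_1,x_2)}\!\left[\max_{y\in\mathcal{Y}} p(y\mid x_1,x_2)\right]$; and (ii) both displayed bounds are monotone increasing in $I_p(\{X_1,X_2\};Y)$, so at the very end one may substitute $R+U_1+U_2+\underline{S}\le I_p(\{X_1,X_2\};Y)\le R+U_1+U_2+\high$ to obtain the computable quantities $\underline{P}_\textrm{acc}(f_M^*)$ and $\overline{P}_\textrm{acc}(f_M^*)$; since $R,U_1,U_2$ are exactly computable and $\underline{S}\le S\le \high$, this last step is immediate.

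For the lower bound (the Feder--Merhav direction), I would first record the elementary pointwise inequality: for any probability vector $v$ over a finite set, $2^{-H(v)} \le \max_y v_y$, which follows from $H(v) = -\sum_y v_y \log v_y \ge -\log(\max_y v_y)$ and exponentiating. Applying this to the conditional law of $Y$ given each fixed $(x_1,x_2)$ gives $2^{-H_p(Y\mid X_1=x_1,X_2=x_2)} \le \max_y p(y\mid x_1,x_2)$. Recalling that $H_p(Y\mid X_1,X_2)$ is the $p(x_1,x_2)$-average of $H_p(Y\mid X_1=x_1,X_2=x_2)$, taking expectations and invoking Jensen's inequality for the convex map $t\mapsto 2^{-t}$ yields
\[
    2^{-H_p(Y\mid X_1,X_2)} \;\le\; \mathbb{E}_{p(x_1,x_2)}\!\left[2^{-H_p(Y\mid X_1=x_1,X_2=x_2)}\right] \;\le\; \mathbb{E}_{p(x_1,x_2)}\!\left[\max_y p(y\mid x_1,x_2)\right] \;=\; P_\textrm{acc}(f_M^*),
\]
and substituting $H_p(Y\mid X_1,X_2) = H_p(Y) - I_p(\{X_1,X_2\};Y)$ gives $2^{\,I_p(\{X_1,X_2\};Y)-H_p(Y)} \le P_\textrm{acc}(f_M^*)$.

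For the upper bound, I would apply Fano's inequality to the MAP predictor, whose error probability is $P_e^\star = 1 - P_\textrm{acc}(f_M^*)$: this gives $H_p(Y\mid X_1,X_2) \le H_b(P_e^\star) + P_e^\star \log(|\mathcal{Y}|-1)$, and weakening via $H_b(\cdot)\le 1$ and $\log(|\mathcal{Y}|-1)\le\log|\mathcal{Y}|$ yields $H_p(Y\mid X_1,X_2) \le 1 + P_e^\star\log|\mathcal{Y}|$. Rearranging and writing $H_p(Y\mid X_1,X_2) = H_p(Y) - I_p(\{X_1,X_2\};Y)$ gives $P_\textrm{acc}(f_M^*) \le 1 - \tfrac{H_p(Y) - I_p(\{X_1,X_2\};Y) - 1}{\log|\mathcal{Y}|}$; bounding $H_p(Y)$ by $\log|\mathcal{Y}|$ collapses the right-hand side to $\tfrac{I_p(\{X_1,X_2\};Y)+1}{\log|\mathcal{Y}|}$.

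Neither direction is technically deep --- both are textbook inequalities --- so the main obstacle is bookkeeping and honesty about looseness rather than any real difficulty. In the lower bound one must check that the pointwise inequality $2^{-H(v)}\le\max_y v_y$ and Jensen's inequality point in the same direction (they do, since $t\mapsto 2^{-t}$ is convex). In the upper bound, the step that replaces $H_p(Y)$ by $\log|\mathcal{Y}|$ is exact only for uniform labels and is otherwise what makes the bound loose (consistent with several entries of $\overline{P}_\textrm{acc}(f_M^*)$ in Table~\ref{tab:acc} exceeding $1$); I would flag this explicitly since the theorem is stated purely in terms of $I_p$ and $|\mathcal{Y}|$.
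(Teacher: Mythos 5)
Your overall strategy — Feder--Merhav for the lower bound, Fano for the upper — matches the paper's proof, and the lower-bound argument is correct: the paper writes $P_\textrm{acc}(f_M^*)=\mathbb{E}[2^{-H_\infty}]$ and applies Jensen before replacing $H_\infty$ by $H$, whereas you apply the pointwise inequality $2^{-H(v)}\le\max_y v_y$ first and Jensen second; the two orderings give identical chains of inequalities and the same conclusion $P_\textrm{acc}(f_M^*)\ge 2^{-H_p(Y\mid X_1,X_2)}$.

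The one step you should tighten is the final move in the upper bound. From Fano you correctly reach
\begin{align*}
P_\textrm{acc}(f_M^*) \;\le\; 1-\frac{H_p(Y)-I_p(\{X_1,X_2\};Y)-1}{\log|\mathcal{Y}|} \;=\; \frac{\log|\mathcal{Y}|-H_p(Y)+I_p(\{X_1,X_2\};Y)+1}{\log|\mathcal{Y}|},
\end{align*}
and this right-hand side is \emph{monotone decreasing} in $H_p(Y)$. Replacing $H_p(Y)$ by the larger quantity $\log|\mathcal{Y}|$ therefore makes the expression \emph{smaller}, so it is not a weakening that yields a valid upper bound; for non-uniform $Y$ the stated quantity $\tfrac{I_p+1}{\log|\mathcal{Y}|}$ can in fact be strictly less than the genuine Fano upper bound and would not follow from it. You evidently sense this — you remark that the replacement ``is exact only for uniform labels'' — but you frame it as a source of looseness when it is actually the step that requires an assumption. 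The paper's proof resolves this by explicitly invoking the assumption that $Y$ is uniform (so $H_p(Y)=\log|\mathcal{Y}|$ and the replacement is an equality); your writeup should do the same rather than present it as a bound. With that assumption made explicit, the rest of your argument is exactly the paper's.
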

where we can plug in $R+U_1,U_2+\underline{S} \le I_p(\{X_1,X_2\}; Y) \le R+U_1,U_2+\high$ to obtain lower $\underline{P}_\textrm{acc}(f_M^*)$ and upper $\overline{P}_\textrm{acc}(f_M^*)$ bounds on optimal multimodal performance.
\begin{proof}
We use the bound from~\citet{feder1994relations}, where we define the Bayes' optimal classifier $f_M^*$ is the one where given $x_1,x_2$ outputs $y$ such that $p(Y=y|x_1,x_2)$ is maximized over all $y \in \mathcal{Y}$. The probability that this classifier succeeds is $\max_y p(Y=y|x_1,x_2)$, which is $2^{-H_{\infty} (Y|X_1=x_1,X_2=x_2)) }$ where $-H_{\infty} (Y|X_1,X_2)$ is the min-entropy of the random variable $Y$ conditioned on $X_1,X_2$. Over all inputs $(x_1,x_2)$, the probability of accuracy is 
\begin{align}
    P_\textrm{acc}(f_M^*) &= \mathbb{E}_{x_1,x_2} \left[ 2^{-H_\infty(Y|X_1=x_1,X_2=x_2))} \right] \ge 2^{-\mathbb{E}_{x_1,x_2} \left[ H_\infty(Y|X_1=x_1,X_2=x_2)) \right] } \\
    &\ge 2^{-\mathbb{E}_{x_1,x_2} \left[ H_p(Y|X_1=x_1,X_2=x_2)) \right] } \ge 2^{-H_p(Y|X_1,X_2)} = 2^{I_p(\{X_1,X_2\}; Y)-H(Y)}.
\end{align}
The upper bound is based on Fano's inequality~\citep{fano1968transmission}. Starting with $H_p(Y|X_1,X_2) \le H(P_\textrm{err}) + P_\textrm{err} (\log  |\mathcal{Y}| -1)$ and assuming that $Y$ is uniform over $|\mathcal{Y}|$, we rearrange the inequality to obtain
\begin{align}
    P_\textrm{acc}(f_M^*) \le \frac{H(Y) - H_p(Y|X_1,X_2) + \log 2}{\log |\mathcal{Y}|} = \frac{I_p(\{X_1,X_2\}; Y) + 1}{\log |\mathcal{Y}|}.
\end{align}
\end{proof}
Finally, we summarize estimated multimodal performance as the average between estimated lower and upper bounds on performance: $\hat{P}_M = (\underline{P}_\textrm{acc}(f_M^*) + \overline{P}_\textrm{acc}(f_M^*))/2$.

\begin{table}[t]
\centering
\fontsize{9}{11}\selectfont
\setlength\tabcolsep{1.0pt}
\vspace{-4mm}
\caption{Full list of best unimodal performance $P_\textrm{acc}(f_i)$, best simple fusion $P_\textrm{acc}(f_{M\textrm{simple}})$, and best complex fusion $P_\textrm{acc}(f_{M\textrm{complex}})$ as obtained from the most recent state-of-the-art models. We also include our estimated bounds $(\underline{P}_\textrm{acc}(f_M^*), \overline{P}_\textrm{acc}(f_M^*))$ on optimal multimodal performance.}
\vspace{1mm}
\centering
\footnotesize

\begin{tabular}{l|ccccccc}
\hline \hline
& \textsc{MOSEI} & \textsc{UR-FUNNY} & \textsc{MOSI} \\
\hline
$\overline{P}_\textrm{acc}(f_M^*)$ & $1.07$ & $1.21$ & $1.29$ \\
$P_\textrm{acc}(f_{M\textrm{complex}})$ & $0.88$ \citep{hu-etal-2022-unimse} & $0.77$ \citep{Hasan_Lee_Rahman_Zadeh_Mihalcea_Morency_Hoque_2021} & $0.86$ \citep{hu-etal-2022-unimse} \\
$P_\textrm{acc}(f_{M\textrm{simple}})$ & $0.85$ \citep{rahman-etal-2020-integrating} & $0.76$ \citep{Hasan_Lee_Rahman_Zadeh_Mihalcea_Morency_Hoque_2021} & $0.84$ \citep{rahman-etal-2020-integrating} \\
$P_\textrm{acc}(f_i)$ & $0.82$ \citep{delbrouck-etal-2020-transformer} & $0.74$ \citep{Hasan_Lee_Rahman_Zadeh_Mihalcea_Morency_Hoque_2021} & $0.83$ \citep{10.1145/3394171.3413690} \\
$\underline{P}_\textrm{acc}(f_M^*)$ & $0.52$ & $0.58$ & $0.62$ \\
\hline \hline
\end{tabular}

\vspace{2mm}

\begin{tabular}{l|ccccccc}
\hline \hline
& \textsc{MUStARD} & \textsc{MIMIC} & \textsc{ENRICO} \\
\hline
$\overline{P}_\textrm{acc}(f_M^*)$ & $1.63$ & $1.27$ & $0.88$\\
$P_\textrm{acc}(f_{M\textrm{complex}})$ & $0.79$ \citep{Hasan_Lee_Rahman_Zadeh_Mihalcea_Morency_Hoque_2021} & $0.92$ \citep{liang2021multibench} & $0.51$ \citep{liang2021multibench} \\
$P_\textrm{acc}(f_{M\textrm{simple}})$ & $0.74$ \citep{Pramanick2021MultimodalLU} & $0.92$ \citep{liang2021multibench} & $0.49$ \citep{liang2021multibench} \\
$P_\textrm{acc}(f_i)$ & $0.74$ \citep{Hasan_Lee_Rahman_Zadeh_Mihalcea_Morency_Hoque_2021} & $0.92$ \citep{liang2021multibench} & $0.47$ \citep{liang2021multibench} \\
$\underline{P}_\textrm{acc}(f_M^*)$ & $0.78$ & $0.76$ & $0.48$ \\
\hline \hline
\end{tabular}

\vspace{-2mm}
\label{tab:acc_app}
\end{table}

\textbf{Unimodal and multimodal performance}: Table~\ref{tab:acc_app} summarizes all final performance results for each dataset, spanning unimodal models and simple or complex multimodal fusion paradigms, where each type of model is represented by the most recent state-of-the-art method found in the literature.

\section{Self-supervised multimodal learning via disagreement}
\label{app:lower_exp}

\begin{wrapfigure}{R}{0.6\textwidth}
    \vspace{-4mm}
    \begin{minipage}{0.6\textwidth}
    \centering
    \includegraphics[width=\linewidth]{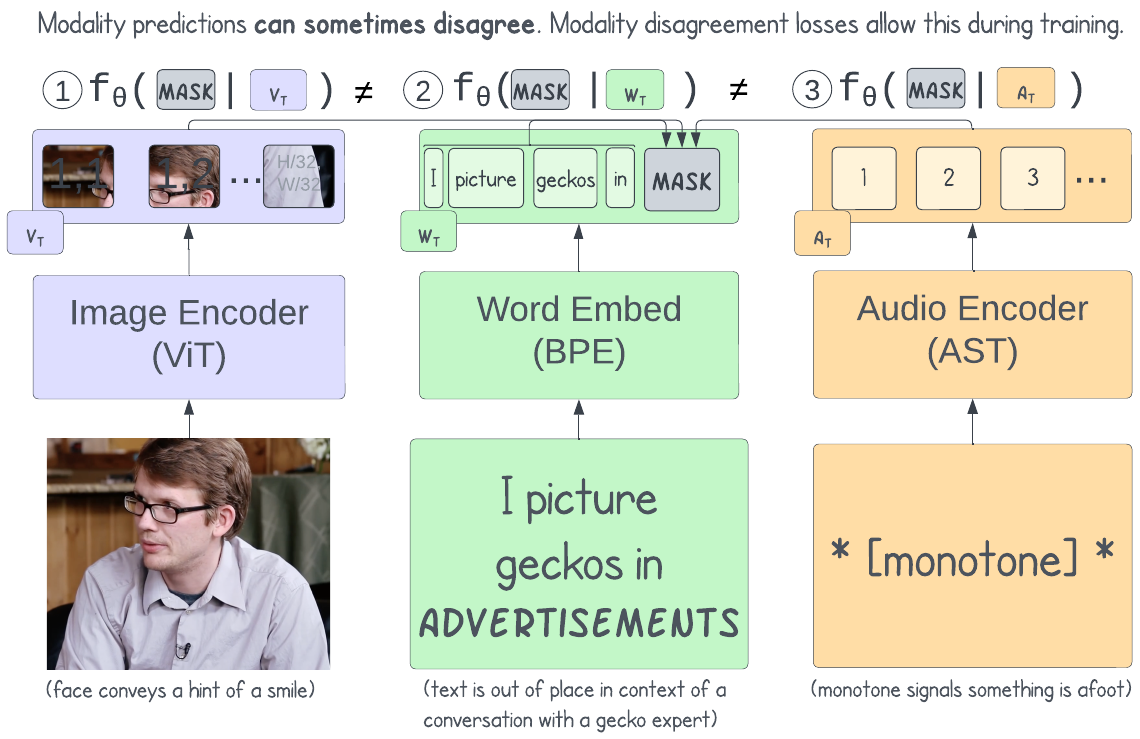}
    \vspace{-6mm}
    \caption{Masked predictions do not always agree across modalities, as shown in this example from the Social-IQ dataset. Adding a slack term enabling pre-training with modality disagreement yields strong performance improvement over baselines.}
    \label{fig:masking}
    \end{minipage}
\vspace{-2mm}
\end{wrapfigure}

Finally, we highlight an application of our analysis towards self-supervised pre-training, which is generally performed by encouraging agreement as a pre-training signal on large-scale unlabeled data~\citep{radford2021learning,singh2022flava} before supervised fine-tuning~\citep{oord2018representation}.
However, our results suggest that there are regimes where disagreement can lead to synergy that may otherwise be ignored when only training for agreement. We therefore design a new family of self-supervised learning objectives that capture \textit{disagreement} on unlabeled multimodal data.

\subsection{Method}

We build upon masked prediction that is popular in self-supervised pre-training: given multimodal data of the form $(x_1,x_2) \sim p(x_1,x_2)$ (e.g., $x_1=$ caption and $x_2=$ image), first mask out some words ($x_1'$) before using the remaining words ($x_1 \textbackslash x_1'$) to predict the masked words via learning $f_\theta(x_1'|x_1 \textbackslash x_1')$, as well as the image $x_2$ to predict the masked words via learning $f_\theta(x_1'|x_2)$~\citep{singh2022flava,zellers2022merlot}. In other words, maximizing agreement between $f_\theta(x_1'|x_1 \textbackslash x_1')$ and $f_\theta(x_1'|x_2)$ in predicting $x_1'$:
\begin{align}
    \mathcal{L}_\textrm{agree} = d(f_\theta(x_1'|x_1\textbackslash x_1'), x_1') + d(f_\theta(x_1'|x_2), x_1')
\end{align}
for a distance $d$ such as cross-entropy loss for discrete word tokens. To account for disagreement, \textit{we allow predictions on the masked tokens $x_1'$ from two different modalities $i,j$ to disagree by a slack variable $\lambda_{ij}$}. We modify the objective such that each term only incurs a loss penalty if each distance $d(x,y)$ is larger than $\lambda$ as measured by a margin distance $d_\lambda(x,y) = \max (0, d(x,y) - \lambda)$:
\begin{align}
    \mathcal{L}_\textrm{disagree} = \mathcal{L}_\textrm{agree} + \sum_{1 \leq i < j \leq 2} d_{\lambda_{ij}} (f_\theta(x_1'|x_i), f_\theta(x_1'|x_j))
    \label{eqn:disagreement_loss}
\end{align}
These $\lambda$ terms are hyperparameters, quantifying the amount of disagreement we tolerate between each pair of modalities during cross-modal masked pretraining ($\lambda=0$ recovers full agreement). We show this visually in Figure~\ref{fig:masking} by applying it to masked pre-training on text, video, and audio using MERLOT Reserve~\citep{zellers2022merlot}, and also apply it to FLAVA~\citep{singh2022flava} for images and text experiments (see extensions to $3$ modalities and details in Appendix~\ref{app:lower_exp}).

\subsection{Training details}

We continuously pretrain and then finetune a pretrained MERLOT Reserve Base model on the datasets with a batch size of 8. The continuous pretraining procedure is similar to Contrastive Span Training, with the difference that we add extra loss terms that correspond to modality disagreement. The pretraining procedure of MERLOT Reserve minimizes a sum of 3 component losses,
\begin{align}
    \mathcal{L}=\mathcal{L_{\text{text}}} + \mathcal{L_{\text{audio}}} + \mathcal{L_{\text{frame}}}
\end{align}
where each of the component losses is a contrastive objective. Each of the objectives aims to match an independent encoding of masked tokens of the corresponding modality with the output of a Joint Encoder, which takes as input the other modalities and, possibly, unmasked tokens of the target modality.

We modify the procedure by adding disagreement losses between modalities to the objective. This is done by replacing the tokens of a modality with padding tokens before passing them to the Joint Encoder, and then calculating the disagreement between representations obtained when replacing different modalities. For example, $ \mathcal{L}_{\text{frame}} $ uses a representation of video frames found by passing audio and text into the Joint Encoder. Excluding one of the modalities and passing the other one into the Encoder separately leads to two different representations, $ \bold{\hat{f}_{t}} $ for prediction using only text and $ \bold{\hat{f}_{a}} $ for prediction using only audio. The distance between the representations is added to the loss. Thus, the modified component loss is
\begin{align}
    \mathcal{L}_{\text{disagreement, frame}} = \mathcal{L}_{\text{frame}} + d_{\lambda_{\text{text, audio}}}( \bold{\hat{f}_{t}},  \bold{\hat{f}_{a}}  )
\end{align}
where $ d_{\lambda_{\text{text, audio}}}(\bold{x}, \bold{y})=\max(0, d(\bold{x}, \bold{y}) - \lambda_{\text{text, audio}}) $, and $ d(\bold{x}, \bold{y}) $ is the cosine difference:
\begin{align}
    d(\bold{x}, \bold{y})=1 - \frac{\bold{x}\cdot \bold{y}}{|\bold{x}||\bold{y}|}
\end{align}
Similarly, we modify the other component losses by removing one modality at a time, and obtain the new training objective
\begin{align}
    \mathcal{L}_{\text{disagreement}}=\mathcal{L_{\text{disagreement, text}}} + \mathcal{L_{\text{disagreement, audio}}} + \mathcal{L_{\text{disagreement, frame}}}
\end{align}

During pretraining, we train the model for 960 steps with a learning rate of 0.0001, and no warm-up steps, and use the defaults for other hyperparameters. For every dataset, we fix two of $ \{\lambda_{\text{text, audio}}, \lambda_{\text{vision, audio}}, \lambda_{\text{text, vision}}\} $ to be $ +\infty $ and change the third one, which characterizes the most meaningful disagreement. This allows us to reduce the number of masked modalities required from 3 to 2 and thus reduce the memory overhead of the method. For \textsc{Social-IQ}, we set $\lambda_{\text{text, vision}}$ to be 0. For \textsc{UR-FUNNY}, we set $ \lambda_{\text{text, vision}} $ to be 0.5. For \textsc{MUStARD}, we set $ \lambda_{\text{vision, audio}} $ to be 0. All training is done on TPU v2-8 accelerators, with continuous pretraining taking 30 minutes and using up to $9$GB of memory.

\subsection{Setup}

We choose four settings with natural disagreement: (1) \textsc{UR-FUNNY}: humor detection from $16,000$ TED talk videos~\citep{hasan2019ur}, (2) \textsc{MUsTARD}: $690$ videos for sarcasm detection from TV shows~\citep{castro2019towards}, (3) \textsc{Social IQ}: $1,250$ multi-party videos testing social intelligence knowledge~\citep{zadeh2019social}, (4) \textsc{Cartoon}: matching $704$ cartoon images and captions~\citep{hessel2022androids}, and (5) \textsc{TVQA}: a large-scale video QA dataset based on 6 popular TV shows (Friends, The Big Bang Theory, How I Met Your Mother, House M.D., Grey's Anatomy, Castle) with $122,000$ QA pairs from $21,800$ video clips~\citep{lei2018tvqa}.

\begin{table*}[t]
\centering
\fontsize{9}{11}\selectfont
\setlength\tabcolsep{3pt}
\vspace{-4mm}
\caption{Allowing for disagreement during self-supervised masked pre-training yields performance improvements on these datasets. Over $10$ runs, improvements that are statistically significant are shown in bold ($p<0.05$).}
\begin{tabular}{l|cccccccccccc}
\hline \hline
& \textsc{Social-IQ} & \textsc{UR-FUNNY} & \textsc{MUStARD} & \textsc{Cartoon} & \textsc{TVQA} \\
\hline
FLAVA, MERLOT Reserve & $70.6\pm 0.6$ & $80.0\pm 0.7$ & $77.4\pm 0.8$ & $38.6\pm0.6$ & $82.0$\\
+ disagreement & $\mathbf{71.1\pm0.5}$ & $\mathbf{80.7\pm 0.5}$ & $\mathbf{78.1\pm1.1}$ & $39.3\pm0.5$ & $83.0$\\
\hline \hline
\end{tabular}
\label{tab:masking}
\vspace{-2mm}
\end{table*}

\subsection{Results}

From Table~\ref{tab:masking}, allowing for disagreement yields improvements on these datasets, with those on \textsc{Social IQ}, \textsc{UR-FUNNY}, \textsc{MUStARD} being statistically significant (p-value $<0.05$ over $10$ runs). By analyzing the value of $\lambda$ resulting in the best validation performance through hyperparameter search, we can analyze when disagreement helps for which datasets, datapoints, and modalities. On a dataset level, we find that disagreement helps for video/audio and video/text, improving accuracy by up to 0.6\% but hurts for text/audio, decreasing the accuracy by up to 1\%. This is in line with intuition, where spoken text is transcribed directly from audio for these monologue and dialog videos, but video can have vastly different information. In addition,
we find more disagreement between text/audio for \textsc{Social IQ}, which we believe is because it comes from natural videos while the others are scripted TV shows with more agreement between speakers and transcripts. Finally, we also scaled to \textsc{TVQA} by incorporating the disagreement objective on top of MERLOT Reserve~\citep{zellers2022merlot}. Unfortunately, running multiple times was not possible due to the large size of the dataset, but our preliminary results show that adding disagreement improves performance from $82\%$ to $83\%$ as compared to the original agreement-based Contrastive Span pretraining~\citep{zellers2022merlot}. We will continue to investigate disagreement-based SSL in large-scale settings in future work.

\subsection{Dataset level analysis}

\begin{figure*}[t]
\vspace{-2mm}
\centering
\includegraphics[width=0.5\linewidth]{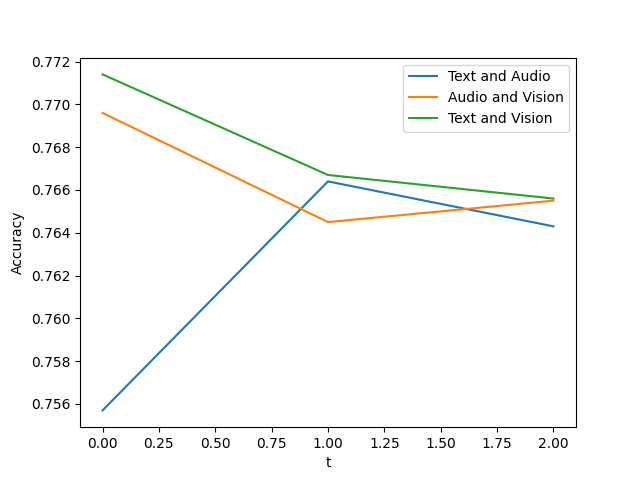}
\caption{Impact of modality disagreement on model performance. Lower $t$ means we train with higher disagreement between modalities: we find that disagreement between text and vision, as well as audio and vision, are more helpful during self-supervised masking with performance improvements, whereas disagreement between text and audio is less suitable and can even hurt performance.}
\label{fig:mustard_thresholds}
\vspace{-4mm}
\end{figure*}

We now study the impact of pairwise modality disagreement on the entire dataset on model performance by fixing two modalities $M_1, M_2$ and a threshold $t$, and setting the modality pair-specific disagreement slack terms $\lambda$ according to the rule $$\lambda_{a, b}=\begin{cases}
    t, & a=M_1, b=M_2\\ +\infty, & \text{else}
\end{cases}$$
This allows us to isolate $d_{\lambda_{M_1, M_2}}$ while ensuring that the other disagreement loss terms are 0. By decreasing $t$, we encourage higher disagreement between the target modalities. In Figure~\ref{fig:mustard_thresholds}, we plot the relationship between model accuracy and $t$ for the MUStARD dataset to visualize how pairwise disagreement between modalities impacts model performance.

\subsection{Datapoint level analysis}

\begin{figure*}[tb]
\vspace{-4mm}
\centering
\includegraphics[width=\linewidth]{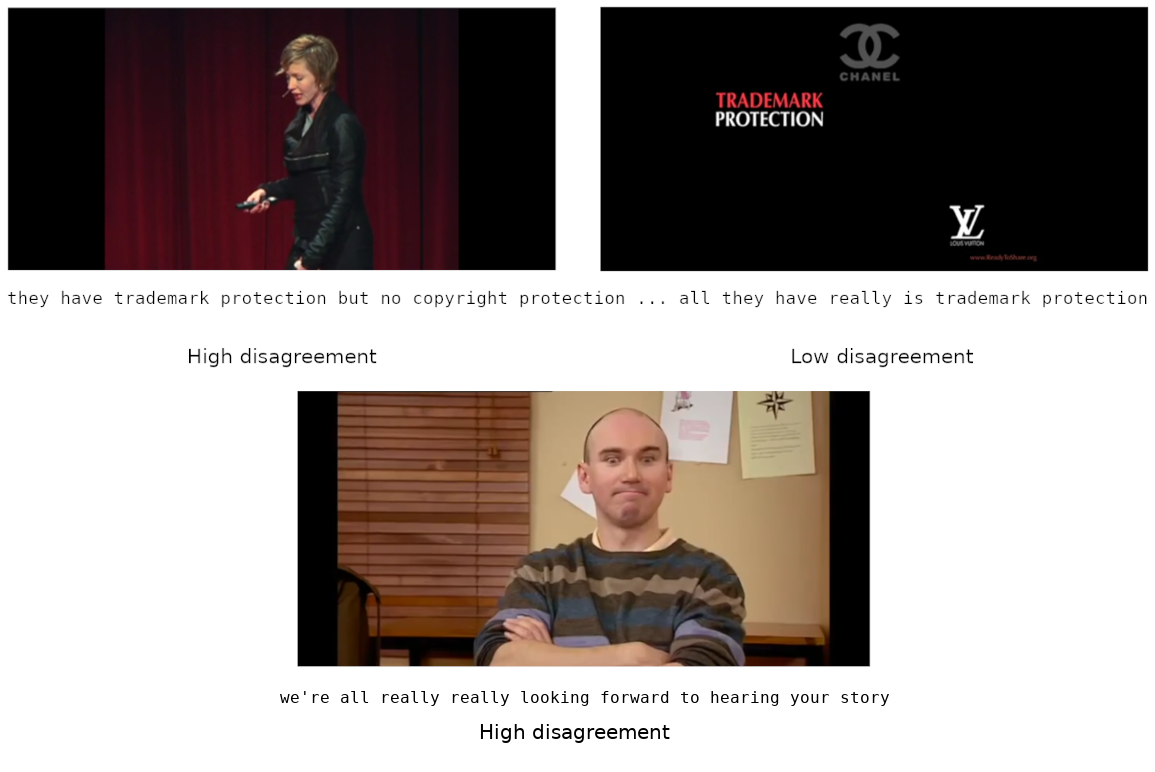}
\caption{Examples of disagreement due to uniqueness (up) and synergy (down)}
\label{fig:datapoint_examples}
\vspace{-4mm}
\end{figure*}

Finally, we visualize the individual datapoints where modeling disagreement helps in model predictions.
After continuously pretraining the model, we fix a pair of modalities (text and video) and find the disagreement in these modalities for each datapoint. We show examples of disagreement due to uniqueness and synergy in Figure~\ref{fig:datapoint_examples}. The first example is from \textsc{UR-FUNNY} dataset: the moments when the camera jumps from the speaker to their presentation slides are followed by an increase in agreement since the video aligns better with the speech. In the second example on \textsc{MUStARD}, we observe disagreement between vision and text when the speaker's face expresses the sarcastic nature of a phrase. This changes the meaning of the phrase, which cannot be inferred from text only, and leads to synergy.


\end{document}